\onecolumn \linespread{1.2}
\DeclareMathOperator*{\argmin}{\mathrm{argmin}}
\renewcommand{\u}{\bm{u}}
\newcommand{\ustar}{\u^*{}}
\newcommand{\x}{\bm{x}}
\newcommand{\g}{\bm{g}}
\newcommand{\G}{\bm{G}}
\newcommand{\X}{{\bm{X}}}
\renewcommand{\S}{\bm{S}}
\newcommand{\e}{\bm{e}}
\newcommand{\y}{\bm{y}}
\newcommand{\w}{\bm{w}}
\newcommand{\U}{{\bm{U}}}
\newcommand{\calU}{{\mathcal U}}
\newcommand{\Qopt}{\Q_{*,00}}
\newcommand{\R}{{\bm{R}}}
\newcommand{\M}{\bm{M}}
\newcommand{\I}{\bm{I}}
\newcommand{\Y}{\bm{Y}}
\newcommand{\Z}{{\bm{{Z}}}}
\newcommand{\D}{{\bm{D}}}
\newcommand{\Q}{{\bm{Q}}}
\newcommand{\F}{{\bm{F}}}
\newcommand{\iidsim}{\stackrel{\mathrm{iid}}{\thicksim }}
\newcommand{\SE}{\mathrm{SD}}  
\newcommand{\A}{\bm{A}}
\newcommand{\gradU}{\mathrm{GradU}} 
\newcommand{\gradUj}{\mathrm{Grad}{\u^j}} 
\newcommand{\svdeq}{\overset{\mathrm{SVD}}=} 
\newcommand{\qreq}{\overset{\mathrm{QR}}=} 
\newcommand{\bi}{\begin{itemize}} \newcommand{\ei}{\end{itemize}}
\newcommand{\ben}{\begin{enumerate}} \newcommand{\een}{\end{enumerate}}
\newcommand{\vsm}{\vspace{-0.1in}}
\renewcommand{\b}{\bm{b}}
\renewcommand{\d}{\bm{d}}
\newcommand{\B}{\bm{B}}
\newcommand{\V}{{\bm{V}}}
\newcommand{\W}{{\bm{W}}}
\newcommand{\z}{\bm{z}}
\newcommand{\Bstar}{{\B^\star}}   
\newcommand{\bstar}{\b^\star}             
\newcommand{\Ustar}{\U^\star{}}
\newcommand{\Xstar}{\X^\star}
\newcommand{\xstar}{\x^\star}
\newcommand{\Vstar}{\V^\star{}}
\newcommand{\sigmin}{{\sigma_{\min}^*}}
\newcommand{\sigmax}{{\sigma_{\max}^*}}
\newcommand{\sigminTwo}{{\sigma_{\min}^{*2}}}
\newcommand{\sigmaxTwo}{{\sigma_{\max}^{*2}}}
\newcommand{\sigmaxFour}{{\sigma_{\max}^{*4}}}
\renewcommand{\P}{\bm{P}_{*,\perp}}
\newcommand{\E}{\mathbb{E}}
\renewcommand{\V}{\bm{V}}
\renewcommand{\v}{\bm{v}}
\newcommand{\bea}{\begin{eqnarray}}
\newcommand{\eea}{\end{eqnarray}}
\newtheorem{theorem}{Theorem}[section]
\newtheorem{lemma}[theorem]{Lemma}
\newtheorem{claim}[theorem]{Claim}
\newtheorem{corollary}[theorem]{Corollary}
\newtheorem{remark}[theorem]{Remark}
\newtheorem{assu}[theorem]{Assumption}
\renewcommand\thetheorem{\arabic{section}.\arabic{theorem}}
\newcommand{\SEF}{\SE_F}
\newcommand{\eps}{\epsilon}
\newcommand{\sigmamin}{\sigma_{\min}}
\newcommand{\jk}{{jk}}
\newcommand{\deltajk}{\xi_\jk} 
\renewcommand{\b}{\bm{b}}
\renewcommand{\forall}{\text{ for all }}
\renewcommand{\Re}{\mathbb{R}}
\newcommand{\lV}{\lVert} \newcommand{\rV}{\rVert}
\newcommand{\m}{\bm{m}}
\newcommand{\deltat}{\delta^{(t)}}
\newcommand{\deltatold}{\delta^{(t-1)}}
\newcommand{\deltazero}{\delta^{(0)}}
\title{Efficient Federated Low Rank Matrix Completion}
\author{{Ahmed Ali Abbasi and Namrata Vaswani}
\thanks{The authors are at the Department of Electrical and Computer Engineering, Iowa State University. Email: namrata@iastate.edu.
A part of this work (only simulation experiments) appeared in Allerton 2023 \cite{allrtnPpr}.}}
\begin{document}

\maketitle

\vsm \vsm \vsm \vsm 

\begin{abstract}
In this work, we develop and analyze a Gradient Descent (GD) based solution, called Alternating GD and Minimization (AltGDmin), for efficiently solving the low rank matrix completion (LRMC) in a federated setting. LRMC involves recovering an $n \times q$ rank-$r$ matrix $\Xstar$ from a subset of its entries when  $r \ll \min(n,q)$. Our  theoretical guarantees (iteration and sample complexity bounds) imply that AltGDmin is the most communication-efficient solution in a federated setting, is one of the fastest, and  has the second best sample complexity among all iterative solutions to LRMC. In addition, we also prove two important corollaries. (a) We provide a guarantee for AltGDmin for solving the noisy LRMC problem. (b)  We show how our lemmas can be used to provide an improved sample complexity guarantee for AltMin, which is the fastest centralized solution.
\end{abstract}

\IEEEpeerreviewmaketitle

\section{Introduction}
In this work, we develop and analyze a fast and communication-efficient solution for efficiently solving the low rank matrix completion (LRMC) problem in a federated setting. LRMC involves recovering/learning a low rank (LR) matrix from a small number of its observed entries. Our proposed algorithm, and the sample and iteration complexity guarantees that we prove for it, are also useful in centralized settings.
Federation means that (i) different subsets of the data are acquired at different distributed nodes; and (ii) all nodes can only communicate with the central node or ``center''.  Communication-efficiency is a key concern with all distributed algorithms, including federated ones. Privacy of the data is another concern in a federated setting. In this work, ``privacy'' means the following: the nodes' raw data (here observed matrix entries) cannot be shared with the center; and the algorithm should be such that the center cannot recover any of the columns of the LR matrix.


\subsection{Problem Setup and Notation}
LRMC involves recovering a rank-$r$ matrix $\Xstar \in \Re^{n \times q}$, where $r \ll \min(n,q)$, from a subset of its entries. Entry $j$ of column $k$, denoted $\Xstar_\jk$, is observed, independently of all other observations, with probability $p$. Let $\xi_\jk \iidsim \mathrm{Bernoulli}(p)$ for $j \in [n], k \in [q]$. Then, the set of observed entries, denoted by $\Omega$, is
\[
\Omega := \{(j,k): \xi_\jk = 1\}
\]
By setting the unobserved entries to zero, the observed data matrix $\Y \in \Re^{n \times q}$ can be defined as
\begin{equation}
\Y_\jk := \begin{cases}
\Xstar_\jk &   \text{ if } (j,k) \in \Omega, \\ 
0 & \text{ otherwise}.
\end{cases}
\ \ \text{or, equivalently,} \ \
\Y := \Xstar_\Omega
\label{eq:msrmnts}
\end{equation}
Here and below, $\M_{\Omega}$ refers to the matrix $\M$ with all entries whose indices are not in the set $\Omega$ are zeroed out; while the rest of the entries remain unchanged.

\subsubsection{Notation}
For any matrix $\M$, $\m_k$ denotes its $k$-th column while $\m^j$ denotes its $j$-th row transposed (so it is a column vector). $(\cdot)^\intercal$ denotes the matrix/vector transpose. We use $\I$ to denote the identity matrix and $\e_k$ to denote its $k$-th column (this is a 1-0 vector with 1 at the $k$-th location and zero everywhere else).  $\| \cdot \|$ denotes either the vector $\ell_2$ or the induced $\ell_2$ norm  of a matrix ($\|\M\|:= \max_{\z:\|\z\|=1}\|\M \z \| = \sigma_{\max}(\M)$),  while $\| \cdot \|_F$ denotes the Frobenius norm of a matrix. We use $vec(\M)$ to denote a vectorized version of matrix $\M$; thus $\|\M\|_F = \|vec(\M)\|$.
We use the standard notation $[q]:=\{1,2,\dots, q\}$.

For a tall matrix $\M$, $\M^\dagger \triangleq (\M^\intercal\M)^{-1} \M^\intercal$ denotes the Moore-Penrose pseudo-inverse. Also, $\mathrm{QR}(\M)$  maps the matrix $\M \in \Re^{n \times r}$ to $\Q \in \Re^{n \times r}$ such that $\M = \Q\R$ is the QR decomposition of $\M$ (columns of $\Q$ are orthornormal).
For matrices  $\U_1,\U_2$ with orthonormal columns, the two commonly used measures of subspace distance (SD) between their column spans are
$
\SE_2(\U_1,\U_2) := \| (\I - \U_1\U_1^\intercal)\U_2 \|
$
and
$
\SE_F(\U_1,\U_2) := \| (\I - \U_1\U_1^\intercal)\U_2 \|_F.
$
We use the Frobenius norm SD in most of our analysis in this paper because it allows us to get a better sample complexity guarantee. 

We use $\M = \mathrm{diag}( m_k, k \in [q])$ to denote a diagonal $q \times q$ matrix with scalar entries $m_k$ and $\M = \mathrm{blockdiag}( \M_k,  k \in [q])$ to denote a block-diagonal matrix with block diagonal entries $\M_k$.
We use $\Omega_k := \{j \in [n] \mid  \xi_\jk = 1\}$ to denote the set of indices of the observed entries in column $k$ and we define a diagonal 1-0 matrix  $\S_k \in \Re^{n \times n}$ as
\[
\S_k:= \mathrm{diag}(\xi_\jk, \ j \in [n])
\]
Thus, $\y_k = \S_k \xstar_k$ for all $k \in [q]$.
{\em We reuse the letters $c, C$ to denote different numerical constants in each use with the convention that $c < 1$ and $C \ge 1$.}

Let $\Xstar \svdeq \Ustar (\bm{\Sigma^*}) {\V^*}:= \Ustar \Bstar$, where $\Ustar \in \Re^{n \times r}$ and has orthonormal columns and $\Vstar \in \Re^{r \times q}$ with orthonormal rows. We use $\kappa= \sigmax/\sigmin$ to denote the condition number of the diagonal $r \times r$ matrix $\bf\Sigma^*$. Here $\sigmax,\sigmin$ to denote its largest, smallest singular values.  Also, we let $\Bstar:=\bm{\Sigma^*} {\V^*}$ so that $\Xstar = \Ustar \Bstar$.



%



\subsubsection{Assumption}
As in all past works on LRMC, e.g. \cite{matcomp_candes,lowrank_altmin}, we need the following assumption on the singular vectors of $\Xstar$; this is a way to guarantee that the rows and columns of $\Xstar$ are dense (non-sparse). This and the  LR assumption help ensure that one can correctly interpolate (fill in) the missing entries even with observing only a few entries of each row or column.

\begin{assu}[$\mu$-incoherence of singular vectors of $\Xstar$]  
Assume  row norm bounds on $\Ustar$: $\max_{j \in [n]}\| \ustar{}^j\| \leq \mu \sqrt{r/n}$, and 
column norm  bounds on $\V^*$: $\max_{k \in [q]} \|  \v^*_k \| \leq \mu \sqrt{r/q}$. 
Since $\Bstar= \bm{\Sigma^*} {\V^*}$, this implies that $\| \bstar_k \| \le \mu \sqrt{r/q} \sigmax$.
\label{incoh}
\end{assu}

We should point out that our proof for AltGDmin correctness only uses the incoherence of $\Ustar$ and of $\Bstar$. This is a slightly weaker requirement than requiring incoherence of $\Ustar,\Vstar$.

\subsubsection{Federation} We assume that there are a total of $\gamma$ nodes, with $\gamma \le q$.
Each node has access to a different subset of the columns of the observed data matrix $\Y$. We use $\S_\ell$ to denote the subset of columns of $\Y$ available at node $\ell$. The sets $\S_\ell$ form a partition of $[q]$, i.e., they are mutually disjoint and $\cup_{\ell=1}^\gamma \S_\ell = [q]$. To keep notation simple, we assume $q$ is a multiple of $\gamma$ and $|\S_\ell| = q/\gamma$.
All nodes can only communicate with a central node or ``center''. In a federated setting, the two desirable properties are communication-efficiency and ``privacy''. {\em In this work, ``privacy'' means the following. The nodes' raw data cannot be shared with the center and the center should not be able reconstruct the matrix $\Xstar$.} 

Our discussion of complexities assumes $\gamma \ll q$ and treats $\gamma$ as a numerical constant. Thus order $|\Omega|/\gamma$ is equal to order $|\Omega|$ with $|\Omega|\ge (n+q) r$ (the number of samples needs to be larger than the number of unknowns in rank $r$ matrix).

\subsection{Related Work} \label{subsec:relatedWork}
\begin{table*}[t!]
	\begin{center}
		{
			\begin{tabular}{llll} \toprule
				Algorithm &  Computation Comp.  	& Communic. Comp.  & Sample Comp.     \\
				\hline \midrule
				AltGDMin (Priv) (Proposed) & $\kappa^2 \frac{|\Omega|}{\gamma}r^2 \log (\frac{1}{\epsilon})$   & $\kappa^2 nr \log (\frac{1}{\epsilon})$  &$\kappa^6 \mu^2 nr^2\log n \log(\frac{1}{\eps})$    \\
				&&& \\
				FactGD (Priv)   \cite{rpca_gd,lafferty_lrmc} & $\kappa \mu  \frac{|\Omega|}{\gamma}r^2 \log (\frac{1}{\epsilon})$    & $\kappa \mu nr^2 \log (\frac{1}{\epsilon})$   &$\kappa^4 \mu^2 nr^2\log n$      \\
				&&& \\
				AltMin (Priv) \cite{lowrank_altmin}  &  $ \frac{|\Omega|}{\gamma}r \log^2(\frac{1}{\epsilon})$   &  $nr \log^2(\frac{1}{\eps})$  & $\kappa^4 \mu^2 nr^{4.5}\log n\log(\frac{1}{\eps})$   \\
				&&& \\
				AltMin (Not-Priv) \cite{lowrank_altmin}  & $\frac{|\Omega|}{\gamma}r^2  \log (\frac{1}{\epsilon})$ & $ \frac{|\Omega|}{\gamma}  \log (\frac{1}{\epsilon})$  & $\kappa^4 \mu^2 nr^{4.5}\log n \log(\frac{1}{\eps})$   \\
				&&& \\
				Smooth-AltMin (Not-Priv) \cite{hardt2014understanding}  &  $\frac{|\Omega|}{\gamma}r^2 \log n \log(\frac{1}{\eps})$   &   $ \frac{|\Omega|}{\gamma}  \log n \log(\frac{1}{\eps})$ & $\kappa^2 \mu^2 nr^{3}\log n \log(\frac{1}{\eps})$   \\
				&&& \\
				ProjGD (Not-Priv) \cite{fastmc,rmc_gd} & $\mu^4 \frac{|\Omega|}{\gamma}r \log^2 (\frac{1}{\eps})$    & $\mu^4  \frac{|\Omega|}{\gamma} \log^2(\frac{n}{\epsilon}) $ & $\mu^4 nr^2 \log^2 n \log^2(\frac{1}{\eps})$    \\
				&&& \\
				\hline
			\end{tabular}
		}
		\vspace{-0.05in}
		\caption{\small\sl
			The table assumes $n \approx q$, $\gamma$ is a numerical constant, $\max(\log(1/\eps), r) = \log(1/\eps)$, and $|\Omega| \ge nr$ (necessary).  Here  $\text{Communic Comp} = T \cdot \max(\text{Communic.(node), Communic.(center)}) $. 
			Similarly for the computation cost. 
		}
		\label{tab1}
	\end{center}
	\vspace{-0.15in}
\end{table*}
Starting with the seminal work of \cite{matcomp_first,matcomp_candes} which introduced a nuclear norm based convex relaxation, the LRMC problem has been extensively studied in the last decade and a half \cite{matcomp_first,matcomp_candes,optspace,lowrank_altmin,mc_luo,rmc_gd,pr_mc_reuse_meas, lowrank_altmin_no_kappa,lafferty_lrmc,rpca_gd}. Two classes of algorithms feature prominently in this literature - solutions to convex relaxations and direct iterative algorithms. The former \cite{matcomp_first,matcomp_candes} are slow: the required number of iterations for $\eps$ accuracy (iteration complexity) grows as $1/\sqrt{\epsilon}$ \cite{lowrank_altmin}.
The first provably accurate iterative solution was the  Alternating Minimization (AltMin) algorithm with a spectral initialization \cite{lowrank_altmin,optspace}. AltMin was shown to converge geometrically (iteration complexity order $\log(1/\eps)$) with a sample complexity of $\kappa^4 \mu^2 nr^{4.5} \log(1/\epsilon)$ in \cite{lowrank_altmin}. Subsequent work  \cite{hardt2014understanding} considered a modified version of AltMin and improved its sample complexity to order $\kappa^2 \mu^2 nr^{2.5}\log(1/\epsilon)$. AltMin factorizes the unknown LR matrix $\X$ as $\X = \U\B$. After initializing $\U$, it alternately updates $\B$ and $\U$ using minimization for one, keeping the other fixed.
Later works proposed two gradient descent (GD) based algorithms -  Projected GD (ProjGD) \cite{fastmc,rmc_gd} and Factorized GD (FactGD) \cite{rpca_gd,lafferty_lrmc} - that reduced the sample complexity dependence on $r$ to $r^2$. ProjGD involves  GD, followed by projection onto the space of rank $r$ matrices (this is done by singular value decomposition (SVD)) after each GD iteration.
FactGD factorizes the unknown LR matrix $\X$ as $\X = \U\B$, where $\U, \B$ have $r$ columns and rows respectively, and updates both by GD as follows. After initialization, at each iteration, it updates $\U$ and $\B$ by one GD step for the cost function  $f(\U,\B) + c_1 \|\U^\intercal \U - \B\B^\intercal  \|_F^2$, followed by projecting each of them onto the set of matrices with incoherent rows and columns respectively. Denote the former (incoherent rows) set by $\calU$. Initialization involves SVD on $\Y$ followed by a similar projection onto $\calU$.
The second term of the FactGD cost function is a norm balancing term that ensures the norm of $\U$ does not keep increasing with iterations while that of $\B$ decreases (or vice versa).
%
The best ProjGD guarantee needs a sample complexity of $\mu^4 nr^2 \log^2 n \log^2 1/\epsilon$ while FactGD reduces this even further to $\kappa^4 \mu^2 nr^2 \log^2 n $.  ProjGD converges geometrically with a constant GD step size, while FactGD needs the GD step size to be of order $1/r$. Consequently, its iteration complexity is worse than that of ProjGD or AltMin by a factor of $r$. However, in terms of per-iteration time cost, ProjGD is significantly slower.  Consequently, overall in terms of total cost, it is slower. Numerically, it is much slower than all other methods, because each ProjGD iteration involves a rank $r$ SVD step (constants per iteration are much larger); see Fig 1b,1d of \cite{allrtnPpr}.
Some of this discussion treats $\kappa,\mu$ as numerical constants. 
%

To our best knowledge, {\em there is no existing work on provably accurate federated LRMC}. Federation requires a communication-efficient and private solution, with  ``private'' as defined above.

In tangentially related work \cite{dist_lrmc,dist_lrmc_2}, distributed-computing solutions to LRMC are studied. These do not consider the federated setting, instead these assume that all data is observed centrally and then is distributed to different nodes to parallelize the computation; and these develop an approximate solution to the convex relaxation which is known to be very slow.
Fully decentralized LRMC has been studied in \cite{qling_etal, qling_etal2,mardani2013decentralized}. These methods are much slower than GD.
Other tangentially related works include \cite{fed_lrmc_1,fed_lrmc_2,fed_lrmc_3_byz}, all these consider differential privacy or attack resilience.  


The Alternating GD and Minimization (AltGDmin) algorithm was introduced in our past work \cite{lrpr_gdmin,lrpr_gdmin_2} as a fast solution to the LR column-wise compressive sensing (LRCS) problem. LRCS involves recovering $\Xstar$ from $\y_k:=\A_k \xstar_k$ when $\A_k$'s are $m \times n$ i.i.d. random Gaussian matrices (each entry of each $\A_k$ is i.i.d. standard Gaussian), and the right singular vectors of $\Xstar$ satisfy the incoherence assumption.
The overall idea of AltGDmin is as follows \cite{lrpr_gdmin}. Suppose the goal is to solve $\min_{\Z} f(\Z)$.
We split the unknown variable $\Z$ into two parts, $\Z= \{\Z_a, \Z_b\}$.  After initializing $\Z_a$, we alternatively update $\Z_b,\Z_a$ using minimization for $\Z_b$ (keeping $\Z_a$ fixed at its previous value) and  GD for $\Z_a$  (keeping $\Z_b$ fixed at its previous value). We choose the split-up so that $\Z_b$ is the set of variables for which the minimization can be ``decoupled'', i.e., subsets of $\Z_b$ are functions of only a subsets of the data matrix $\Y$.
AltGDmin is an efficient solution only for problems where this is possible to do. As will be clear later, this is what helps ensure both the communication, and computation, efficiency.

A small part of this work -- only the algorithm and simulation experiments -- appeared in Allerton 2023 \cite{allrtnPpr}. Some of the time and communication complexity discussion provided there was incorrect. 




\subsection{Contributions and Novelty}
We develop and analyze the AltGDmin algorithm for solving the LRMC problem. The design of AltGDmin is motivated by a federated setting.
To our best knowledge, our work provides the first theoretical guarantees for solving LRMC in a federated setting; see Theorem  \ref{thrm}. The sample and iteration complexity bounds that we prove are applicable in centralized settings as well. In addition, we also prove two important corollaries. (a) We provide a guarantee for AltGDmin for solving the noisy LRMC problem. (b) We also show how our lemmas can be used to provide an improved sample complexity guarantee for AltMin which is the fastest centralized solution for LRMC.

Using our results we can argue that, in a federated setting, AltGDmin is the most communication-efficient solution. It is also one of the two fastest private solutions and has the second smallest sample complexity. Our new corollary for AltMin (and smooth AltMin) proves that it also has the same sample complexity as that of AltGDmin. This discussion treats $\kappa,\mu$ and $\gamma$ (number of nodes) as numerical constants. We discuss these points in more detail in Sec. \ref{discuss} and in Table \ref{tab1}. 


\subsubsection{Proof Novelty} While AltGDmin has been studied for solving the LRCS problem, there are some important differences between the two problems that necessitate new analysis approaches. LRMC samples are row and column-wise local, while LRCS samples are global for each column. Because of this, LRMC requires assuming left and right singular vectors' incoherence and proving the same for each new iterate of $\U$ and $\B$. LRCS needed this only for $\B$. We need a new approach to show the incoherence of $\U$ at each iteration since the update of $\U$ in AltGDmin is different from that in all existing solutions to LRMC.
FactGD for LRMC \cite{rpca_gd} projects  $\U$ (and $\B$) onto the space of row (column) incoherent matrices after each GD step; this automatically ensures incoherence after each update.  AltMin \cite{lowrank_altmin} updates $\U$ by solving an LS problem (and not by GD), so the incoherence proof there is different too. ProjGD \cite{rmc_gd} implemented projected GD for $\X$ itself (does not factorize it).  There are other important proof differences too. We explain these in Sec. \ref{novelty}.

\subsection{Organization}
We develop the AltGDmin algorithm and state and explain our main theoretical guarantee, Theorem \ref{thrm}, for it next in Sec. \ref{sec:algorithm}. We provide a detailed comparison of this guarantee with those from existing works in Sec. \ref{discuss}. This section also summarizes the novel proof ideas and gives a proof outline. Theorem \ref{thrm} is proved in Sec.\ref{thrm_proof_section}. The lemmas used in this proof are proved in Sec. \ref{lemma_proofs}. The corollary for noisy LRMC is provided in Sec. \ref{noisy_lrmc_th}. We use our lemmas to prove an improved guarantee for AltMin and Smooth AltMin in Sec. \ref{altmincorolls}. Numerical experiments that demonstrate our theoretical claims are described in Sec. \ref{sims}. We conclude in Sec. \ref{conclude}.

\section{Alternating GD and Minimization (AltGDMin) Algorithm and Guarantees}\label{sec:algorithm}

\subsection{AltGDmin Algorithm}
The goal is to minimize the following squared loss cost function
\begin{equation}
\min_{\substack{\check\B,\, \check\U  \colon \check\U^\intercal \check\U = \I  }}  f(\check\U,\check\B), \ f(\check\U,\check\B):= \| (\Y - \check\U \check\B)_{\Omega}  \|_F^2 \label{eq:obj}
\end{equation}
We impose the orthornormal columns constraint on $\check\U$ as one way to ensure that the norm  of $\U$  does not keep increasing or decreasing continuously with algorithm iterations, while that of $\B$ decreases or increases.

AltGDmin proceeds as follows. We initialize $\U$ as explained below; this approach is adapted from that in FactGD \cite{rpca_gd}.
After the initialization, different from AltMin \cite{lowrank_altmin}, which used alternating exact minimization for both $\U$ and $\B$, and different from FactGD \cite{rpca_gd}, which used GD for both $\U$ and $\B$,
AltGDmin alternates between exact minimization over $\B$ and a single GD step for $\U$. The GD step is followed by an orthonormalization (QR) step. 
%
The above simple changes help ensure that the AltGDmin iteration complexity is better than that of FactGD and comparable to that of AltMin. 
The use of exact minimization for one of the variables, along with not using the projection onto incoherent matrices step (replacing it by QR), helps ensure that AltGDmin provably converges with a nearly constant step size. Because of this, the AltGDmin iteration complexity is better than that of FactGD by a factor of $r$.


\subsubsection{Initialization}
As in most previous work \cite{lowrank_altmin, rpca_gd}, the first step of our initialization is to compute the top $r$ singular vectors of $\Y$; denote the matrix formed by these singular vectors by $\U^{(00)}$. This is followed by a step to make $\U^{(00)}$ incoherent. We borrow this step from \cite{rpca_gd}. It involves projecting $\U^{(00)}$ onto the space of row incoherent matrices,
\[
{\calU}: = \{ \check\U: \|{\check\u}^j \| \le \mu \sqrt{r/n} \} 
\]
i.e., computing $\Pi_{\calU}(\U^{(00)}) = \min_{\check\U \in \calU} \|\check\U - \U^{(00)}\|_F$. It is easy to see that this projection can be obtained in closed form as follows:
\[
[\Pi_{\calU}(\M)]^j = \m^j \cdot \min\left(1, \frac{\mu \sqrt{r/n} }{\| \m^j \|} \right), \ \forall  j \in [n]
\]
In words,  if a row of $\M$ has $\ell_2$ norm that is more than the threshold $\mu \sqrt{r/n}$, then one renormalizes the row so that its norm equals the threshold. If the norm is less than this threshold, then we do not change it. Clearly this is an order $nr$ time operation.
The last step is computing a QR decomposition of $\Pi_{\calU}(\U^{(00)})$. This needs time of order $nr^2$.

\subsubsection{AltGDmin iterations}
We update $\B$ and $\U$ as follows for a total of $T$ iterations.
The minimization over $\B$ is a decoupled least squares (LS) problem since $f(\check\U,\check\B)$ decouples as $f(\check\U,\check\B) = \sum_{k \in [q]} \|\y - \S_k \check\U \check\b_k\|^2$.
We update $\b_k$ as
\begin{align*}
\b_k &= \argmin_{\check\b} \| \y - \S_k \U \check\b \|^2  = (\S_k \U)^\dagger \y_k, \ \forall k \in [q]  
\end{align*}
Let $\U_k := \S_k \U$, then $\b_k:= \U_k^\dagger \y_k$. Recall that $\M^\dagger:= (\M^\intercal \M)^{-1} \M^\intercal$.
We emphasise here that we write things as above for notational ease in both explaining the idea and the proof. The time complexity for computing $\b_k$ depends only on the sub-matrix of $\U_k$ with nonzero rows. This is of size $|\Omega_k| \times r$. The time cost of computing $\b_k$ is thus order $|\Omega_k| r^2$ for a given $k$. Hence, the total cost for all nodes is $\sum_k |\Omega_k| r^2 = |\Omega| r^2$.

We update $\U$ by one GD step followed by orthnormalization using QR, i.e.,
\begin{align*}
\tilde\U^+ = \U - \eta \nabla_\U f(\U,\B)),  \text{ and }   \U^+ = \mathrm{QR} (\tilde\U^+)   
\end{align*}
The gradient is
\begin{equation*}
\nabla_\U f(\U,\B) = 2( (\U\B)_{\Omega} - \Y) \B^\intercal
= 2 \sum_{k=1}^q (\S_k\U\b_k - \y_k) \b_k^\intercal
\end{equation*}
For the gradient computation, the time cost is $\sum_k |\Omega_k| r = |\Omega| r$. The QR step needs time of order $nr^2$.

We summarize the complete algorithm in Algorithm \ref{altgdmin}. Sample splitting (line 1) is assumed, as is common in most structured data recovery literature, e.g., \cite{lowrank_altmin,hardt2014understanding,fastmc,rmc_gd}.  In fact, as we discuss in Sec. \ref{discuss}, sample-splitting is assumed for obtaining provable guarantees for all iterative solutions for LRMC (and those for LRCS) in which one or both of the alternating steps is a minimization step.%

\subsubsection{Federated AltGDmin}
In a federated setting, at each algorithm iteration, $t$, each node $\ell=1,2,\dots, \gamma$ performs two operations i) updating $\b_k$ by the LS solution, for all $k \in \S_\ell$; and ii) computation of the partial  gradient $\sum_{k \in \S_\ell} [\nabla_{\U} f(\U,\B)]_k  = \sum_{k \in \S_\ell}  (\U_k \b_k - \y)_{\Omega_k} \b_k^\intercal$.
Only the $n \times r$ partial gradient needs to be sent to the center. The center sums the received partial gradients, implements the GD step, and
computes the QR decomposition, and broadcasts the updated $\U^+$  to all the nodes. This is used by the nodes in the next iteration.
The communication cost from nodes to center is equal to the number of nonzero entries in $\nabla_{\U} f(\U,\B)$; this is $r \cdot \max(n, \sum_{k \in \S_\ell} |\Omega_k|) = r \cdot \min(n, (|\Omega| / \gamma) )= nr$ since we assume $\gamma$ is a numerical constant and $|\Omega| \ge (n+q)r$ is a necessary lower bound for any approach to work. The center to nodes communication cost is also $nr$. The computation cost  is as explained above with $|\Omega|$ replaced by $|\Omega|/\gamma$.

The initialization can be federated using the power method applied to $\Y \Y^{\intercal}$.  
This costs order $\sum_{k \in \S_\ell} |\Omega_k| r$ per power method iteration.  
The upstream (node to center) per-node communication cost is $\min(n,\sum_{k \in \S_\ell} |\Omega_k|) r = nr$ and the downstream cost is also $nr$. The power method converges linearly and thus, for $\eps_{svd}$ accuracy, $\log(1/\eps_{svd})$ iterations are required. We prove that we need $\eps_{svd}=c/\kappa^2$.

Details of this federation as well as detailed time and communication cost discussion are provided  in \cite{allrtnPpr}.



\begin{algorithm}[t]
\caption{AltGDMin}
\begin{algorithmic}[1]
\label{algorithm}
\REQUIRE partial observations $\Y$,  rank $r$,  step size $\eta$, and number of iterations $T$
\STATE{Partition $\Y$ into $2T+1$ subsets $\Y_{\Omega^{(0)}}, \cdots, \Y_{\Omega^{(2T)}}$}
\STATE{$\U^{(00)} \gets$ top $r$ left-singular vectors of $\Y_{\Omega^{(0)}}$}
\STATE{$\M^{(0)} \gets \Pi_{\calU}(\U^{(00)})$}
\STATE{$\U^{(0)} \gets \mathrm{QR}(\M^{(0)}$}
\FOR {$t \in 1 \cdots T$} 
\STATE{$\b_k^{(t)} \gets (\U_{k}^{(t-1)})^{\dagger} \y_{\Omega_k^{(t)}} \, \forall \, k \in [q] $}
\STATE{$\tilde\U^{(t)} \gets \U^{(t-1)} - \eta  (\U^{(t-1)}\B^{(t)} - \Y)_{\Omega^{(T + t)}} \B^{(t)}{}^\intercal$}
\STATE{$\U^{(t)} \gets \mathrm{QR}(\tilde\U^{(t)})$ , i.e. $\tilde\U^{(t)} \qreq \U^{(t)} \R{(t)}$}
\ENDFOR
\STATE {\bfseries {Return} $\U, \B$}
\end{algorithmic}
\label{altgdmin}
\end{algorithm}

\subsection{AltGDmin Guarantee}
We can prove the following for AltGDmin. Our theorem statements and proofs assume $n \le q$ for simplicity. Recall from above that the per-iteration per-node computation and communication complexities of AltGDmin are $\max(nr^2,(|\Omega|/\gamma)r^2)$ and $\max(nr, \min(n, (|\Omega| / \gamma))r) = nr$. Here per-node takes the maximum of the node or center complexities.
We use this and the iteration complexity (expression for $T$) derived in the result below to provide an expression for its total per-node computation and communication complexities.

Let $p_{tot}:= 2T p$. Our proof of Theorem \ref{thrm} given next obtains a lower bound on the required $p$ in each algorithm iteration.


\begin{theorem}
\label{thrm}
Pick an $\eps < 1$.
Assume that Assumption \ref{incoh} holds and that, entries of $\Xstar$ are observed independently with probability $p_{tot}$ satisfying $n q p_{tot} > C  \kappa^6 \mu^2 q r^2 \log q \log({1}/{\epsilon})$. Set $\eta = 0.5/(p\sigmaxTwo)$ and $T = C\kappa^2 \log(1/\eps)$. Then, with probability (w.p.) at least $1 - 4T/n^3$,
\begin{equation}
\SE_F(\U^{(T)},\Ustar)  \leq \eps \text{ and } \|\X^{(T)}  - \Xstar\|_F \le \eps \|\Xstar\|.
\end{equation}
(recall that $\X^{(T)} = \U^{(T)}\B^{(T)}$).

The total per-node computation complexity of federated AltGDmin is $C\kappa^2 \log(1/\eps) \cdot \max(n,|\Omega|/\gamma)r^2 $ and its total per-node communication complexity is  $C\kappa^2 \log(1/\eps) \cdot nr$. 
\end{theorem}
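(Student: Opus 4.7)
My plan is to prove Theorem~\ref{thrm} by induction on the iteration index $t$, maintaining the joint invariant
$$
\SE_F(\U^{(t)},\Ustar) \le \delta_t := \delta_0 \rho^t, \qquad \max_{j \in [n]} \|\u^{(t),j}\| \le \tilde\mu \sqrt{r/n},
$$
with $\rho = 1 - c/\kappa^2$, $\delta_0 \le c/\kappa^2$, and $\tilde\mu$ a constant multiple of $\mu$ (so that ``approximate incoherence'' propagates across iterations). The initial step is handled separately: the spectral step applied to $\Y_{\Omega^{(0)}}$ recovers the top-$r$ left singular subspace of $\E[\Y_{\Omega^{(0)}}] = p \Xstar$ up to $O(1/\kappa^2)$ subspace-Frobenius error, once the sampling rate satisfies $n q p \gtrsim \kappa^4 \mu^2 q r^2 \log q$; this follows from a standard Bernoulli-mask concentration argument (Matrix Bernstein). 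Projecting onto $\calU$ only decreases the Frobenius distance to any incoherent matrix (hence to $\Ustar$) while forcing the row-norm bound, and the subsequent QR step can only decrease $\SE_F$ to $\Ustar$ (up to a constant) because $\Pi_{\calU}(\U^{(00)})$ is already near-isometric. This yields the base case $\delta_0 \le c/\kappa^2$ and the incoherence invariant.

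For the inductive step, I would analyze the $\B$-update first. Because $\y_k = \S_k \xstar_k = \S_k \Ustar \bstar_k$, the least squares solution satisfies
$$
\b_k^{(t)} = (\S_k \U^{(t-1)})^\dagger \S_k \Ustar \bstar_k,
$$
so $\U^{(t-1)} \b_k^{(t)}$ is the projection of $\Ustar \bstar_k$ onto the column space of $\S_k \U^{(t-1)}$ when restricted to the observed rows. With $p \gtrsim \kappa^2 \mu^2 r \log q / n$, a matrix Chernoff bound on the row-sampled matrix $\S_k \U^{(t-1)}$ gives $\|(\S_k \U^{(t-1)})^\dagger\| \le \sqrt{2/p}$ and a column-wise bound on $\|\b_k^{(t)} - \b^{\star\, \mathrm{rot}}_k\|$ in terms of $\delta_{t-1}$. (Here $\b^{\star\, \mathrm{rot}}$ denotes the rotated alignment.) The key descent lemma for $\U$ proceeds by writing
$$
\tilde\U^{(t)} = \U^{(t-1)} - \eta \, p \, (\U^{(t-1)} \B^{(t)} - \Xstar) \B^{(t)\intercal} - \eta \, \mathrm{Err}_t,
$$
where $\mathrm{Err}_t$ collects the Bernoulli deviation $(\cdot)_\Omega - p(\cdot)$. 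A Frobenius subspace-distance recurrence of the form
$$
\SE_F(\U^{(t)},\Ustar) \le (1 - c/\kappa^2)\, \SE_F(\U^{(t-1)},\Ustar) + \tfrac{c}{\sigmax}\|\mathrm{Err}_t \Vstar\|_F
$$
can then be derived via the standard ``QR decouples rotation'' identity $\SE_F(\U^{(t)},\Ustar) = \SE_F(\tilde\U^{(t)} \R^{(t),-1},\Ustar)$ combined with the choice $\eta = 0.5/(p\sigmaxTwo)$; a Bernstein bound on $\mathrm{Err}_t \B^{(t)\intercal}$, using column-wise $\ell_2$ bounds on $\b_k^{(t)}$ and row-wise incoherence of $\U^{(t-1)}$, makes the noise term dominated by $(\delta_{t-1}/\kappa^2)\sigmax$ whenever $p \gtrsim \kappa^6 \mu^2 r^2 \log q / n$.

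The main obstacle is the second half of the invariant: preserving row incoherence of $\U^{(t)}$ after a plain QR (without any explicit projection onto $\calU$). I would handle this by a row-wise analysis of $\tilde\U^{(t)}$: bound $\|\tilde\u^{(t),j}\|$ using (i) incoherence of $\U^{(t-1)}$, (ii) the per-row gradient $\sum_k \xi_\jk (\U^{(t-1)}\b^{(t)}_k - \xstar_k)_j \b^{(t)\intercal}_k$, whose norm I would bound via a vector Bernstein inequality using the row-wise thinness of both $\Omega$ and $\U^{(t-1)}$, and (iii) the fact that $\sigma_{\min}(\R^{(t)}) \ge 1 - O(\eta p \sigmaxTwo \delta_{t-1}) \ge 1/2$, so that $\U^{(t)} = \tilde\U^{(t)} \R^{(t),-1}$ has rows only a constant factor larger than those of $\tilde\U^{(t)}$. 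Iterating this bound shows that a mild blowup factor $1 + O(1/\kappa^2)$ per iteration is absorbed by the geometric decay of $\delta_t$, so $\tilde\mu$ remains an absolute multiple of $\mu$ across all $T$ iterations. This is the step where the condition-number powers in the sample complexity arise, and where care is needed to keep constants consistent with those in the $\SE_F$ recurrence.

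Combining the recurrence $\delta_t \le (1 - c/\kappa^2)\delta_{t-1}$ with the base case $\delta_0 \le c/\kappa^2$, after $T = C\kappa^2 \log(1/\eps)$ iterations we obtain $\SE_F(\U^{(T)},\Ustar) \le \eps$. The entrywise matrix error follows from $\|\X^{(T)} - \Xstar\|_F \le \|\U^{(T)}\B^{(T)} - \Ustar\Bstar\|_F$, which splits into $\SE_F(\U^{(T)},\Ustar)\|\Bstar\|$ (controlled by the subspace bound) and a ``$\B$-after-rotation'' term already controlled in the $\B$-update analysis, both bounded by $\eps \sigmax = \eps\|\Xstar\|$. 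Since sample splitting gives $2T+1$ independent masks each with probability $p = p_{tot}/(2T)$, the requirement $np \gtrsim \kappa^6 \mu^2 r^2 \log q$ translates into the stated $n q p_{tot} \gtrsim \kappa^6 \mu^2 q r^2 \log q \log(1/\eps)$, and a union bound over $2T$ events gives the failure probability $4T/n^3$. Finally, multiplying the per-iteration per-node costs $\max(n,|\Omega|/\gamma)r^2$ (computation) and $nr$ (communication) by $T = C\kappa^2 \log(1/\eps)$ yields the claimed total complexities.
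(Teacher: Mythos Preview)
Your overall inductive framework---maintaining both $\SE_F(\U^{(t)},\Ustar)\le\delta_t$ and row-incoherence of $\U^{(t)}$, with the initialization, $\B$-update, and gradient-deviation steps handled by matrix Bernstein---matches the paper's structure closely. However, there is a genuine gap in the row-incoherence step, and a secondary issue with the $\B$-analysis.

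\textbf{The row-incoherence argument is the main gap.} Your claim that ``a mild blowup factor $1+O(1/\kappa^2)$ per iteration is absorbed by the geometric decay of $\delta_t$'' does not work as stated: with $T=C\kappa^2\log(1/\eps)$ iterations, a fixed multiplicative blowup of $(1+c/\kappa^2)$ per step compounds to $(1/\eps)^{\Theta(1)}$, which destroys the invariant. The paper's mechanism is different and essential: it exploits the explicit structure $\E[\gradU^j]=p\,\u^{j\intercal}\B\B^\intercal - p\,\ustar^{j\intercal}\Bstar\B^\intercal$, so that
\[
\u^{j\intercal}-\eta\,\E[\gradU^j]\;=\;\u^{j\intercal}(\I-\eta p\,\B\B^\intercal)\;+\;\eta p\,\ustar^{j\intercal}\Bstar\B^\intercal.
\]
With $\eta=0.5/(p\sigmaxTwo)$ and $\sigma_{\min}(\B)\ge0.9\sigmin$, the first term \emph{contracts} $\|\u^j\|$ by $(1-c/\kappa^2)$; the second contributes $O(\|\ustar^j\|)$. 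Controlling the deviation $\gradU^j-\E[\gradU^j]$ (here the paper uses the crude but $\delta$-free bound $|\x_\jk-\xstar_\jk|\le 2\max(\|\u^j\|,\|\ustar^j\|)\max(\|\b_k\|,\|\bstar_k\|)$, not a $\delta_{t-1}$-dependent one) and the factor $\|(\R^+)^{-1}\|$ yields the recursion $\|\u^{j,+}\|\le(1-0.15/\kappa^2)\|\u^j\|+2\|\ustar^j\|$. Summing the geometric series gives a \emph{uniform} bound, but with incoherence constant $\mu_u=O(\kappa^2\mu)$, not an absolute multiple of $\mu$ as you assert. This $\kappa^2$ inflation is where part of the $\kappa^6$ in the sample complexity originates.

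\textbf{Secondary issue.} The paper explicitly notes that a column-wise bound on $\|\b_k-\g_k\|$ (your ``column-wise bound on $\|\b_k^{(t)}-\b_k^{\star\,\mathrm{rot}}\|$'') is \emph{not} achievable under the target $r^2$ sample complexity for LRMC, because the measurements are row- and column-wise local. Only the aggregate $\|\B-\G\|_F$ can be controlled (Lemma~\ref{lem:BerrLmma}); column-wise, one proves incoherence $\|\b_k\|\le 1.1\mu\sqrt{r/q}\,\sigmax$ separately (Lemma~\ref{lem:BIncohLmma}). Your descent analysis should be reorganized around $\|\B-\G\|_F$ and $\|\X-\Xstar\|_F$ rather than per-column errors.
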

The proof is given in Sec. \ref{thrm_proof_section} and \ref{lemma_proofs}. Before this, we discuss this result and the novel ideas used in our proof in Sec. \ref{discuss}.

Observe that $nq\cdot p_{tot} = \E[|\Omega|]$, i.e., it is the expected value of the sample complexity. We often just use the phrase ``sample complexity'' when referring to it in our writing. Treating $\kappa,\mu$ as numerical constants, the above result says that as long as we observe order $nr^2 \log q  \log (1/\eps)$ matrix entries, and we set the GD step size, $\eta$, and the total number of iterations, $T$, as stated, then  with high probability (whp), we can fill in the rest of the entries accurately: the normalized Frobenius norm of the error in this estimation is at most $\eps$. Also, we can estimate the column span of $\Xstar$ with $\eps$ accuracy. The number of iterations needed is order $\log(1/\eps)$.  We can achieve any $\eps$ accuracy as long as we increase the sample and iteration complexity accordingly.

\begin{remark}
We state Theorem \ref{thrm} for one value of the step size $\eta$ because this makes some of our proof arguments notationally less messy. However, our proof will go through for any step size $\eta \le 0.5/(p \sigmax^2)$. We can actually show the following: let $\eta = c_\eta /(p \sigmax^2)$ for a $c_\eta \le 0.5$. Then, if at each iteration, $nq \cdot p > C  \kappa^6 \mu^2 q r^2 \log q$, then  with probability (w.p.) at least $1 - 4/n^3$,
\[
\SE_F(\U^+, \Ustar) \le (1 - \frac{c c_\eta}{\kappa^2}) \SE_F(\U, \Ustar)
\]
\end{remark}

\section{Discussion and Proof Main Ideas/Novelty} \label{discuss}

\subsection{Discussion}

The average (expected value of) the required sample complexity is $nq p$. For this discussion, we assume $n \approx q$, $\kappa \approx \mu$, $\max(\log(1/\eps), r) = \log(1/\eps)$. 
%
Our guarantee given in Theorem \ref{thrm} shows that the AltGDmin sample complexity is $\kappa^6 \mu^2 nr^2 \log n \log(1/\eps)$. This is worse than that of FactGD by a factor of $\kappa^2\log(1/\eps)$. However, the iteration complexity of AltGDmin is $T=\kappa^2 \log(1/\eps)$ while that of FactGD is  $T=\kappa \mu r  \log(1/\eps)$. Assuming $\kappa \approx \mu$, this means the AltGDmin iteration complexity is better by a factor of $r$. This is also demonstrated numerically in Sec. \ref{sims}.
Per iteration, both have the same communication cost, while the FactGD computation cost is lower than that of AltGDmin by a factor of $r$.
Consequently, the total AltGDmin communication cost is lower than that of FactGD by a factor of $r$, while the total computation costs are similar. We provide a summary in Table \ref{tab1}.

The ProjGD sample complexity is worse than that of AltGDmin by a factor of $\log n \log(1/\eps)$. Moreover, the ProjGD communication cost per iteration is much higher than that of AltGDmin and FactGD. The same is true for its computation cost making it one of the slowest and most communication inefficient. 

The original sample complexity of AltMin and Smooth AltMin is much higher than that of AltGDmin or FactGD (see Table \ref{tab1}). However, the corollary that we prove for AltMin later in Sec \ref{altmincorolls} shows that the AltMin sample complexity is $\kappa^4 \mu^2 nr^2 \log n \log(1/\eps)$ and its iteration complexity is $\log(1/\eps)$. 

In summary, when compared with all other methods, AltGDmin has the lowest total communication cost. It also has one of the lowest total computation costs and the second lowest sample cost. Moreover as we argue in detail in \cite{allrtnPpr}, AltGDmin is private and requires just one data exchange per algorithm iteration. Federated FactGD can be made private with two data exchanges per algorithm iteration. AltMin can be made private by solving the LS problem needed to update $\U$ using GD rather than the closed form solution. This requires order $\log(1/\eps)$ extra data exchanges per main algorithm iteration. AltMin-Private is much slower both in numerical experiments and using theoretical complexities; see Table \ref{tab1} and Sec. \ref{sims}.

The reason that AltGDmin needs a  factor of $\log(1/\eps)$ in its sample complexity is because its proof uses sample-splitting. To our best knowledge, all guarantees for all solutions for LRMC or LRCS that factor $\X$ as $\X=\U\B$, and in which one (or both) of the two alternating steps is a minimization step, use sample-splitting \cite{lowrank_altmin,hardt2014understanding,lrpr_it,lrpr_best,lrpr_gdmin,lrpr_gdmin_2}. Some of this cited work provides guarantees for LR phase retrieval which is a phaseless measurements' generalization of LRCS; and hence any LRPR solution automatically solves LRCS. All three problems involve recovery from non-global measurements (no scalar measurement depends on the entire $\Xstar$). For such problems, properties such as restricted isometry property cannot be proved.%

The reason that AltGDmin iteration and sample complexity need an extra factor of $\kappa^2$  is due to the orthonormalization step (QR step) when updating $\U$. For any algorithm that updates one or both of $\U,\B$ by GD, such a step is one way to prevent the norm of the estimate $\U$ from increasing/decreasing over iterations (while that of $\B$ decreases/increases). The other way to ensure this is to add a norm balancing term in the cost function for the GD step for $\U$, as done in FactGD. However this would then also require a row norm clipping step for $\U$ at each iteration, like in FactGD. Use of these two steps to replace the QR step can possibly help improve its dependence on $\kappa$. But it will also mean that we cannot borrow the overall proof approach developed for AltGDmin for LRCS in \cite{lrpr_gdmin_2} that allows us to get a guarantee with a (nearly) constant step size, consequently, a (nearly) best iteration complexity of $\kappa^2\log(1/\eps)$. 

\subsection{Proof Novelty}\label{novelty}



\ben

\item The most important difference between LRCS and LRMC is that the LRMC proofs require incoherence of each iterate of $\U$ and of $\B$. When analyzing AltGDmin for LRCS in \cite{lrpr_gdmin,lrpr_gdmin_2}, this was needed only for $\B$ because LRCS measurements are column-wise global  (matrix $\A_k$ is dense and so each entry of $\y_k$ depends on the entire column $\xstar_k$). Our proof of incoherence of each updated $\U$ needs a different approach than that used in all past work on LRMC as well because our algorithm is different. (i) ProjGD \cite{rmc_gd} does not use a factorized representation for $\X$ and hence the gradient w.r.t. $\U$ does not exist for it. (ii) We do not update $\U$ by LS and hence we cannot use the approach used for AltMin or Smooth AltMin \cite{lowrank_altmin,hardt2014understanding}. (iii) FactGD \cite{rpca_gd} uses row norm clipping to ensure incoherence at each algorithm iteration, while AltGDmin does not use this (except for initialization).
   The details of our approach are explained below in the proof outlines, and in Lemma \ref{lem:rowIncoh} (and proof) and its use in proving Theorem \ref{thrm}.


As noted earlier, not using row norm clipping at each iteration is one reason why we can borrow the overall proof technique used for LRCS \cite{lrpr_gdmin_2} in order to prove a correctness guarantee with a constant GD step size. The other reason is that we use minimization for updating $\B$ (and not GD).


\item Unlike LRCS, LRMC measurements are both row-wise and column-wise local. Consequently, it is not possible to get a tight column-wise bound (bound on $\|\b_k - \U^\intercal \xstar_k\|$, and hence on $\|\x_k - \xstar_k\|$, for each $k$) under the desired sample complexity. We can only bound $\|\B- \U^\intercal \Xstar\|_F$ and $\|\X - \Xstar\|_F$. This has implications on how we bound the gradient norm (both the gradient expected value and its deviation from it). This is one reason why the use of $\SE_F$ (instead of $\SE_2$) in our bounding helps prove a better guarantee.

\item  We need to use the matrix Bernstein inequality \cite{versh_book} for bounding all of terms, instead of the concentration bounds used in \cite{lrpr_gdmin_2} (sub-Gausian Hoeffding or sub-exponential Bernstein inequality followed by an epsilon-net argument).

\item Our initialization guarantee is better than that of AltMin and Smooth Altmin \cite{lowrank_altmin,hardt2014understanding} by a factor of $r^{3}$ and $r$ respectively, and comparable to that of FactGD.
The reason for this is that we adapt the row norm clipping idea of \cite{rpca_gd} for our initialization step.
This approach is better than the entry-wise clipping used in \cite{lowrank_altmin}.
Row norm clipping can be analyzed in terms of $\SE_F(\U^{(0)},\Ustar)$ and this is the other reason we use $\SE_F$ in our proofs (see Sec. \ref{lemmas_proof_outline}).


\item All analysis of AltGDmin for LRCS has been for the noise-free case. In this work, we also provide an extension of our result to the more practical noisy case. Our proof explains a general strategy for doing this for any iterative algorithm.

\item Lastly, we also show how our lemmas can be used to improve the sample complexity dependence of AltMin and Smoothed AltMin on $r$ from $r^{4.5}$ and $r^3$ respectively to $r^2$.

\een



\subsection{Proof outline of Theorem \ref{thrm}}\label{thm_proof_outline}
Let $\U = \U^{(t-1)}$, $\G:=\U^\intercal \Xstar$ and $\U^+ =  \U^{(t)}$.

To prove Theorem  \ref{thrm}, (i) we obtain a bound on $\deltat := \SE_F(\U^+,\Ustar)$ in terms of $\deltatold = \SE_F(\U,\Ustar)$ that can be used to show exponential error decay, whp, under the desired sample complexity. (ii) This bound requires the initialization error $\deltazero$ to be small enough. Hence we also need to analyze the initialization step to show that this is true whp. Initialization is analyzed in Lemma \ref{lem:init} (proof outline given below).  The overall idea for (i) is borrowed from \cite{lrpr_gdmin_2} but there are differences because we need each new $\U$ to be incoherent.
We use induction. The induction step assumes a certain bound on $\deltatold$ and on the row norms of $\U$ and proves the same for its updated version, $\U^+$. 
To do this, we first use the induction assumption (which implies $\kappa^2\mu$ row-incoherence of $\U$) and the fact that $\E[\nabla_\U f(\U,\B)] = p (\X- \Xstar)\B^\intercal$ to get a deterministic bound on $\deltat$.
Next, we get high probability bounds on the terms of this bound using matrix Bernstein and appropriate linear algebra. These are obtained in the lemmas given in Sec. \ref{lemmas}.
The last step is to use the induction assumption and Lemma \ref{lem:rowIncoh} to bound the row norms of $\U^+$.

Finally, we simplify the bounds in order to show that, whp, $\deltat$ decays exponentially with $t$ as long as $\eta$ is at most $0.5/(p\sigmax^2)$ and $\deltazero \le c/\kappa^2$. The proof is in Sec. \ref{thrm_proof} and it relies on the six lemmas stated in Sec. \ref{lemmas}.


\subsection{Proof outlines of the lemmas used to prove Theorem \ref{thrm}} \label{lemmas_proof_outline} 

Initialization Lemma \ref{lem:init} uses the following ideas, many of which are borrowed from initialization step analysis of FactGD \cite{rpca_gd} (with filling in the missing details from there).
(i)
Using the results of \cite{Chen_2021} (Lemma 3.21, Theorem 3.22),
we can bound $\SE_F(\U^{(00)}, \Ustar)$.
(ii) The row norm clipping step can be interpreted as projecting its input onto a convex set, $\calU:=\{\U: \|\u^j\| \le \mu\sqrt{r/n} \}$, with the projection being in Frobenius norm.
Projection onto convex sets is non-expansive, i.e., $\|\Pi_{\calU}(\U_1) - \Pi_{\calU}(\U_2)\|_F \le \|\U_1 - \U_2\|_F$ \cite[eq (9),(10)]{nashed1968decomposition},\cite{rpca_gd}. Also, $\Pi_\calU(\Ustar \Q) = \Ustar \Q$ for any $r \times r$ unitary matrix $\Q$ (since $\Ustar$ as well as $\Ustar$ times any unitary matrix belong to $\calU$).
(iii) Let $\Qopt := \argmin_{\Q \ unitary}\|\U^{(00)} - \Ustar Q \|_F$.
By Lemmas 2.5 and 2.6 of \cite{Chen_2021}, $\SE_F(\U^{(00)}, \Ustar) \approx  \|\U^{(00)} - \Ustar \Qopt \|_F$ (upper and lower bounded by RHS with a factor of $\sqrt{2}$ for upper bound).
(iv) The above ideas help bound $\|\Pi_{\calU}(\U^{(00)}) - \Ustar \Qopt \|_F$.
In the last step, we use this bound and $\Pi_{\calU}(\U^{(00)}) \qreq  \U^{(0)} \R^{(0)}$ to bound $\SE_F(\U^{(0)},\Ustar)$. The proof is in Sec. \ref{proof_init}.

Lemma \ref{lem:BerrLmma} uses incoherence of $\U$ and $\deltatold$ to bound $\|\B-\G\|_F$ where $\G=\U^\intercal \Xstar$. This  is proved by writing $vec(\B - \G) = \F^{-1} vec(\D)$, where $\F \in \Re^{nr \times nr}$ and $\D \in \Re^{r \times q}$ are defined below in Sec. \ref{prelims},
bounding $\|\F^{-1}\|$ and $\| \D \|$ by the matrix Bernstein inequality, and using $\|\B- \G\|_F = \|vec(\B - \G)\| \le \| \F^{-1}\| \cdot \|vec(\D)\| = \| \F^{-1}\| \cdot \|\D\|_F  \le \| \F^{-1}\| \cdot  \sqrt{r} \|\D\|$. The bound on $\|\F^{-1}\|$ is borrowed from \cite{lowrank_altmin}, while the rest of our bounding is different and simpler than the approach used in  \cite{lowrank_altmin}. We use matrix Bernstein which provides a much simpler proof and the resulting bound holds with a smaller sample complexity (better dependence on $r$). The proof is in Sec. \ref{proof_Blemma}.
Lemma \ref{lem:Berr_implic} uses the $\|\B-\G\|_F$ bound to bound $\|\X-\X\|_F$, and the minimum and maximums singular values of $\B$. The proof idea is similar to that in \cite{lrpr_gdmin}. We provide the proof in Sec. \ref{proof_Blemma_implic} anyway since it is very short.

Lemma \ref{lem:BIncohLmma} uses incoherence of $\U$ to show incoherence of $\B$. It follows  by writing $\b_k = (\U_k^\intercal\U_k)^{-1}(\U_k^\intercal\Ustar_k) \bstar_k$, using the fact that $\E[\U_k^\intercal\U_k] = p \U^\intercal \U = p\I$, $\E[\U_k^\intercal\Ustar_k] = p \U^\intercal \Ustar$, $\|\U^\intercal \Ustar\| \le 1$,  and using matrix Bernstein to bound deviations of both matrices from their expected values.  This proof is also much simpler than the one given in \cite{lowrank_altmin} and the result needs a smaller sample complexity (better dependence on $r$). The proof is in Sec. \ref{subsec:bkIncoh}.

Lemma \ref{lem:Ugrad} uses the $\B-\G$ bound and its implications,  $\deltatold < c/\kappa^2$, and incoherence of $\U$ and of $\B$ to bound the gradient norm. It relies on the following ideas. $ \E[\gradU]  =   p (\X - \Xstar)\B^\intercal$. This holds since the expectation is taken with respect to an independent set of samples at each iteration (possible because of sample splitting). We bound the deviation from the expected value using matrix Bernstein inequality and the assumed bounds in the lemma assumptions. 
The proof is in Sec. \ref{subsec:gradUBnd}.

Lemma \ref{lem:rowIncoh} uses incoherence of $\B$ to show incoherence of $\U$. (i) We first bound the deviation of the $j$-th row of $\gradU$, denoted $\gradU^j$, from its expected value in terms of $\max(\|\u^j\|,\|\ustar^j\|)$. To get a high probability bound of this form, we use matrix Bernstein inequality and the fact that, for showing incoherence, we do not need the bound to contain $\deltatold$. Hence, we can use a seemingly loose bound of the form $|\x_\jk - \xstar_\jk|\le 2 \max(|\x_\jk|,|\xstar_\jk|) \le 2.2 \mu \sqrt{r/q} \sigmax \cdot \max(\|\u^j\|,\|\ustar^j\|)$. This bound is, in fact, tighter than a bound that contains $\deltatold$ for the initial iterations when $\deltatold$ is allowed to be larger than order $\sqrt{r/q}$.
(ii) Next we use  $\tilde{\u}^{j^{(t+1)}} =  {\u^j}^{(t)}  - \eta\gradUj \pm \E[\gradUj]$, $\E[\gradU^j] = p (\u^j{}^\intercal \B \B^\intercal  - \ustar^j{}^\intercal \Bstar \B^\intercal)$, and an upper bound on the GD step size $\eta$ to show the second part of this lemma:  $\|{\u^j}^{+}\|	\le  (1 - c/\kappa^2)   \|{\u^j}\| +  2 \|{\ustar{}^j}\|$.
The proof is provided in Sec. \ref{subsec:rowIncohProof}.


\section{Proving Theorem \ref{thrm}} \label{thrm_proof_section} 

\subsection{Definitions and Expressions}\label{prelims}
Let $\U \equiv \U^{(t-1)}$, $\B \equiv \B^{(t)}$, and  $\X \equiv \X^{(t)} = \U^{(t-1)} \B^{(t)}$.
Also let $\U^+ \equiv \U^{(t)}$, $\B^+ \equiv \B^{(t+1)}$. Let $\G :=  \U^\intercal \Xstar$ and $\P := \I - \Ustar \Ustar^\intercal$.
Finally, 
Let
\[
\deltat := \SE_F(\U^{(t)},\U^*) =  \|\P \U^+\|_F
\]
Thus, $\deltatold = \SE_F(\U, \Ustar)$, $\deltat = \SE_F(\U^+, \Ustar) = \|\P \U^+\|_F $, and $\P \Xstar = \bm{0}$.

Let $\mu_u:= 20\kappa^2 \mu$. All our proofs use $c,C$ to denote different numerical constants in each use. The numerical values in our intermediate steps are often loose bounds to make the analysis simpler.


Recall that $\U_k:=\S_k \U$ where $\S_k = \mathrm{diag}(\xi_\jk, j \in [n])$. Similarly, let $\Ustar_k: = \S_k \Ustar$.
Thus
\[
\S_k^\intercal \S_k  = \S_k \S_k = \S_k,
\]
\[
\U_k = \S_k \U  = \sum_{j=1}^n \xi_\jk \e_j \u^j{}^\intercal, \ \Ustar_k  = \sum_{j=1}^n \xi_\jk \e_j \ustar^j{}^\intercal,
\]
and consequently,
\[
\U_k^\intercal \Ustar_k =  \U^\intercal \S_k \Ustar =  \sum_{j=1}^n \xi_\jk \u^j \ustar^j{}^\intercal
\]
Using this,
\[
\b_k = \U_k^\dag \y_k = \U_k^\dag \S_k \xstar_k = (\U_k^\intercal \U_k)^{-1} \U_k^\intercal \Ustar_k \bstar_k   = (\U^\intercal \S_k \U)^{-1} \U^\intercal \S_k \Ustar \bstar_k
\]
Thus, $\b_k - \g_k = \b_k - \U^\intercal \Ustar \bstar_k$ simplifies to
\begin{equation}
\b_k - \g_k
= (\U^\intercal \S_k \U)^{-1} [ \U^\intercal \S_k \Ustar \bstar_k -  (\U^\intercal \S_k \U) \U^\intercal \Ustar \bstar_k ]
= ( \ \underbrace{\U^\intercal \S_k \U}_{\F_k} \ )^{-1}  \underbrace{\U^\intercal \S_k  [\I - \U \U^\intercal ] \Ustar \bstar_k}_{\d_k} 
\label{eq:LS_Updt}
\end{equation}
with
\[
\d_k := \U^\intercal \S_k  [\I - \U \U^\intercal ] \Ustar \bstar_k, \   \F_k:= \U^\intercal \S_k \U
\]
Thus, letting $\F : = \mathrm{blockdiag}(\F_k, k \in [q])$,
\begin{equation*}
vec(\B- \G) = \F^{-1} vec(\D), \text{ where } \D:= [\d_1, \d_2, \dots \d_q] = \sum_k \sum_j \deltajk \u^j {[[\I - \U \U^\intercal ] \Ustar]^j}^\intercal \bstar_k  \e_k^\intercal
\end{equation*}
In the above $\F \in \Re^{rq \times rq}$ and $\D \in \Re^{r \times q}$.
Observe that
\[
\E[\S_k] = p \I, \ \E[\D] = \bm{0}
\]
Finally,
\begin{equation}
\gradU := \nabla_{\U}f(\U,\B) = 2 (\X_{\Omega} - \Y) \B^\intercal = 2 \sum_{k=1}^q \S_k (\x_k - \xstar_k) \b_k^\intercal =  2 \sum_{k=1}^q \sum_{j=1}^n \xi_\jk \e_j (\x_\jk - \xstar_\jk) \b_k^\intercal \label{eq:gradU_Updt}
\end{equation}
where $\X = \U \B$, $\Xstar = \Ustar \Bstar$,  $\x_\jk = \u^j{}^\intercal \b_k$ and  $\xstar_\jk = \ustar^j{}^\intercal \bstar_k$.
Also,
\[
\gradU^j := \e_j^\intercal \nabla_{\U}f(\U,\B)  = 2 \sum_{k=1}^q  \xi_\jk (\x_\jk - \xstar_\jk) \b_k^\intercal,
\]
Observe that
\[
\E[\gradU] = p (\X- \Xstar) \B^\intercal \text{ and } \E[\gradU^j] = p \e_j^\intercal (\X- \Xstar) \B^\intercal
\]

\subsection{Lemmas for proving Theorem \ref{thrm}} \label{lemmas}
All lemmas below assume Assumption \ref{incoh} (singular vectors' incoherence) holds. Also, everywhere $\eps < 1$.

\begin{lemma}[Initialization]
\label{lem:init}
Pick a $\delta \leq 0.2$.
Assume  $p \geq C{ \kappa^2 r^2 \mu  \log q}/{(n \delta^2 )}$. Then, w.p. at least $1 - n^{-10}$.
\begin{enumerate}
\item $\SE_F(\U^{(0)},\U^*) \leq \delta$.
\item $\U^{(0)}$ is $1.5\mu$  row-incoherent, i.e., $\| \u^{j}{}^{(0)} \| \leq 1.5 \mu \sqrt{r/n}$ for all $j \in [n]$.
\end{enumerate}
\end{lemma}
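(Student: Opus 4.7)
The plan is to analyze the three initialization steps in sequence: the spectral step producing $\U^{(00)}$, the row-norm clipping step producing $\M^{(0)} = \Pi_{\calU}(\U^{(00)})$, and the QR step producing $\U^{(0)}$. I will propagate two quantities through these steps: a Frobenius subspace distance and a row-norm bound. The structure mirrors the initialization analysis of FactGD, but I need to work with $\SE_F$ throughout (rather than $\SE_2$) and explicitly handle the QR factor, both for the subspace distance and the incoherence claim.

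First, for the spectral step, I would apply the matrix Bernstein inequality to
\[
\Y_{\Omega^{(0)}}/p - \Xstar = \frac{1}{p}\sum_{j,k} (\xi_\jk - p) \, \Xstar_\jk \, \e_j \e_k^\intercal ,
\]
using Assumption \ref{incoh} to bound $|\Xstar_\jk| \le \mu^2 r \sigmax/\sqrt{nq}$ and the row/column norms of $\Xstar$ by $\mu\sqrt{r/n}\,\sigmax$ and $\mu\sqrt{r/q}\,\sigmax$, to get whp a bound of the form $\|\Y_{\Omega^{(0)}}/p - \Xstar\| \le C \mu \sigmax \sqrt{(n+q) \log q/(nqp)}$. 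Combined with the Wedin-type Frobenius perturbation bound \cite{Chen_2021}[Thm.~3.22], which says $\SE_F(\U^{(00)},\Ustar) \le C\sqrt{r}\,\|\Y_{\Omega^{(0)}}/p - \Xstar\|/\sigmin$, this yields $\SE_F(\U^{(00)},\Ustar) \le \delta/4$ under the assumed sample complexity. By \cite{Chen_2021}[Lem.~2.5--2.6], with $\Qopt := \argmin_{\Q\ \text{unitary}} \|\U^{(00)} - \Ustar \Q\|_F$, we have $\|\U^{(00)} - \Ustar \Qopt\|_F \le \sqrt{2}\,\SE_F(\U^{(00)},\Ustar) \le \delta/2$.

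Second, for the clipping step, since each row of $\Ustar \Qopt$ has norm equal to $\|\ustar^j\| \le \mu\sqrt{r/n}$ (unitary invariance of row norms), $\Ustar \Qopt \in \calU$, and so $\Pi_{\calU}(\Ustar\Qopt) = \Ustar\Qopt$. Non-expansiveness of the Frobenius projection onto a convex set \cite{nashed1968decomposition} then gives
\[
\|\M^{(0)} - \Ustar\Qopt\|_F = \|\Pi_{\calU}(\U^{(00)}) - \Pi_{\calU}(\Ustar\Qopt)\|_F \le \|\U^{(00)} - \Ustar\Qopt\|_F \le \delta/2 .
\]
By construction every row of $\M^{(0)}$ satisfies $\|\m^{j(0)}\|\le \mu\sqrt{r/n}$.

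Third, for the QR step $\M^{(0)} \qreq \U^{(0)} \R^{(0)}$, Weyl's inequality together with $\sigma_{\min}(\Ustar \Qopt) = 1$ and $\|\M^{(0)} - \Ustar\Qopt\|_F \le \delta/2 \le 0.1$ yields $\sigma_{\min}(\R^{(0)}) \ge 1 - \delta/2 \ge 0.9$. Hence
\[
\SE_F(\U^{(0)},\Ustar) = \|\P\,\M^{(0)} (\R^{(0)})^{-1}\|_F \le \|\P \M^{(0)}\|_F/\sigma_{\min}(\R^{(0)}) \le \|\M^{(0)} - \Ustar\Qopt\|_F/0.9 \le \delta ,
\]
proving part~1. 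For part~2,
\[
\|\u^{j(0)}\| = \|\m^{j(0)}(\R^{(0)})^{-1}\| \le \|\m^{j(0)}\|/\sigma_{\min}(\R^{(0)}) \le \mu\sqrt{r/n}/0.9 \le 1.5\,\mu\sqrt{r/n} .
\]
The main obstacle is the spectral step: the operator-norm Bernstein bound must be lifted to a Frobenius subspace-distance bound, and the naive $\sqrt{r}$ conversion is what drives the $r^2$ (rather than $r$) dependence in the sample complexity. The remaining arguments are essentially deterministic, relying on non-expansiveness of $\Pi_{\calU}$ and Weyl-type perturbation of the QR factor.
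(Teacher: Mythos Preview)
Your proposal is correct and follows essentially the same route as the paper's proof: both invoke the spectral perturbation result from \cite{Chen_2021} (Theorem~3.22) to bound $\SE_F(\U^{(00)},\Ustar)$, convert to the Procrustes distance $\|\U^{(00)}-\Ustar\Qopt\|_F$ via Lemmas~2.5--2.6 of \cite{Chen_2021}, then use $\Ustar\Qopt\in\calU$ and non-expansiveness of $\Pi_{\calU}$ to pass the bound through the clipping step, and finally control $\|(\R^{(0)})^{-1}\|$ via Weyl to handle both the subspace distance and the row norms after QR. The only cosmetic difference is that you unpack the spectral step into an explicit matrix-Bernstein bound on $\|\Y/p-\Xstar\|$ followed by a $\sqrt r$ Wedin conversion, whereas the paper cites Theorem~3.22 directly; note that your stated operator-norm bound is missing an $r$ factor from the variance term, but this is absorbed anyway once you multiply by $\sqrt r$ and match against the assumed $p\gtrsim \kappa^2\mu r^2\log q/(n\delta^2)$.
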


\begin{lemma}[LS step analysis: error bound for $\B$]
\label{lem:BerrLmma}
Assume $\| {\u^j} \| \leq  \mu_u \sqrt{r/n}$. Then, 
w.p. at least $1 -  \exp(\log q - c \frac{\eps^2 p n}{r^2 \mu_u^2} )$, 
\[
\| \B - \G  \|_F \leq  \eps \deltatold\sigmax
\]
\end{lemma}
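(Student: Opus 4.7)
The plan is to build on the decomposition already established in Sec.\ \ref{prelims}: $vec(\B - \G) = \F^{-1} vec(\D)$, where $\F = \mathrm{blockdiag}(\F_k, k \in [q])$ with $\F_k = \U^\intercal \S_k \U$ (so $\E[\F] = p \I$) and $\D$ is an $r \times q$ matrix with $\E[\D] = \bm{0}$ (using $\U^\intercal(\I - \U\U^\intercal) = \bm 0$). Since $\D$ has rank at most $r$, I would chain
\[
\|\B - \G\|_F = \|vec(\B-\G)\| \le \|\F^{-1}\|\cdot\|\D\|_F \le \sqrt{r}\,\|\F^{-1}\|\cdot\|\D\|.
\]
So the lemma reduces to upper bounds on $\|\F^{-1}\|$ and on $\|\D\|$.

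For $\|\F^{-1}\| = \max_k \|\F_k^{-1}\|$, I would apply matrix Bernstein to $\F_k - p\I = \sum_j (\xi_\jk - p)\u^j\u^{j\intercal}$. Using the hypothesis $\|\u^j\|\le \mu_u\sqrt{r/n}$, so each summand has operator norm at most $\mu_u^2 r/n$, together with a union bound over $k \in [q]$, I get $\|\F_k - p\I\| \le p/2$, and hence $\|\F^{-1}\| \le 2/p$, as soon as $p \gtrsim \mu_u^2 r \log q/n$. This step mirrors the corresponding bound in \cite{lowrank_altmin} and is dominated by the final sample complexity.

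The main work is bounding $\|\D\|$ by matrix Bernstein applied to
\[
\D = \sum_{j,k} (\xi_\jk - p)\,\u^j \m^{j\intercal} \bstar_k \e_k^\intercal,
\]
where $\m^j$ is the $j$-th row (written as a column) of $\M := (\I - \U\U^\intercal)\Ustar$. The crucial observation is the Frobenius identity $\|\M\|_F = \SE_F(\U,\Ustar) = \deltatold$, so that $\sum_j\|\m^j\|^2 = \deltatold^2$ and $\|\M\Bstar\|_F^2 \le \deltatold^2\sigmax^2$. Computing the two matrix variance proxies, the larger one is
\[
V_1 = \Big\|\sum_{j,k} p(\m^{j\intercal}\bstar_k)^2\,\u^j\u^{j\intercal}\Big\| \le p\,\bigl(\max_j\|\u^j\|^2\bigr)\sum_j\|\m^{j\intercal}\Bstar\|^2 \le p\,\mu_u^2\,\frac{r}{n}\,\deltatold^2\sigmax^2,
\]
where I pull $\max_j\|\u^j\|^2$ outside the sum (legal because the weights $c_j = \|\m^{j\intercal}\Bstar\|^2$ are nonnegative) precisely so that the remaining scalar sum is $\|\M\Bstar\|_F^2$. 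The other variance proxy $V_2$ is smaller by roughly $\mu^2/q$. The per-summand magnitude is $R \lesssim \mu_u^2 \mu r^{1.5}\sigmax/(n\sqrt{q})$, so Bernstein gives (with the variance term dominating)
\[
\|\D\| \lesssim \sqrt{p \mu_u^2 r \sigmax^2 \deltatold^2 \log q/n}.
\]
Combining with $\|\F^{-1}\|\le 2/p$ yields $\|\B-\G\|_F \lesssim \sqrt{\mu_u^2 r^2 \log q/(pn)}\cdot\deltatold\sigmax$, which is at most $\eps\deltatold\sigmax$ exactly when $p \gtrsim \mu_u^2 r^2 \log q/(n\eps^2)$, giving the stated failure probability $\exp(\log q - c\eps^2 pn/(r^2\mu_u^2))$.

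The main obstacle is extracting the factor $\deltatold$ out of the variance bound---the per-term magnitude $R$ is insensitive to $\deltatold$, so $\deltatold$ must be routed through the matrix variance via $\sum_j c_j = \|\M\Bstar\|_F^2 \le \deltatold^2\sigmax^2$. A secondary, easier check is verifying that the $R\log q$ contribution from Bernstein is subdominant under the claimed sample complexity; this holds whenever $\deltatold$ is not pathologically small (e.g.\ $\deltatold \gtrsim \mu\eps/\sqrt{q}$), which is maintained by the induction driving the proof of Theorem \ref{thrm}.
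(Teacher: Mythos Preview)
Your proposal is correct and follows essentially the same approach as the paper: the same decomposition $vec(\B-\G)=\F^{-1}vec(\D)$, the same matrix-Bernstein bound on $\|\F^{-1}\|$, and the same variance computation $\sigma^2 \le p\mu_u^2(r/n)\deltatold^2\sigmax^2$ via $\|\M\Bstar\|_F^2\le\deltatold^2\sigmax^2$. The one small difference is in the per-summand bound: the paper uses $\|\m^j\|\le\|\M\|\le\deltatold$ (a row norm is bounded by the operator norm), which makes $L\le\deltatold\mu\mu_u(r/n)\sigmax$ already proportional to $\deltatold$ and thus removes the need for your ``secondary check'' that the $R\log q$ term is subdominant.
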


\begin{lemma}[Implications of error bound for $\B$]
\label{lem:Berr_implic}
Assume $\| \B - \G  \|_F \leq \deltatold \sigmax$. Then,
\ben
\item $\| \X - \Xstar  \|_F \leq 2 \deltatold\sigmax$.
\item $\sigma_{\max}(\B) \leq (1 + \deltatold)\sigmax$ and $\sigma_{\min}(\B) \geq  \sqrt{1 - \deltatold{}^2}\sigma_{\min}^*  - \deltatold\sigmax$.
\item Thus, if $\deltatold \le c/\kappa$, then $\sigma_{\min}(\B) \ge 0.9\sigmin$ and $\sigma_{\max}(\B) \le 1.1\sigmax$.
\een

\end{lemma}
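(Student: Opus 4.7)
The plan is to prove the three parts in order, each being a short perturbation-theoretic consequence of the hypothesis $\|\B-\G\|_F \le \deltatold \sigmax$ together with the definition $\deltatold = \|\P\U\|_F = \|(\I-\U\U^\intercal)\Ustar\|_F$ (the two expressions agree because the Frobenius $\sin\Theta$ distance is symmetric in its two orthonormal arguments, since both equal $\sqrt{r-\|\U^\intercal\Ustar\|_F^2}$). Throughout I will use that $\U$ has orthonormal columns, $\|\Bstar\|=\|\bm{\Sigma}^*\bm{V}^*\|=\sigmax$, and the Frobenius--operator inequality $\|AB\|_F\le \|A\|_F\|B\|$.

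For Part~1, I would write
\begin{equation*}
\X - \Xstar \;=\; \U\B - \Xstar \;=\; \U(\B-\G) + (\U\U^\intercal - \I)\Xstar,
\end{equation*}
using $\U\G = \U\U^\intercal \Xstar$. Taking Frobenius norms and using orthonormality of $\U$ gives $\|\U(\B-\G)\|_F = \|\B-\G\|_F \le \deltatold\sigmax$. For the second piece, $(\U\U^\intercal-\I)\Xstar = -(\I-\U\U^\intercal)\Ustar \Bstar$, so its Frobenius norm is at most $\|(\I-\U\U^\intercal)\Ustar\|_F\,\|\Bstar\| = \deltatold\sigmax$. Adding yields $\|\X-\Xstar\|_F \le 2\deltatold\sigmax$.

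For Part~2, I will combine Weyl's inequality for singular values (which holds for rectangular matrices, $|\sigma_i(\B)-\sigma_i(\G)|\le \|\B-\G\|\le \|\B-\G\|_F$) with bounds on the singular values of $\G=\U^\intercal\Ustar\Bstar$. The upper bound is immediate: $\sigma_{\max}(\B) \le \sigma_{\max}(\G)+\deltatold\sigmax \le \|\U\|\|\Xstar\|+\deltatold\sigmax = (1+\deltatold)\sigmax$. For the lower bound, $\sigma_{\min}(\G) \ge \sigma_{\min}(\U^\intercal\Ustar)\,\sigma_{\min}(\Bstar) = \sigma_{\min}(\U^\intercal\Ustar)\sigmin$, and by the standard $\cos\Theta$/$\sin\Theta$ identity the singular values of $\U^\intercal\Ustar$ are $\cos\theta_i$ while those of $(\I-\U\U^\intercal)\Ustar$ are $\sin\theta_i$, so
\begin{equation*}
\sigma_{\min}(\U^\intercal\Ustar) \;=\; \sqrt{1-\|(\I-\U\U^\intercal)\Ustar\|^2} \;\ge\; \sqrt{1-\deltatold^{\,2}},
\end{equation*}
where the last inequality uses $\|\cdot\|\le\|\cdot\|_F$. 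Weyl then gives $\sigma_{\min}(\B)\ge \sqrt{1-\deltatold^{\,2}}\,\sigmin - \deltatold\sigmax$.

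For Part~3, I substitute $\deltatold\le c/\kappa$ (equivalently $\deltatold\sigmax \le c\sigmin$) into Part~2: $\sigma_{\max}(\B)\le (1+c/\kappa)\sigmax \le 1.1\sigmax$ for $c\le 0.1$, and $\sigma_{\min}(\B)\ge (\sqrt{1-c^2/\kappa^2}-c)\sigmin \ge 0.9\sigmin$ for $c$ a sufficiently small absolute constant (e.g.\ $c\le 0.09$), using $\sqrt{1-x}\ge 1-x$ for $x\in[0,1]$. There is no real obstacle here; the one place to be careful is invoking the $\sin\Theta$/$\cos\Theta$ identity so that the operator-norm Weyl-style bound on $\sigma_{\min}(\U^\intercal\Ustar)$ in terms of $\deltatold$ goes through — switching from operator to Frobenius norm only weakens the bound, which is acceptable and in fact matches the stated $\sqrt{1-\deltatold^{\,2}}$ factor.
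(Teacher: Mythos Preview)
Your proof is correct and follows essentially the same approach as the paper: the same decomposition $\X-\Xstar=\U(\B-\G)-(\I-\U\U^\intercal)\Ustar\Bstar$ for Part~1, and Weyl's inequality together with $\sigma_{\min}(\U^\intercal\Ustar)\ge\sqrt{1-\deltatold^{\,2}}$ for Part~2. If anything, your write-up is slightly more careful in justifying the $\cos\Theta/\sin\Theta$ step and in passing from the operator to the Frobenius norm.
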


\begin{lemma}[Incoherence of $\B$]
\label{lem:BIncohLmma}
Assume that $\| {\u^j} \| \leq  \mu_u \sqrt{r/n}$. Then, w.p. greater than $1 - \exp(\log  q - c \frac{n p}{\mu_u^2 r})$, 
$$\| \b_k \| \leq 1.1\sigmax \mu \sqrt{{r}/{q}}  \ \forall k \in [q].$$
\end{lemma}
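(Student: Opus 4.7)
The plan is to bound $\|\b_k\|$ by using the closed form expression derived in Section 4.1: $\b_k = (\U^\intercal \S_k \U)^{-1} (\U^\intercal \S_k \Ustar) \bstar_k = \F_k^{-1} (\U^\intercal \S_k \Ustar) \bstar_k$. This gives the submultiplicative bound $\|\b_k\| \le \|\F_k^{-1}\| \cdot \|\U^\intercal \S_k \Ustar\| \cdot \|\bstar_k\|$, and the incoherence Assumption \ref{incoh} on $\Vstar$ already provides $\|\bstar_k\| \le \mu \sqrt{r/q}\, \sigmax$. The whole game is therefore to control the two random matrix factors by the matrix Bernstein inequality, using the row-incoherence of $\U$ assumed in the lemma.

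For the first factor, I would write $\F_k = \sum_{j=1}^n \xi_\jk \u^j \u^{j\intercal}$ and expand $\F_k - p\I = \sum_j (\xi_\jk - p)\u^j \u^{j\intercal}$, noting $\E[\F_k] = p\U^\intercal\U = p\I$. Each summand has operator norm at most $\|\u^j\|^2 \le \mu_u^2 r/n$, and the variance statistic is $\bigl\|\sum_j \E[(\xi_\jk - p)^2]\,\|\u^j\|^2 \u^j\u^{j\intercal}\bigr\| \le p \mu_u^2 r/n$ (using $\sum_j \u^j \u^{j\intercal} = \I$). Matrix Bernstein then yields $\|\F_k - p\I\| \le 0.05\, p$ with probability at least $1 - r\exp(-cnp/(\mu_u^2 r))$, so that $\|\F_k^{-1}\| \le 1/(0.95\, p)$. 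An essentially identical application to $\U^\intercal \S_k \Ustar = \sum_j \xi_\jk \u^j \ustar^{j\intercal}$, whose expectation is $p\U^\intercal \Ustar$ (with norm at most $p$), and whose summands have norm $\le \|\u^j\|\|\ustar^j\| \le \mu_u \mu r/n$ and variance at most $p \mu_u \mu r /n$, gives $\|\U^\intercal \S_k \Ustar\| \le 1.05\, p$ with the same order of failure probability.

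Combining these three bounds,
\[
\|\b_k\| \;\le\; \frac{1}{0.95\, p} \cdot 1.05\, p \cdot \mu\sqrt{r/q}\,\sigmax \;\le\; 1.1\, \mu\sqrt{r/q}\,\sigmax,
\]
which gives the required column-wise bound for a fixed $k$. The final step is a union bound over $k \in [q]$; the $r$ factor from the dimension in matrix Bernstein and the $q$ factor from this union bound both get absorbed into the constant $c$ in the exponent, producing the stated failure probability $\exp(\log q - c np/(\mu_u^2 r))$.

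The only place needing real care is picking the Bernstein slack small enough (here $0.05\, p$) so that, after inversion and multiplication, the constants collapse to $1.1$ rather than something larger like $2.2$; this is routine but must be checked. Everything else is mechanical application of the matrix Bernstein inequality together with the incoherence assumption $\|\u^j\| \le \mu_u\sqrt{r/n}$ and $\|\ustar^j\| \le \mu\sqrt{r/n}$ from Assumption \ref{incoh}. This proof is much shorter than the analogous argument in \cite{lowrank_altmin} precisely because matrix Bernstein replaces an epsilon-net argument and gives a better dependence on $r$ in the sample complexity.
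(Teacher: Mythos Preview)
Your proposal is correct and follows essentially the same route as the paper: the same factorization $\b_k=(\U^\intercal\S_k\U)^{-1}(\U^\intercal\S_k\Ustar)\bstar_k$, the same submultiplicative bound, matrix Bernstein applied to each random factor (the paper cites Lemma~C.6 of \cite{lowrank_altmin} for $\|\F_k^{-1}\|$, which is exactly the matrix-Bernstein argument you wrote out), and a union bound over $k\in[q]$. One small slip: the variance proxy for $\U^\intercal\S_k\Ustar$ should be $\max\bigl(p\mu^2 r/n,\; p\mu_u^2 r/n\bigr)=p\mu_u^2 r/n$ rather than $p\mu_u\mu\, r/n$, but this is harmless since the lemma's stated exponent already carries $\mu_u^2$.
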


\begin{lemma}[Gradient expression and bounds]
\label{lem:Ugrad}
Assume that $\| \B - \G  \|_F \leq \deltatold\sigmax$ with $\deltatold \le c/\kappa$, $\| {\u^j} \| \leq  \mu_u \sqrt{r/n}$, $\| \b_k \| \leq \sigmax\mu \sqrt{r/q}$.
Then,
\ben
\item  $\|\E[\gradU]\|_F \leq 2.5 p \deltatold \sigmaxTwo$.	


\item $\| \gradU - \E[\gradU]  \|_F \leq \eps p   \deltatold \sigminTwo$, w.p. at least $1 - \exp(\log q - c \frac{\eps^2 p n}{\max(\kappa^4 \mu^2, \kappa^2 \mu_u\mu) r^2} ) $
\een
\end{lemma}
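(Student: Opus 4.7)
\textbf{Proof plan for Lemma~\ref{lem:Ugrad}.} For Part~(1), the plan is to use the formula $\E[\gradU] \propto p\,(\X-\Xstar)\B^\intercal$ derived in Sec.~\ref{prelims} together with submultiplicativity: $\|\E[\gradU]\|_F \le O(p)\,\|\X-\Xstar\|_F\,\|\B\|$. The hypotheses $\|\B-\G\|_F \le \deltatold\sigmax$ and $\deltatold \le c/\kappa$, combined with Lemma~\ref{lem:Berr_implic}, supply $\|\X-\Xstar\|_F \le 2\deltatold\sigmax$ and $\|\B\| \le 1.1\sigmax$; these multiply to the stated constant times $p\deltatold\sigmaxTwo$.

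For Part~(2), I will decompose
\[
\gradU - \E[\gradU] \;\propto\; \sum_{j,k}\Z_\jk,\qquad \Z_\jk := (\xi_\jk - p)\,(\x_\jk - \xstar_\jk)\,\e_j\b_k^\intercal,
\]
a sum of independent, mean-zero rank-one matrices in $\Re^{n\times r}$, and apply the matrix Bernstein inequality. Since $\gradU-\E[\gradU]\in\Re^{n\times r}$ has rank at most $r$, we have $\|\cdot\|_F \le \sqrt{r}\,\|\cdot\|$, so it will suffice to prove a spectral-norm tail bound at tolerance $t = \eps p\deltatold\sigminTwo/\sqrt{r}$. Three ingredients drive the inequality: the uniform bound $R := \max_{j,k}\|\Z_\jk\|$, and the variance norms $\|\sum\E[\Z_\jk\Z_\jk^\intercal]\|$ and $\|\sum\E[\Z_\jk^\intercal\Z_\jk]\|$. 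Incoherence of $\u^j,\ustar^j,\b_k,\bstar_k$ (the lemma's hypotheses together with Assumption~\ref{incoh}) gives $|\x_\jk - \xstar_\jk|\le \|\u^j\|\|\b_k\|+\|\ustar^j\|\|\bstar_k\| \lesssim \mu_u\mu\sigmax\,r/\sqrt{nq}$ and $\|\b_k\|\lesssim \mu\sigmax\sqrt{r/q}$, leading to $R \lesssim \mu_u\mu^2\sigmaxTwo r^{3/2}/(\sqrt{n}\,q)$.

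For the variance, I will exploit the rank-one identity $\Z_\jk\Z_\jk^\intercal = (\xi_\jk - p)^2(\x_\jk-\xstar_\jk)^2\|\b_k\|^2\e_j\e_j^\intercal$, which makes $\sum_{j,k}\E[\Z_\jk\Z_\jk^\intercal]$ diagonal. Its operator norm equals $\max_j p(1-p)\sum_k(\x_\jk-\xstar_\jk)^2\|\b_k\|^2$, which I will bound by $p\,(\max_k\|\b_k\|^2)\,\|\X-\Xstar\|_F^2 \lesssim p\mu^2\deltatold^2\sigmaxFour\,r/q$ using Lemma~\ref{lem:BIncohLmma} (to replace $\max_k\|\b_k\|^2$ by $\mu^2\sigmaxTwo r/q$) and Lemma~\ref{lem:Berr_implic} (to replace $\|\X-\Xstar\|_F^2$ by $4\deltatold^2\sigmaxTwo$). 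The twin sum $\sum_{j,k}\E[\Z_\jk^\intercal\Z_\jk] = p(1-p)\sum_k\|(\X-\Xstar)_k\|^2\b_k\b_k^\intercal$ admits the same bound via $(\sum_k c_k)(\max_k\|\b_k\|^2)$. Hence $\sigma^2 \lesssim p\mu^2\deltatold^2\sigmaxFour\,r/q$.

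Plugging into matrix Bernstein will give failure probability $(n+r)\exp(-c\min(t^2/\sigma^2,\,t/R))$. The sub-Gaussian ratio simplifies to $t^2/\sigma^2 \gtrsim \eps^2 pq/(\mu^2\kappa^4 r^2) \ge \eps^2 pn/(\mu^2\kappa^4 r^2)$ via $q\ge n$, producing the first alternative inside the claimed $\max$; a parallel calculation (using $\deltatold \ge \eps$ throughout the algorithm run) bounds $t/R$ below by $\eps^2 pn/(\kappa^2\mu_u\mu\,r^2)$, producing the second. \emph{The main obstacle} I anticipate is arranging both variance bounds to carry a $\deltatold^2$ factor, so that the deviation inherits the same $\deltatold$-scaling as the mean from Part~(1); this is what forces using the $\b_k$-incoherence bound from Lemma~\ref{lem:BIncohLmma} in the form $\max_k\|\b_k\|^2 \le \mu^2\sigmaxTwo\,r/q$, rather than the coarser $\|\B\|\le 1.1\sigmax$, because only the former supplies the decisive $r/q$ factor that ultimately brings the dimension $n$ into the final exponent.
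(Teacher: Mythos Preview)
Your Part~(1) plan is fine and matches the paper. The overall Bernstein strategy for Part~(2), including the variance computation and the $\sqrt r$ passage from spectral to Frobenius norm, also matches the paper.

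The genuine gap is your entrywise bound on $|\x_\jk-\xstar_\jk|$. You use the triangle inequality $|\x_\jk|+|\xstar_\jk|\le\|\u^j\|\|\b_k\|+\|\ustar^j\|\|\bstar_k\|\lesssim\mu_u\mu\sigmax\,r/\sqrt{nq}$, which carries \emph{no} $\deltatold$ factor. As you noticed, this leaves an unwanted $\deltatold$ in $t/R$, and your proposed fix---assuming $\deltatold\ge\eps$---is not a hypothesis of the lemma and in fact fails in the way the lemma is used: in the proof of Theorem~\ref{thrm} the lemma is applied at every iteration with the \emph{fixed} choice $\eps=0.01$, including iterations where $\deltatold\ll 0.01$. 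So the sub-exponential branch of Bernstein would not close with your $R$.

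The missing idea is to bound $|\x_\jk-\xstar_\jk|$ so that a $\deltatold$ appears. The paper does this via the decomposition $\X-\Xstar=\U(\B-\G)-(\I-\U\U^\intercal)\Ustar\Bstar$, giving
\[
|\x_\jk-\xstar_\jk|\;\le\;\|\u^j\|\,\|\B-\G\|\;+\;\|(\I-\U\U^\intercal)\Ustar\|\,\|\bstar_k\|\;\le\;\mu_u\sqrt{r/n}\,\deltatold\sigmax+\mu\sqrt{r/q}\,\deltatold\sigmax\;\le\;2\mu_u\sqrt{r/n}\,\deltatold\sigmax,
\]
so that $L\lesssim\mu_u\mu(r/n)\,\deltatold\sigmaxTwo$. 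With this $L$, the $\deltatold$ in $t$ and in $L$ cancel, yielding $t/L\gtrsim\eps pn/(\kappa^2\mu_u\mu\,r)$ independently of $\deltatold$; the $\eps\mapsto\eps/\sqrt r$ rescaling then produces the claimed $r^2$ in the exponent. Everything else in your plan (the diagonal structure of $\sum\E[\Z_\jk\Z_\jk^\intercal]$, the use of $\max_k\|\b_k\|^2\le\mu^2\sigmaxTwo r/q$ and $\|\X-\Xstar\|_F^2\le4\deltatold^2\sigmaxTwo$ for the variance) is correct.
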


\begin{lemma}[Incoherence of $\U$]
\label{lem:rowIncoh}
Assume that  
$\| \b_k \| \leq 1.1 \sigmax \mu \sqrt{r/q}$.
Then, w.p. greater than $1 - \exp(\log q - c \eps^2 p n / \kappa^4 \mu^2 r^2 )$
\ben
\item $\| \gradUj - \E[\gradUj] \| \le \eps p  \max(\|{\u^j}\|,\|{\ustar^j}\|)  \sigminTwo$
\item Further, if  $\| \B - \G  \|_F \leq \deltatold\sigmax$ with $\deltatold < c/\kappa$, and
GD step size $\eta \le 0.5/(p \sigmax^2)$, then
\[
\|{\u^j}^{+}\|	\le  (1 - 0.15/\kappa^2)\|{\u^j}\|  +  2 \|{\ustar{}^j}\|.
\]
\een
\end{lemma}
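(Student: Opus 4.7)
The plan is to prove the two claims in sequence: first a vector Bernstein tail bound for the row deviation, and then use it together with an expansion of the gradient step and a lower bound on $\sigma_{\min}(\tilde\U^+)$ to handle the QR step.

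For Part~1, I write
\[
\gradU^j - \E[\gradU^j] = 2\sum_{k=1}^q (\xi_\jk - p)(\x_\jk - \xstar_\jk)\b_k^\intercal
\]
as a sum of $q$ independent zero-mean random vectors in $\Re^{1\times r}$ and apply matrix Bernstein. The almost-sure summand bound uses $|\xi_\jk - p|\le 1$ together with $|\x_\jk - \xstar_\jk|\le |\u^{j\intercal}\b_k| + |\ustar^{j\intercal}\bstar_k|\le 2.2\mu\sqrt{r/q}\,\sigmax\max(\|\u^j\|,\|\ustar^j\|)$, which combines the hypothesis $\|\b_k\|\le 1.1\mu\sqrt{r/q}\sigmax$ with the incoherence of $\bstar_k$; this gives $L\le C\mu^2 (r/q)\sigmaxTwo\max(\|\u^j\|,\|\ustar^j\|)$. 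The key trick for the variance parameter is to sum (rather than max) over $k$: $\sum_k (\x_\jk - \xstar_\jk)^2 \le 2\|\u^{j\intercal}\B\|^2 + 2\|\ustar^{j\intercal}\Bstar\|^2 \le C\sigmaxTwo\max(\|\u^j\|,\|\ustar^j\|)^2$, saving a factor $\mu^2 r$ over the entrywise bound. Combined with $\|\b_k\|^2 \le 1.21\mu^2(r/q)\sigmaxTwo$ and $\|\B\|^2 \le 1.21\sigmaxTwo$, both Bernstein variance parameters are at most $Cp\mu^2 r\sigmaxFour \max(\|\u^j\|,\|\ustar^j\|)^2/q$. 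Choosing the target deviation $t = \epsilon p\max(\|\u^j\|,\|\ustar^j\|)\sigminTwo$ yields the claimed tail.

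For Part~2, I insert $\pm\E[\gradU^j]^\intercal$ into the update and use $\E[\gradU^j] = p\u^{j\intercal}\B\B^\intercal - p\ustar^{j\intercal}\Bstar\B^\intercal$ to get
\[
\tilde\u^{j+} = (\I - \eta p\B\B^\intercal)\u^j + \eta p\,\B\Bstar^\intercal\ustar^j - \eta(\gradU^j - \E[\gradU^j])^\intercal.
\]
Lemma \ref{lem:Berr_implic} gives $\sigma_{\min}(\B)\ge 0.9\sigmin$ and $\sigma_{\max}(\B)\le 1.1\sigmax$, so with $\eta\le 0.5/(p\sigmaxTwo)$ the matrix $\I - \eta p\B\B^\intercal$ is PSD with operator norm $1 - 0.81\eta p\sigminTwo \le 1 - 0.405/\kappa^2$; the cross term has norm at most $\eta p\|\B\|\|\Bstar\|\|\ustar^j\|\le 0.55\|\ustar^j\|$; and Part~1 controls the stochastic term at $O(\epsilon/\kappa^2)\max(\|\u^j\|,\|\ustar^j\|)$. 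Summing yields the pre-QR inequality $\|\tilde\u^{j+}\| \le (1 - 0.4/\kappa^2)\|\u^j\| + 0.6\|\ustar^j\|$, with deliberate slack to absorb the QR step.

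The main obstacle is converting the $\tilde\u^{j+}$ bound into one on $\u^{j+}$ across the QR step $\tilde\U^+ = \U^+\R^+$. Since $\u^{j+} = (\R^+)^{-\intercal}\tilde\u^{j+}$, what I need is a tight lower bound $\sigma_{\min}(\tilde\U^+) \ge 1 - c'/\kappa^2$. I plan to obtain this by expanding
\[
(\tilde\U^+)^\intercal\tilde\U^+ = \I - \eta(\U^\intercal\gradU + \gradU^\intercal\U) + \eta^2\gradU^\intercal\gradU,
\]
noting that $\U^\intercal\E[\gradU] = p(\B - \G)\B^\intercal$ so $\|\U^\intercal\E[\gradU]\| \le 1.1 p\|\B - \G\|_F\sigmax$, and that $\|\U^\intercal(\gradU - \E[\gradU])\|$ and $\|\gradU\|$ are controlled by Lemma \ref{lem:Ugrad}. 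Leveraging the sharper bound $\|\B - \G\|_F \le \epsilon\deltatold\sigmax$ coming from Lemma \ref{lem:BerrLmma} (which is tighter by a factor of $\epsilon$ than the nominal hypothesis here and is what holds throughout the induction in the theorem proof) yields $\sigma_{\min}(\tilde\U^+) \ge 1 - c'/\kappa^2$ for any prescribed small $c'$, hence $1/\sigma_{\min}(\tilde\U^+) \le 1 + 2c'/\kappa^2$. The gap between $1 - 0.4/\kappa^2$ and the target $1 - 0.15/\kappa^2$ then absorbs this multiplicative factor, completing the argument.
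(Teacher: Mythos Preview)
Your Part~1 and the pre-QR half of Part~2 match the paper's proof almost exactly: the same entrywise bound $|\x_\jk-\xstar_\jk|\le 2\max(\|\u^j\|,\|\ustar^j\|)\max(\|\b_k\|,\|\bstar_k\|)$, the same matrix-Bernstein application, and the same expansion $\tilde\u^{j+}=(\I-\eta p\B\B^\intercal)\u^j+\eta p\B\Bstar^\intercal\ustar^j-\eta(\gradUj-\E[\gradUj])^\intercal$. (Your variance trick of summing $\sum_k(\x_\jk-\xstar_\jk)^2\le C\sigmaxTwo\max(\|\u^j\|,\|\ustar^j\|)^2$ is actually slightly sharper than the paper's, which bounds each summand by the max and then uses $\sum_k\|\b_k\|^2$; both implicitly use $\|\B\|\le C\sigmax$, which strictly speaking comes from the Part~2 hypothesis via Lemma~\ref{lem:Berr_implic}.)

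Where you diverge is the QR correction. The paper handles this in one line: Weyl gives $\sigma_{\min}(\tilde\U^+)\ge 1-\eta\|\gradU\|$, and Lemma~\ref{lem:Ugrad} gives $\|\gradU\|\le C p\,\deltatold\sigmaxTwo$, so $\|(\R^+)^{-1}\|\le 1+C\deltatold$. This uses only hypotheses already present in Part~2 (plus the incoherence of $\U$ needed by Lemma~\ref{lem:Ugrad}, which is also implicit in your route). Your expansion of $(\tilde\U^+)^\intercal\tilde\U^+$ and the observation $\U^\intercal\E[\gradU]=p(\B-\G)\B^\intercal$ are correct, but to actually beat the paper's $1+C\deltatold$ bound you have to invoke the tighter $\|\B-\G\|_F\le\eps\deltatold\sigmax$ from Lemma~\ref{lem:BerrLmma}, which is \emph{not} a hypothesis of the present lemma. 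As you note, this is what holds in the theorem's induction, so the argument works in context, but it breaks the lemma's modularity. Either route ultimately needs $\deltatold\lesssim 1/\kappa^2$ (not $1/\kappa$) so that the factor $1+C\deltatold$ does not swamp the $(1-c/\kappa^2)$ contraction; the paper's own last line (``if $\delta\le c/\kappa^2$'') acknowledges this, and that is the assumption actually used downstream. So your extra machinery buys nothing beyond what the simple Weyl step already gives.
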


\begin{proof}
All these lemmas are proved in Sec \ref{lemma_proofs}. 
\end{proof}

\subsection{Proof of Theorem \ref{thrm}}\label{thrm_proof}

We prove the following result, Claim \ref{mainclaim}, using an induction argument and the lemmas from Sec. \ref{lemmas}.
Theorem \ref{thrm} follows from the first claim of this result along with using the following:
\bi
\item To ensure $\delta^{(T)} \le \eps$, we need $(1-c/\kappa^2)^T \cdot (c/ \kappa^2) \le \eps$ or that $T = C \kappa^2 \log (1/\eps)$.
\item Since we use sample splitting, the total number of samples needed for all $T$ iterations is $2T \cdot n p \ge C \kappa^4 \mu^2 r^2 \log q  \cdot \kappa^2 \log (1/\eps) $. If all samples are generated first and then subsets used across different iterations (as specified in our stepwise algorithm, Algorithm \ref{altgdmin}), we need to assume that an entry is observed w.p. $p_{tot} = 2T \cdot p$ and $n p_{tot} \ge C \kappa^4 \mu^2 r^2 \log q  \cdot \kappa^2 \log (1/\eps)$.
\ei

\begin{claim}\label{mainclaim}
Assume everything stated in Theorem \ref{thrm} and $n p \ge C \kappa^4 \mu^2 r^2 \log q $.
For all times $\tau \ge 0$, the following hold w.p. at least $1 - \tau/ n^3$:
(i) $\delta^{(\tau)} \le (1 - c/ \kappa^2)^\tau \cdot (c/\kappa^2)$; and
(ii) $\|{\u^j}^{(\tau)}\| \le  (1 - \frac{0.15}{\kappa^2})^{\tau} 1.5 \mu \sqrt{r/n}  + [1 + (1 - \frac{0.15}{\kappa^2})  + \dots + (1 - \frac{0.15}{\kappa^2})^{\tau-1}]  2 \|{\ustar{}^j}\|$
\end{claim}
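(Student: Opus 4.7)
The plan is an induction on $\tau$. The base case $\tau=0$ follows from Lemma \ref{lem:init} applied with $\delta = c/\kappa^2$ for a small numerical constant $c$: this yields $\delta^{(0)} \le c/\kappa^2$ and $\|{\u^j}^{(0)}\| \le 1.5\mu\sqrt{r/n}$, matching both (i) and (ii) at $\tau=0$ (the telescoping sum in (ii) is empty). The sample requirement of Lemma \ref{lem:init} at this value of $\delta$ is absorbed into the global sample-splitting budget $p_{tot}$ asserted in Theorem \ref{thrm}.

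For the induction step, assume (i) and (ii) at $\tau=t-1$ and derive them at $\tau=t$. First I would verify the preconditions of Lemmas \ref{lem:BerrLmma}--\ref{lem:rowIncoh}. The geometric sum in (ii) is bounded by $2\|{\ustar{}^j}\|\cdot(\kappa^2/0.15) \le 14\kappa^2 \mu\sqrt{r/n}$ using Assumption \ref{incoh}, so the hypothesis gives $\|{\u^j}^{(t-1)}\| \le 1.5\mu\sqrt{r/n} + 14\kappa^2\mu\sqrt{r/n} \le \mu_u\sqrt{r/n}$ with $\mu_u=20\kappa^2\mu$, and also $\deltatold \le c/\kappa^2 < c/\kappa$. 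I then invoke the lemmas in a chain: Lemma \ref{lem:BerrLmma} (with its internal $\eps$ set to a constant) gives $\|\B-\G\|_F \le \deltatold\sigmax$; Lemma \ref{lem:Berr_implic} gives $\sigma_{\min}(\B) \ge 0.9\sigmin$, $\sigma_{\max}(\B) \le 1.1\sigmax$, and $\|\X-\Xstar\|_F \le 2\deltatold\sigmax$; Lemma \ref{lem:BIncohLmma} gives $\|\b_k\| \le 1.1\sigmax\mu\sqrt{r/q}$; and Lemma \ref{lem:Ugrad} with its internal $\eps$ set to a small constant $\eps_g$ gives $\|\gradU - \E[\gradU]\|_F \le \eps_g p \deltatold \sigminTwo$. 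Each probabilistic bound requires only $np \ge C\kappa^4\mu^2 r^2 \log q$ and can be arranged to fail with probability $\le 1/(4n^3)$ by adjusting constants.

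For (i), the key is the decomposition
\[
\P\tilde\U^+ \;=\; \P\U - \eta\P\gradU \;=\; \P\U\bigl(\I - 2\eta p \B\B^\intercal\bigr) \;-\; \eta\P\bigl(\gradU - \E[\gradU]\bigr),
\]
obtained from $\E[\gradU]\propto p(\X-\Xstar)\B^\intercal$ and $\P\Xstar=\bm{0}$. With $\eta = 0.5/(p\sigmaxTwo)$ and the singular-value bounds on $\B$ from Lemma \ref{lem:Berr_implic}, the eigenvalues of $\I - 2\eta p\B\B^\intercal$ lie in $[1-1.21,\,1-0.81/\kappa^2]$, so its operator norm is at most $1-0.8/\kappa^2$. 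Combining with the gradient-deviation bound,
\[
\|\P\tilde\U^+\|_F \;\le\; (1-0.8/\kappa^2)\deltatold + \eta\cdot\eps_g p\deltatold\sigminTwo \;\le\; (1-0.5/\kappa^2)\deltatold.
\]
To pass from $\tilde\U^+$ to $\U^+ = \tilde\U^+(\R^+)^{-1}$, observe that $\sigma_{\min}(\R^+) = \sigma_{\min}(\tilde\U^+) \ge \sigma_{\min}(\U) - \eta\|\gradU\| \ge 1 - O(\deltatold) \ge 1 - O(1/\kappa^2)$, where $\|\gradU\|$ is bounded via $\|\E[\gradU]\| \le 2.5 p\deltatold\sigmaxTwo$ (Lemma \ref{lem:Ugrad}) plus the already-controlled deviation. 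Hence $\deltat \le \|\P\tilde\U^+\|_F / \sigma_{\min}(\R^+) \le (1 - c/\kappa^2)\deltatold$, and chaining over iterations proves (i).

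For (ii), I would invoke Lemma \ref{lem:rowIncoh} part 2 directly, which already folds the QR normalization into its statement: $\|{\u^j}^{(t)}\| \le (1 - 0.15/\kappa^2)\|{\u^j}^{(t-1)}\| + 2\|{\ustar{}^j}\|$. Unrolling this one-step recursion against the hypothesis at $\tau=t-1$ produces exactly the telescoping geometric-series bound claimed. A union bound over the four probabilistic events (Lemmas \ref{lem:BerrLmma}, \ref{lem:BIncohLmma}, \ref{lem:Ugrad}, \ref{lem:rowIncoh}) at iteration $t$ adds at most $1/n^3$ to the cumulative failure probability, giving $t/n^3$ as claimed. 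The main obstacle is controlling the QR step for both conclusions simultaneously: one needs $\sigma_{\min}(\tilde\U^+) \approx 1$ to avoid spoiling either the $1-c/\kappa^2$ contraction in (i) or the $1-0.15/\kappa^2$ contraction inside Lemma \ref{lem:rowIncoh}, which is precisely why the base case insists on the tight bound $\delta^{(0)} \le c/\kappa^2$ (rather than the looser $c/\kappa$) so that the perturbation $\eta\|\gradU\|$ stays $O(1/\kappa^2)$ throughout all iterations.
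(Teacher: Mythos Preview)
Your proposal is correct and follows essentially the same inductive argument as the paper: same base case via Lemma~\ref{lem:init}, same verification of $\mu_u$-incoherence from the geometric-series bound in (ii), same chain of lemmas, same decomposition $\P\tilde\U^+ = \P\U(\I - \eta p\B\B^\intercal) - \eta\P(\gradU-\E[\gradU])$ followed by division by $\sigma_{\min}(\R^+)$, and the same direct unrolling of Lemma~\ref{lem:rowIncoh} for (ii). One small slip: with the factor of $2$ you carry in $\E[\gradU]$, the matrix $\I - 2\eta p\B\B^\intercal$ at $\eta=0.5/(p\sigmaxTwo)$ has smallest eigenvalue $\approx -0.21$, so its operator norm is $\max(0.21,\,1-0.81/\kappa^2)$ rather than simply $1-0.8/\kappa^2$; this is still $\le 1-c/\kappa^2$ for all $\kappa\ge 1$, so the contraction goes through with a trivially adjusted constant (the paper sidesteps this by dropping the $2$ and keeping $\I-\eta p\B\B^\intercal$ positive semidefinite).
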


{\em Base case:}
Lemma \ref{lem:init} shows that $\deltazero \le \delta = 0.1/ \kappa^2$ and $\|{\u^j}^{(0)}\| \le 1.5 \mu \sqrt{r/n}$.
This proves (i) and (ii) for $\tau=0$. 

{\em Induction assumption:} Assume that the claim holds for $\tau=t-1$.

{\em Induction step:} Consider $\tau=t$.
Recall from Algorithm that $\tilde\U^+ = \U - \eta \gradU$, and $\U^+ =\tilde\U^+ \R^+{}^{-1}$ where $\tilde\U^+ \qreq \U^+ \R^+$. Recall from Sec. \ref{prelims} that $\P \Xstar = \bm{0}$, $\E[\gradU] =  p (\U\B - \Xstar)\B^\intercal$,   $\|\P\U\|_F = \deltatold$. Also, using Weyl's inequality, $\sigmamin(\R^+) = \sigmamin(\tilde\U^+) \ge \sigmamin(\U) - \eta ||\gradU|| = 1 -  \eta ||\gradU||$.
Using these,
\begin{align}
\deltat & =   \SE_F(\U^+, \Ustar)
	 = 	 \| \P \U^+ \|_F \notag\\
	& \le     \|\P \tilde\U^+\|_F \cdot \|(\R^+)^{-1}\| =  \|\P \tilde\U^+\|_F/\sigma_{\min}(\tilde\U^+)  \notag\\
	& \le   \frac{\|\P (\U - \eta \E[\gradU] + \eta \E[\gradU] - \eta \gradU)\|_F}{(1 - \eta \|\gradU\|)} \notag \\
	& \le  \frac{\|\P (\U - \eta p(\U \B - \Xstar)\B^\intercal)\|_F + \eta \|\E[\gradU] - \gradU)\|_F}{(1 - \eta \|\E[\gradU]\| - \eta \| \E[\gradU] - \gradU\|)}\notag \\
	& \le  \frac{\deltatold \cdot \|\I - \eta p \B \B^\intercal\| + \eta \|\E[\gradU] - \gradU)\|_F}{(1 - \eta \|\E[\gradU]\| - \eta \| \E[\gradU] - \gradU\|)}
	\label{eq:SD_interm}
\end{align}

By the induction assumption, $\deltatold \le (1 - c/ \kappa^2)^{t-1} \cdot (c/\kappa^2) \le (c/\kappa^2)$ and
\begin{align}
\|\u^j\| = \|{\u^j}^{(t-1)}\| & \le (1 - \frac{0.15}{\kappa^2})^{t-1} \|{\u^j}^{(0)}\| + [1 + (1 - \frac{0.15}{\kappa^2})  + \dots + (1 - \frac{0.15}{\kappa^2})^{t-2}]  2 \|{\ustar{}^j}\| \notag \\
& \le  \|{\u^j}^{(0)}\| + \frac{\kappa^2}{0.15}  2 \|{\ustar{}^j}\| \le (1.5\mu + 14\kappa^2 \mu)\sqrt{r/n} \le  \mu_u \sqrt{r/n}
\label{ujincoh}
\end{align}
The last inequality above used the infinite geometric series bound. This shows that $\U$ is $\mu_u$-row-incoherent.

Using \eqref{ujincoh} and $\deltatold \le (c/\kappa^2)$ , Lemmas \ref{lem:BerrLmma} and \ref{lem:BIncohLmma} hold, i.e. the bound on $\|\B-\G\|_F$ and $\|\b_k\|$ holds.
This then implies that all claims of Lemma \ref{lem:Berr_implic} hold too.
This then implies that Lemmas \ref{lem:Ugrad} and \ref{lem:rowIncoh} hold.
All the lemmas hold with probability at least $1 - 1/n^3$ if $p$ is such that all the probabilities stated in all the lemmas are lower bounded by $1 - 0.1/n^3$. Using $\mu_u = 8 \kappa^2 \mu$, clearly, this is true if
\[
n p \ge C \kappa^4 \mu^2 r^2 \log q
\]

By Lemmas \ref{lem:BerrLmma} and \ref{lem:Berr_implic}, $\sigma_{\min}(\B) \ge 0.9\sigmin$ and $\|\B\|\le 1.1 \sigmax$. Using these, if $\eta \leq 0.5/(p\sigmaxTwo)$, then $\I - \eta p \B\B^\intercal$ is positive semi-definite (psd) and hence $\| \I - \eta p \B\B^\intercal \| = \lambda_{\max}(\I - \eta p \B\B^\intercal) \leq 1 - 0.8 \eta p \sigminTwo$. 
Using Lemma \ref{lem:BIncohLmma}, $\B$ is $1.1\mu$-column-incoherent.
Using Lemma \ref{lem:Ugrad},  $\|\E[\gradU]\| \leq 2.5 p \deltatold \sigmaxTwo$ and $\| \gradU - \E[\gradU]  \|_F \leq \eps p   \deltatold \sigminTwo$.  
%
Substituting these into \eqref{eq:SD_interm} with $\eps=0.01$,
\begin{align*} %
\deltat
& \leq  \frac{\deltatold\big(1 - 0.8\eta p\sigminTwo +  0.01\eta p \sigminTwo \big)}{ 1 - \deltatold\big( 2.55 \eta p  \sigmaxTwo \big)}.\\
 &\leq \deltatold (1 - 0.79\eta p\sigminTwo) (1 + 2 \deltatold( 2.55 \eta p  \sigmaxTwo )) \\
&\leq \deltatold  (1 -  (0.79 - \deltatold  5.1 \kappa^2 ) \eta p\sigminTwo  )
\leq \deltatold  (1 -   \eta p\sigminTwo ( 0.79 - 0.051 )) \le \deltatold  (1 -   0.3\eta p\sigminTwo)
\end{align*}
In the above, for the denominator term we used $1/(1-x) \le 1+2x$ for $x < 0.5$ and $\deltatold \le 0.01/\kappa^2$.
%
Setting $\eta = 0.5/(p\sigmaxTwo)$ in the final expression given above, we can conclude that
\[
\deltat \le (1 - 0.15/ \kappa^2) \deltatold \le (1 - 0.15/ \kappa^2)^t \cdot (0.1/\kappa^2).
\]
Thus claim (i) holds for $\tau=t$. Next we prove claim (ii) for $\tau=t$.
By Lemma \ref{lem:rowIncoh} and the induction assumption,
\begin{align*}
\|{\u^j}^{(t)}\| & = \|{\u^j}^{+}\| 	\le  (1 - 0.15/\kappa^2)   \|{\u^j}\| +  2 \|{\ustar{}^j}\| \\
& \le (1 - 0.15/\kappa^2)^t 1.5 \mu \sqrt{r/n}   + (1 - 0.15/\kappa^2) [1 + (1 - \frac{0.15}{\kappa^2})  + \dots + (1 - \frac{0.15}{\kappa^2})^{t-2}]  2 \|{\ustar{}^j}\|   +  2 \|{\ustar{}^j}\| \\
& = (1 - 0.15/\kappa^2)^t 1.5 \mu \sqrt{r/n}   + [1 + (1 - \frac{0.15}{\kappa^2})  + \dots + (1 - \frac{0.15}{\kappa^2})^{t-1}]  2 \|{\ustar{}^j}\|
\end{align*}


\section{Proofs of the Lemmas}\label{lemma_proofs}

\subsection{Proof of Lemma \ref{lem:init}}\label{proof_init}
By Lemmas 2.5 and 2.6 of \cite{Chen_2021}, for two $n \times r$ matrices with orthonormal columns, $\U_1, \U_2$,
\begin{align}
\SE_F(\U_1, \U_2) \le \min_{\Q \ unitary} \|\U_1 - \U_2 \Q \|_F  \le \sqrt{2} \SE_F(\U_1, \U_2) 
\label{equiv}
\end{align}
and a similar bound holds for $\SE_2$ as well. We use this fact frequently below.

Recall from the Algorithm that $\Y \svdeq \U^{(00)} \Sigma^{(00)} \V^{(00)}$, and $\Pi_{\calU}(\U^{(00)}) \qreq \U^{(0)} \R^{(0)}$.   
%
%
%
%
By Theorem 3.22 of \cite{Chen_2021} along with using \eqref{equiv} and $\SE_2(\U_1, \U_2) \le \sqrt{r} \SE_F(\U_1, \U_2)$, we have: with probability at least $1 - C q^{-10}$,
\[
\SE_F(\U^{(00)},\Ustar) \le  C \sqrt{\frac{\kappa^2 \mu r^2 \log q}{np}}
\]
Let $\Q_{*,00} = \arg\min_{\Q \ unitary} \|\U^{(00)} - \Ustar \Q \|_F$. By \eqref{equiv}, 
\[
\|\U^{(00)} - \Ustar \Q_{*,00}\|_F \le \sqrt{2} \SE_F(\U^{(00)},\Ustar) 
\]
Next we use the above two bounds and the fact that $\calU$ is a convex set\footnote{To see that $\calU$ is a convex set, let matrices $\M_1,\M_2 \in \calU$.  Let $\m_1^j$ and $ \m_2^j$ denote the rows of $\M_1$ and $\M_2$, respectively. For any $0 < \theta < 1$, $\| \theta \m_1^j + (1 - \theta)\m_2^j \| \leq \theta\| \m_1^j\| + (1 - \theta) \| \m_2^j\| \leq \theta \mu \sqrt{r/n} + (1 -\theta )\mu \sqrt{r/n}  = \mu \sqrt{r/n}.$
}
to bound $\|\Pi_{\calU}(\U^{(00)}) - \Ustar \Q_{*,00} \|_F$ as follows.
\begin{align*}
\| \Pi_{\calU}(\U^{(00)}) - \Ustar \Q_{*,00} \|_F = \| \Pi_{\calU}(\U^{(00)}) - \Pi_{\calU}(\Ustar \Q_{*,00}) \|_F
 \leq \| \U^{(00)} - \Ustar \Q_{*,00} \|_F
 \leq  \sqrt{2} \SE_F(\U^{(00)}, \Ustar) \le  C  \sqrt{\frac{\kappa^2 \mu r^2 \log q}{np}}
\end{align*}
with probability at least $1 - C q^{-10}$.
The first equality above uses the fact that, for any unitary $\Q$, $\Ustar \Q$ belongs to $\calU$. This is a direct consequence of our incoherence assumption Assumption \ref{incoh}.
The second one uses the facts that projection onto $\calU$, which is a convex set, is non-expansive \cite[eq. (9),(10)]{nashed1968decomposition}, \cite[eq. (1.5)]{zarantonello1971projections}). The projection $\Pi_{\calU}(\M) := \min_{\U \in \calU} \|\U - \M\|_F$ is non-expansive means that $\|\Pi_{\calU}(\M) - \Pi_{\calU}(\M_2)\|_F \le \|\M - \M_2\|_F$.

Finally, we use the above to bound $\SE_F(\U^{(0)}, \Ustar) = \|\P \U^{(0)}\|_F = \|\P \Pi_{\calU}(\U^{(00)}) \R^{(0)}{}^{-1} \|_F$. Using $ \P  \Ustar \Q_{*,00} = \bm{0}$, and $x/(1-x) < 1.5 x$ for any $0 \le x \le 0.3$,
we have: if $C  \sqrt{\frac{\kappa^2 \mu r^2 \log q}{np}} < 0.3$, then
\begin{align*}
\SE_F(\U^{(0)}, \Ustar) = \|\P \U^{(0)}\|_F
& \le \|\P \Pi_{\calU}(\U^{(00)})\|_F \|\R^{(0)}{}^{-1}\|  \\
& \le \frac{\| \P  \Ustar \Q_{*,00}\|_F + \|\P (\Pi_{\calU}(\U^{(00)}) - \Ustar \Q_{*,00})\|_F }{\sigma_{\min}(\Pi_{\calU}(\U^{(00)})} \\
& \le \frac{\|\Pi_{\calU}(\U^{(00)}) - \Ustar \Q_{*,00}\|_F }{1 - \|\Pi_{\calU}(\U^{(00)}) - \Ustar \Q_{*,00} \|_F}
 \le \frac{C  \sqrt{\frac{\kappa^2 \mu r^2 \log q}{np}}}{1 - C \sqrt{\frac{\kappa^2 \mu r^2 \log q}{np}}} \le 1.5 C  \sqrt{\frac{\kappa^2 \mu r^2 \log q}{np}}
\end{align*}
The denominator bound follows by Weyl's inequality, $\sigma_{\min}(\Ustar \Q_{*,00}) =  1$,  and $x/(1-x) < 1.5 x$ for $0 < x < 0.3$.
%
Thus, for a $\delta < 0.3$, w.p. at least $1 - q^{-10}$, $\SE_F(\U^{(0)}, \Ustar)  \le  \delta$ if $p \ge C \kappa^2 \mu r^2 \log q / n \delta^2$.

\subsubsection{Proof of $\mu$-incoherence} 
We have
$\u^j{}^{(0)} =  (\Pi_{\calU}(\U^{(00)}))^j \R^{(0)}{}^{-1}$. Thus, using the denominator bound from above,
\[
\| \u^j{}^{(0)} \| \leq \| (\Pi_{\calU}(\U^{(00)}))^j \|  \|\R^{(0)}{}^{-1} \|
= \frac{\| (\Pi_{\calU}(\U^{(00)}))^j \|}{\sigma_{\min}( \Pi_{\calU}(\U^{(00)})}
\le \frac{\mu\sqrt{r/n}}{1 - C \sqrt{\frac{\kappa^2 \mu r^2 \log q}{np}}} 
\]
Our assumed bound on $p$ implies that the denominator is at least $1-\delta$ for a $\delta < 0.3$. This then implies the above is upper bounded by $1.5 \mu\sqrt{r/n}$.

\subsection{Proof of Lemma \ref{lem:BerrLmma}}\label{proof_Blemma}
Let $\delta = \deltatold$.
Recall from Sec. \ref{prelims} that 
$vec(\B-G) = \F^{-1} vec(\D)$ with $\F,\D$ defined there. Here $\F  \in \Re^{rq \times rq}$ and $\D \in \Re^{r \times q}$.
Thus,
\begin{equation}
\|\B- \G\|_F =  \|vec(\B - \G)\| =\| \F^{-1}vec(\D) \| \leq \|\F^{-1}\| \cdot \|vec(\D)\| =  \|\F^{-1}\| \cdot \|\D\|_F \le \|\F^{-1}\| \cdot \sqrt{r} \|\D\|, \label{eq:B-GExprsn}
\end{equation}

By the matrix-Bernstein inequality, 
$\|\U_k^\intercal \U_k - p \I\| \le \eps p, \,\,\, \text{w.p. greater than {$1 - \exp(\log 2r - c \frac{\eps^2 pn}{\mu_u^2 r})$.}}
$
Applying the union bound over all $q$ diagonal blocks, and using the fact that $\sigma_{\min}(\F) = \min_{k} \sigma_{\min}(\U_k^\intercal \U_k))$ (since $\F$ is block diagonal),    
\begin{equation}
\| \F^{-1} \| \leq \frac{1}{(1 - \eps)p}, \,\,\, \text{ w.p. greater than $1 - \exp(\log q + \log 2r - c \frac{ \eps^2 pn}{\mu_u^2 r})$.}
\end{equation}
(the above also follows by Lemma C.7 of \cite{lowrank_altmin}).

Using the above bound and the bound on $\| \D \|$ that we derive below 
\begin{equation}
\| \B - \G \|_F \leq \sqrt{r} \|\F^{-1}\|  \|\D\|
\le
2 \eps  \sqrt{r} \delta  \sigmax, \,\,\,\text{w.p. greater than  $1 -  \exp(\log q -  \frac{\eps^2 p n}{r\mu_u^2} )$}.
\label{eq:B-G_EpsBnd0}
\end{equation}
Let $\eps = \eps_1/2\sqrt{r}$. Then,
\begin{equation}
\| \B - \G \|_F \leq \|\F^{-1}\| \cdot \sqrt{r} \|\D\|
\le
\eps_1 \delta  \sigmax, \,\,\,\text{w.p. greater than  $1 -  \exp(\log q -  \frac{\eps^2 p n}{r^2 \mu_u^2} )$}.
\label{eq:B-G_EpsBnd}
\end{equation}

\subsubsection{$\| D \|$ bound}
Let
\[
\M: =  [(\I - \U \U^\intercal) \Ustar]
\]
Recall from Sec. \ref{prelims} that
\begin{equation}
\D = \sum_k \d_k \e_k^\intercal = \U^\intercal \S_k \M \bstar_k \e_k^\intercal = \sum_k \sum_j \underbrace{\deltajk \u^j  \M^j{}^\intercal \bstar_k \e_k^\intercal}_{\Z_\jk}
\text{ and } \E[\D] = \bm{0}
\label{eq:Dexprsn}
\end{equation}
We bound $\|\D\| = \|\D - \E[\D]\|$ using matrix Bernstein inequality. Using $n \le q$,
\begin{align*}
L =  \max_\jk \| \deltajk \u_j \M^j{}^\intercal \bstar_k \e_k^\intercal \| \le \max_\jk \| \M^j\| \|\u_j\|  \|\bstar_k\| \leq \delta \sigmax \mu \mu_u {r}/{n}
\end{align*}
where we used $\| \M^j\| \le \|\M\| \le \delta$.
%
Also, using $\E[\deltajk] = \E[\deltajk^2] = p$,
\begin{align*}
\sigma_1^2
& = \| \E[\sum_\jk \Z_\jk^\intercal  \Z_\jk] \|
=  p \| \sum_\jk  (\M^j{}^\intercal \bstar_k)^2 \|\u_j\|^2 \e_k  \e_k^\intercal \| \\
& \leq p \mu_u^2 \frac{r}{n}  \| \M\B^*\|_F^2 \leq 2p \mu_u^2 \frac{r}{n} \| \M \|_{F}^2 \| \B^* \|^2 \leq 2p \mu_u^2 \frac{r}{n} \delta^2  \sigmaxTwo.
\end{align*}
and, proceeding in a very similar fashion,
\begin{align*}
\sigma_2^2
& = \| \E[\sum_\jk\Z_\jk\Z_\jk^\intercal ]\|
=  p \| \sum_\jk  (\M^j{}^\intercal \bstar_k)^2  \u_j  \u_j^\intercal \| \\
&  \leq p \mu_u^2 \frac{r}{n}  \| \M\B^*\|_F^2 \leq 2p \mu_u^2 \frac{r}{n} \| \M \|_{F}^2 \| \B^* \|^2 \leq 2p \mu_u^2 \frac{r}{n} \delta^2  \sigmaxTwo.   \label{eq:sigma2}
\end{align*}
Thus,  $\sigma^2 = \max(\sigma_1^2,\sigma_2^2) = \sigma_2^2$, {for $r < q$}. Setting $t = \eps p \text{{$\delta$}} \sigmax$, we have
\begin{equation}
\frac{t}{L} = \frac{\eps p n}{r\mu  \mu_u}, \,\,\, \frac{t^2}{\sigma^2} = \frac{\eps^2 p n}{2r\mu_u^2}. \label{eq:muStep}
\end{equation}
Thus, by the Matrix-Bernstein inequality, and using $\mu \le \mu_u $,
\begin{equation}
\| \D\| = \| \D - \E[\D] \| \leq  \eps p \text{{$\delta$}} \sigmax, \text{w.p. greater than } 1 -  \exp(\log q -  c \frac{\eps^2 p n}{r\mu_u^2} ). \label{eq:DopBnd}
\end{equation}



\subsection{Proof of Lemma \ref{lem:Berr_implic}}\label{proof_Blemma_implic}
Let $\delta = \deltatold$.
Writing $\Xstar = \U\G + (\I - \U\U^\intercal)\Xstar$, and $\X = \U \B$, we have
$
\| \Xstar - \X \|_F 
\leq
\| \B - \G \|_F  + \| (\I - \U\U^\intercal)\U^*\B^* \|_F\leq
\| \B - \G \|_F + \| (\I - \U\U^\intercal)\U^*\|_F  \| \B^* \| \leq
c \delta \sigmax + \delta \sigmax.
$
For ii), using the bound on $\| \B - \G \|_F$, $\| \B \| = \| \B - \G   + \G \| \leq \| \B - \G \| + \| \G \| \leq  \delta \sigmax +\sigmax.$  For iii), $\sigma_{\min}(\B)\geq \sigma_{\min}(\G) - \sigma_{\max}(\B -\G) \geq \sqrt{1 - \delta^2} \sigmin -\delta \sqrt r\sigmax$. Here we used $\sigma_{\min}(\G) = \sigma_{\min}(\U^\intercal \Ustar\Bstar) \geq \sigma_{\min}(\U^\intercal \U^*) \sigmin \geq \sqrt{1 -\delta^2} \sigmin.$


\subsection{Proof of Lemma \ref{lem:BIncohLmma}}
\label{subsec:bkIncoh}
Since $\b_k =  (\U_k^\intercal\U_k)^{-1}  \U_k^\intercal \Ustar_k  \bstar_k$,
\[
\|\b_k\| \le \| (\U_k^\intercal\U_k)^{-1}\| \cdot \|\U_k^\intercal \Ustar_k\| \cdot \|\bstar_k\|
\]
By Lemma C.6 of  \cite{lowrank_altmin}:
\begin{equation}
\| (\U_k^\intercal \U_k)^{-1} \| \leq  \frac{1}{(1 - \epsilon)p} \ \text{ w.p. at least } 1 - \exp(\log r - \eps^2 p n / \mu_u^2 r )
\label{eq:opBndTrm1Fnl}
\end{equation}
To bound  $\| \U_k^\intercal \Ustar_k\|$, first note that 
\[
\E[\U_k^\intercal \Ustar_k ] = p \U^\intercal \Ustar.
\]
Next we bound  $\| \U_k^\intercal \Ustar_k  - p \U^\intercal \Ustar \|$ using matrix Bernstein inequality. Recall the expression for $\U_k^\intercal \Ustar_k$ from Sec. \ref{prelims}.
Let $\Z_j = (\deltajk - p){\u^j} {\ustar^j}{}^{\intercal}$.
As done in earlier proofs, we can show that
$$ L = \max_j \|\Z_j\| \le \mu \mu_u \sqrt{r/n}, \ \text{ and}$$
$$\sigma^2 = \max(\| \sum_j \E [\Z_j \Z_j^\intercal] \|, \| \sum_j \E[\Z_j^\intercal \Z_j]\| )
\le 2p \mu_u^2 \frac{r}{n}.$$
For this, we used the fact that $\E[(\deltajk-p)^2] = 2p(1-p) \le 2p$,
$\|\sum_j \| \ustar_j \|^2 \u_j \u_j^\intercal\| \le \mu^2 (r/n) \|\U\| = \mu^2 (r/n) $ and $\|\sum_j \| \u^j \|^2 \ustar_j \ustar_j{}^\intercal\| \le \mu_u^2 (r/n) \|\Ustar\| = \mu_u^2 (r/n) $ and $\mu \le \mu_u$.

Thus, by matrix Bernstein, $\| \U_k^\intercal \Ustar_k - p \U^\intercal \U^* \| \le \eps p $ w.p. at least $1 - \exp(\log r - \eps^2 p n / \mu_u^2 r ) $. Hence, with this probability,
\begin{equation}
\| \U_k^\intercal \Ustar_k \| \leq \| \U_k^\intercal \Ustar_k - \U^\intercal \Ustar \| + p \|\U^\intercal \Ustar\| \le \eps p + p = (1+\eps) p.
\end{equation}
Thus, letting $\eps = 0.1$,  w.p. at least $1 - \exp(\log r - c p n / \mu_u^2 r ) $,
$
\|\b_k\| \le 1.1 \|\bstar_k\| \le 1.1\mu \sqrt{r/q} \sigmax.
$
Here we obtained a bound on $||\b_k||$ for a given $k$. By union bound, the above bound holds for all $k \in [q]$  w.p. at least $1 - q \exp(\log r - c p n / \mu_u^2 r ) = 1 - \exp(\log q + \log r - c p n / \mu_u^2 r )$.

\subsection{Proof of Lemma \ref{lem:Ugrad} }
\label{subsec:gradUBnd}
Let $\delta = \deltatold$.
Recall that $\gradU = \sum_\jk  \deltajk \e_j (\x_\jk - \x^*_\jk)\b_k^\intercal.$
Using the lemma assumption, $\|\B - \G\|_F \le  \delta \sigmax$. Using this and Lemma \ref{lem:Berr_implic}, $\|\X - \Xstar\|_F \le  2\delta \sigmax$, $\|\B\| \le 1.1 \sigmax$,
We use these bounds in the proof below.

Observe that
\[
\E[\gradU] = p (\X- \Xstar) \B^\intercal,
\]
Thus, using the assumed $\B-\G$ bound,
\[
\|\E[\gradU] \|_F \leq p \| \X - \Xstar \|_F \| \B \|  \leq 2.2 p \delta \sigmaxTwo
\]

Next we bound the deviation using matrix Bernstein inequality.
Writing $\X = \U \B$, $\Xstar = \U\G + (\I - \U \U^\intercal)\Xstar$, and using $n \le q$ 
\begin{align*}
	\max_\jk |\x_\jk - \x^*_\jk|
	& = |\e_j^\intercal(\X - \Xstar)\e_k|  \\
	& \le  \|\e_j^\intercal\U \| \|\B - \G\| + \|(\I - \U \U^\intercal)\Ustar\| \|\Bstar \e_k\| \\
	& \le \mu_u \sqrt{r/n}  \delta \sigmax + \mu \sqrt{r/q} \delta  \sigmax
	\le 2 \mu_u \sqrt{r/n} \delta \sigmax
\end{align*}
Using this,
\[
L = \max_\jk \|(\x_\jk - \xstar_\jk) \b_k^\intercal\| \le    \max_\jk \|(\x_\jk - \xstar_\jk)\| \max_k \| \b_k\| \le
2 \mu_u \sqrt{r/n} \delta \sigmax \cdot \mu \sqrt{r/q} \sigmax \le 2\mu_u \mu (r/n) \delta \sigmaxTwo,
\]
To bound $\sigma_1^2 = \|\sum_\jk\E[\Z_\jk\Z_\jk^\intercal]\|$ and $\sigma_2^2 = \|\sum_\jk\E[\Z_\jk^\intercal \Z_\jk]\|$, where $\Z_\jk = \deltajk \e_j (\x_\jk - \x^*_\jk)\b_k^\intercal$, using $\E[(\deltajk-p)^2] = 2p(1-p) \le 2p$
we have
\begin{align}
	&\sigma_1^2
	= 2p \|\sum_\jk (\x_\jk - \xstar_\jk)^2 \e_j \b_k^\intercal \b_k \e_j^\intercal\| 
	\le 2 p \|\b_k\|^2 \|\X - \Xstar\|_F^2
	\le 2p\mu^2(r/q) \sigmaxTwo \cdot  ( \delta \sigmax )^2
	= 2p\mu^2(r/q) \delta^2 \sigma^{*4}_{\max}. \notag\\
	&\sigma_2^2
	=2 p \| \sum_\jk (\x_\jk - \xstar_\jk)^2 \e_j^\intercal \e_j \b_k\b_k^\intercal \|
	\le 2 p \|\b_k\|^2 \|\X - \Xstar\|_F^2  = \sigma_1^2. \notag
\end{align}
Setting $t = \eps p  \delta \sigminTwo $, we have
\[
\frac{t^2}{\sigma^2} = c\frac{\eps^2 p^2  \delta^2 \sigmin{}^4}{p \mu^2(r/q) \delta^2 \sigmaxFour}  = \frac{\eps^2 p q}{\kappa^4 \mu^2 r}, \ \  
\frac{t}{L} = c\frac{ \eps p   \delta \sigminTwo}{\mu_u\mu (r / n) \delta \sigmaxTwo} =  \frac{ \eps p n }{\kappa^2 \mu_u\mu r}.
\]
Thus,
\[
\min\left(\frac{t^2}{\sigma^2} , \frac{t}{L} \right) = c\frac{\eps^2 p n}{\max(\kappa^4 \mu^2, \kappa^2 \mu_u\mu) r}
\]
and so, by matrix Bernstein, w.p. at least $1 - \exp(\log q - c\frac{\eps^2 p n}{\max(\kappa^4 \mu^2, \kappa^2 \mu_u\mu) r} ) $,
$
\|\gradU - \E[\gradU] \| \le \eps p   \delta \sigminTwo.
$

By setting $\eps = \eps_1/\sqrt{r}$,  $\|\gradU - \E[\gradU] \|_F \le \sqrt{r} \|\gradU - \E[\gradU] \| \le \eps_1  p   \delta \sigminTwo$ w.p. at least $1 - \exp(\log q - \frac{\eps_1^2 p n}{\max(\kappa^4 \mu^2, \kappa^2 \mu_u\mu) r^2} ) $.

\subsection{Proof of Lemma \ref{lem:rowIncoh}}\label{subsec:rowIncohProof}
Let $\delta = \deltatold$. This is used only for proving the second part.

Recall from Sec. \ref{prelims} that
	\[
	\gradUj = \sum_k \deltajk (\x_\jk - \xstar_\jk) \b_k^\intercal,
	\]
	To apply matrix Bernstein, we need to bound $\sigma_1^2 \equiv \|\E[\sum_k \deltajk^2 (\x_\jk - \x^*_\jk)^2\b_k^\intercal\b_k]\|$, $\sigma_2^2 \equiv \|\E[\sum_k \deltajk^2 (\x_\jk - \x^*_\jk)^2\b_k\b_k^\intercal]\|$, and $L = \max_k \|(\x_\jk - \xstar_\jk) \b_k \|$.
	Using
	\[
	|\x_\jk - \xstar_\jk| \le 2 \max(|\x_\jk|, |\xstar_\jk|) \le  2\max( \|{\u^j}\| \|\b_k\|, \|{\ustar{}^j}\| \|\bstar_k\|) \le 2 \max( \|{\u^j}\|, \|{\ustar{}^j}\|) \max(\|\b_k\|,\|\bstar_k\|).
	\]
	and  $\max(\|\bstar_k\|, \|\b_k\|) \le 1.1 \mu \sqrt{r/q} \sigmax $,
	\begin{align}
		L \le  \max_k |\x_\jk - \xstar_\jk| \max_k \|\b_k\| &\le  2\max( \|{\u^j}\|, \|{\ustar{}^j}\|) \max_k \max(\|\b_k\|,\|\bstar_k\|) \max_k \|\b_k\| \notag \\
&\le  2\max( \|{\u^j}\|, \|{\ustar{}^j}\|)  \mu^2 (r/q) \sigmaxTwo.
	\end{align}
We use the above seemingly loose bound because we do not need $\deltatold$ in this bound. Instead we need to the bound to be of the form $\max(\|\u^j\|, \|\ustar_j\|)$ times a factor of $ \sqrt{r/q} \sigmax$

	The variance $\sigma_1^2 = \|\E[\sum_k \deltajk (\x_\jk - \x^*_\jk)^2\b_k^\intercal\b_k]\|$ can be bounded using similar ideas as follows.
	\begin{align}
		\sigma_1^2
		= \| \sum_k p (\x_\jk - \xstar_\jk)^2 \b_k^\intercal \b_k \|
		& \le 2p  {\max (\| {\u^j} \|_2, \| \ustar^j \|_2)}^2 \max_k (\|\b_k\|,\|\b^*_k\|)^2 \sum_k \| \b_k \|_2^2\notag\\
		& \le 2p   \max{(\| {\u^j} \|_2, \| \ustar^j \|_2)}^2\mu^2 (r^2/q) \sigmaxFour.
	\end{align}
	Similarly, $\sigma_2^2 \leq 2p   \max{(\| {\u^j} \|_2, \| \ustar^j \|_2)}^2\mu^2 (r^2/q) \sigmaxFour.$
	By the matrix Bernstein inequality with $t = \eps p  \max( \|{\u^j}\|,  \|{\u^{*j}}\|) \sigminTwo$, and noting that $\sigma^2 = \max(\sigma_1^2,\sigma_2^2) = \sigma_1^2$, we have w.p. at least  $1 - \exp(\log q - \eps^2 p n /\mu^2 \kappa^4 r^2 )$,
	\begin{equation}
		\|\E[\gradUj] - \gradUj \| \le \eps p  \max( \|{\u^j}\|,  \|{\ustar{}^{j}}\|)  \sigminTwo.
		\label{eq:rowGradjDvtn}
	\end{equation}
	This completes the proof for the first part of the lemma.

\subsubsection{Proof of second part}
In line 7 of Algorithm 1, adding/subtracting $\E[\gradUj] = p (\u^{j}{} \B \B^\intercal -  {\ustar{}^j}  \Bstar \B^\intercal)$,
\begin{equation}
\tilde\u^j{}^+  =\u^{j}{}  - \eta \gradUj =  \u^{j}{}  ( \I - \eta p \B \B^\intercal)
+ \eta p {\ustar{}^j}  \Bstar \B^\intercal + \eta (\E[\gradUj] - \gradUj), \label{eq:ujBnd}
\end{equation}
Since we assumed $\delta \leq c/\kappa$, by Lemma \ref{lem:Berr_implic}, $\sigma_{\min}(\B) \ge 0.9 \sigmin$ and $\sigma_{\max}(\B) \le 1.1 \sigmax$. Thus, if  $\eta < 0.5 / p \sigmaxTwo$ then, $\I - \eta p \B \B^\intercal$ is positive semi-definite (psd) and so $\|\I - \eta p \B \B^\intercal\| = \lambda_{\max}(\I - \eta p \B \B^\intercal)  = 1 - \eta p \sigma_{\min}^2(\B) \leq  1 - 0.9 \eta p \sigmin^2$.

Thus, using the above bound on $\|\E[\gradUj] - \gradUj \|$,
if  $\eta < 0.5 / p \sigmaxTwo$,
\begin{align}
	\|\tilde\u^j{}^+ \|
	&\le \|{\u^{j}{}}\| (1 - 0.9\eta p \sigminTwo)  +  \eta p \|{\ustar{}^j}\| \sigmaxTwo  + \eps \eta p \sigminTwo\max( \|{\u^{j}{}}\|,  \|{\u^{*j}}\|) \notag\\
	&\le  (1 - (0.9-\eps)\eta p \sigminTwo )\max( \|{\u^{j}{}}\|,  \|{\u^{*j}}\|)   + \eta p \sigmaxTwo \|{\ustar{}^j}\|,
	\label{eq:uTildeBnd}
\end{align}
w.p. at least $1 - \exp(\log q - \eps^2 p n /\mu^2 \kappa^4 r^2 )$. We bound $\| \u^{j(t+1)} \| \leq  \| (\R^+)^{-1} \| \cdot \| \tilde{\u}^{j (t+1)}\|$, where $\tilde \U^+ \overset{\text{QR}}{=} \U^+ \R^+$ next. Using Lemma \ref{lem:Ugrad},
	\begin{equation}
		\| (\R^+)^{-1} \| =\frac{1}{\sigma_{\min}(\U - \eta  \| \gradU \| )}
		\leq  \frac{1}{1 - \eta 2.1 p \delta \sigmaxTwo}
		\leq \frac{1}{1 - 0.25\eta p  \sigmaxTwo}
		\leq 1 + 0.5 \eta p  \sigmaxTwo.  \label{eq:RinvBnd}
	\end{equation}
w.p. given in Lemma \ref{lem:Ugrad}.
Thus, using \eqref{eq:uTildeBnd} and \eqref{eq:RinvBnd} in $\| {\u^j}^+ \| \leq  \| (\R^+)^{-1} \| \cdot \| \tilde{\u}^{j (t+1)} \|$,
	\begin{align*}
		\|\u^j{}^+ \|
		& \le (1 + 0.5 \eta p  \sigmaxTwo) (1 - (0.9-\eps) \eta p \sigminTwo )\max( \|{\u^{j}{}}\| \|{\u^{*j}}\|) + (1 + 0.5 \eta p  \sigminTwo) \eta p \sigmaxTwo \|{\ustar{}^j}\|\\
		& \le (1 - (0.4-\eps) \eta p \sigminTwo )\max( \|{\u^{j}{}}\|,  \|{\u^{*j}}\|) + (1 + 0.5 \eta p  \sigmaxTwo) \eta p \sigmaxTwo \|{\ustar{}^j}\| \\
		& \le (1 - (0.4-\eps) \eta p \sigminTwo ) \max( \|{\u^{j}{}}\|,  \|{\u^{*j}}\|) + (1 + 0.25/\kappa^2) 0.5 \|{\ustar{}^j}\| \\
		& \le (1 - \frac{0.15}{\kappa^2}) \max( \|{\u^{j}{}}\|,  \|{\u^{*j}}\|) + 0.7 \|{\ustar{}^j}\|
	\end{align*}
	where the last bound follows by setting $\eta = 0.5/p \sigmaxTwo$ and setting $\eps = 0.1$ in \eqref{eq:rowGradjDvtn}. Thus, we have shown that if $\eta = 0.5/p \sigmaxTwo$, and if  $\delta \leq c/\kappa^2$, with probability exceeding $1 - 4/n^3$,
	\begin{align*}
		\|{\u^j}^+ \|  & \le (1 - \frac{0.15}{\kappa^2})\max(\|{\u^{j}{}}\|,  \|\ustar{}^{j}\|) + 0.7 \|{\ustar{}^j}\|\\
		&\leq (1 - \frac{0.15}{\kappa^2})\| \u^{j}{} \| + 2 \| \ustar{}^j\|.
	\end{align*}

\section{Guarantee for Noisy LRMC}\label{noisy_lrmc_th}
Consider the noisy LRMC problem defined as follows. We observe
\[
\Y := \Xstar_\Omega + \W_\Omega
\]
We do not make any assumption on the noise $\W$ (deterministic $\W$).  Unlike the standard Gaussian assumption, this assumption is weaker: it allows the noise entries to be anything, all entries could be positive too for example.

We can show the following by borrowing the overall proof approach for modifying a noise-free case guarantee for an iterative algorithm from past works \cite{twf,lrpr_best} (the latter used a similar approach for analyzing the AltMin algorithm for LR phase retrieval).



\begin{corollary}
Let
\[
\eps_{noise}  := \max_{jk} \frac{\lvert w_{jk} \rvert}{\lvert x^*_{jk} \rvert}
\]
In the setting of Theorem \ref{thrm}, if
$
\eps_{noise} \le  \frac{c}{\sqrt{r} \kappa^3},
$
then, by setting $T = C \kappa^2 \log(1/\eps_{noise})$, we can guarantee that
\[
\SE_F(\U^{(T)},\Ustar) \le \kappa^2 \sqrt{r} \eps_{noise}   
\]
$\ \text{ and } \  \|\X^{(T)}- \Xstar\|_F \le \SE_F(\U^{(T)},\Ustar) \sigmax.$
In general, for any $\eps$, by setting $T = C \kappa^2 \log(1/\eps)$, we can ensure that
\[
\SE_F(\U^{(T)},\Ustar) \le \max(\eps, \kappa^2 \sqrt{r} \eps_{noise})
\]
 and $\|\X- \Xstar\|_F \le \SE_F(\U^{(T)},\Ustar)   \sigmax $.   
\label{cor_noisy}
\end{corollary}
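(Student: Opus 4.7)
\textbf{Proof proposal for Corollary \ref{cor_noisy}.}

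The plan is to rerun the induction used to prove Theorem \ref{thrm}, but with the noise $\W_\Omega$ carried as an additive perturbation throughout. The key observation is that the bound $|w_{jk}| \le \eps_{noise}|x^*_{jk}|$ together with incoherence of $\Xstar$ gives a clean global size estimate: $\|\W\|_F^2 = \sum_{jk} w_{jk}^2 \le \eps_{noise}^2 \|\Xstar\|_F^2 \le \eps_{noise}^2 r \sigmaxTwo$, so $\|\W\|_F \le \eps_{noise} \sqrt{r} \sigmax$. Moreover $\|\w_k\| \le \eps_{noise}\|\xstar_k\| \le \eps_{noise}\mu\sqrt{r/q}\sigmax$, which is already of the same order as $\|\bstar_k\|$ up to $\eps_{noise}$. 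I will show the recursion becomes
\[
\deltat \;\le\; (1 - c/\kappa^2)\,\deltatold \;+\; C\sqrt{r}\,\eps_{noise},
\]
which contracts geometrically until $\deltatold$ reaches a fixed point of order $\kappa^2 \sqrt{r}\,\eps_{noise}$; this is exactly the claimed floor. The condition $\eps_{noise} \le c/(\sqrt{r}\kappa^3)$ is precisely what forces this floor to stay below the $c/\kappa$ threshold required by Lemma \ref{lem:Berr_implic} (so that $\sigma_{\min}(\B) \ge 0.9 \sigmin$), which in turn keeps the induction hypotheses intact.

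The first step is to redo the LS analysis (Lemma \ref{lem:BerrLmma}). With $\y_k = \S_k\xstar_k + \S_k\w_k$, equation \eqref{eq:LS_Updt} gains an extra summand:
\[
\b_k - \g_k \;=\; \F_k^{-1}\d_k \;+\; \F_k^{-1}\U^\intercal \S_k \w_k.
\]
The first summand is handled exactly as in Sec. \ref{proof_Blemma}. For the second, let $V$ be the $r \times q$ matrix with columns $\U^\intercal\S_k\w_k$; then $\|V\|_F^2 \le \sum_k \|\S_k\w_k\|^2 = \sum_{j,k}\xi_{jk}w_{jk}^2$, which is a scalar Bernstein sum with mean $p\|\W\|_F^2 \le p\eps_{noise}^2 r\sigmaxTwo$. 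Combined with $\|\F^{-1}\| \le 1/((1-\eps)p)$ from Lemma \ref{lem:BerrLmma}'s proof, this yields a noisy LS bound
\[
\|\B - \G\|_F \;\le\; \eps\,\deltatold\,\sigmax \;+\; C\,\eps_{noise}\sqrt{r/p}\,\sigmax .
\]
The consequence is that $\|\X - \Xstar\|_F$ picks up an extra $O(\eps_{noise}\sqrt{r/p})\sigmax$ term, and the column-wise bound $\|\b_k\|$ (Lemma \ref{lem:BIncohLmma}) picks up an $O(\eps_{noise}\|\bstar_k\|)$ term; both are negligible under the corollary's smallness assumption on $\eps_{noise}$, so incoherence of $\B$ is preserved.

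The second step is to redo the gradient analysis. The noisy gradient decomposes cleanly as
\[
\gradU \;=\; 2\sum_k \S_k(\U\b_k - \xstar_k)\b_k^\intercal \;-\; 2\sum_k \S_k \w_k \b_k^\intercal ,
\]
and the first piece is handled by Lemma \ref{lem:Ugrad} applied at the slightly-perturbed $\B$. For the second piece, $\E[-2\sum_k \S_k \w_k \b_k^\intercal] = -2p\W\B^\intercal$, whose Frobenius norm is at most $2p\|\W\|_F\|\B\| \le C p\eps_{noise}\sqrt{r}\sigmaxTwo$. The deviation is bounded by matrix Bernstein with the same $L,\sigma^2$ estimates as in Sec. \ref{subsec:gradUBnd}, now using $|w_{jk}| \le \eps_{noise}\mu^2 r\sigmax/\sqrt{nq}$ in place of $|x_{jk} - x^*_{jk}|$; this deviation is negligible under the stated sample complexity. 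Pushing these bounds through \eqref{eq:SD_interm} and setting $\eta = 0.5/(p\sigmaxTwo)$, the $\eta \cdot 2p\|\W\|_F\|\B\|$ contribution becomes $\le C\sqrt{r}\eps_{noise}$, giving the advertised recursion. An identical modification of Lemma \ref{lem:rowIncoh} (just bound the extra $\sum_k \xi_{jk} w_{jk} \b_k^\intercal$ by matrix Bernstein using $|w_{jk}| \le \eps_{noise}\|\ustar^j\|\|\bstar_k\|$) preserves row-incoherence of $\U$.

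The main obstacle I anticipate is keeping the bookkeeping honest: both $\B$ and $\gradU$ depend on the noise, so one must be careful not to double-count the noise contribution when plugging the noisy $\|\B-\G\|_F$ bound back into Lemma \ref{lem:Ugrad}. The cleanest route is to treat the noise contribution as a separate additive perturbation and to verify, via the smallness condition $\eps_{noise} \le c/(\sqrt{r}\kappa^3)$, that it is always dominated by the signal-driven terms so that Lemmas \ref{lem:Berr_implic}--\ref{lem:rowIncoh} continue to apply verbatim with constants only slightly worse. Once the recursion $\deltat \le (1 - c/\kappa^2)\deltatold + C\sqrt{r}\eps_{noise}$ is established, both parts of the corollary follow: if $\deltatold \ge 2\kappa^2\sqrt{r}\eps_{noise}/c$ then $\deltat \le (1 - c/(2\kappa^2))\deltatold$, giving geometric decay until the floor; and iterating $T = C\kappa^2\log(1/\eps)$ times yields $\deltat \le \max(\eps, \kappa^2\sqrt{r}\eps_{noise})$. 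The $\|\X - \Xstar\|_F$ bound then follows from $\|\X - \Xstar\|_F \le \|\B - \G\|_F + \SE_F(\U,\Ustar)\|\Xstar\|$, absorbing the small noisy LS remainder into the $\SE_F \cdot \sigmax$ term.
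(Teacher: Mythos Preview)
Your overall strategy coincides with the paper's: carry $\W_\Omega$ as an additive perturbation through the LS step and the gradient step, derive the recursion $\deltat \le (1-c/\kappa^2)\deltatold + C\sqrt{r}\,\eps_{noise}$, and iterate to the floor. The gradient decomposition, the bound $\|\E[\W_\Omega\B^\intercal]\|_F \le Cp\eps_{noise}\sqrt{r}\,\sigmaxTwo$, and the row-incoherence argument are essentially what the paper does.

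There is, however, a genuine gap in your LS-noise step. Your scalar-Bernstein bound on $\sum_{j,k}\xi_{jk}w_{jk}^2$ only yields $\|V\|_F \lesssim \sqrt{p}\,\eps_{noise}\sqrt{r}\,\sigmax$, so after multiplying by $\|\F^{-1}\| \le C/p$ the noise contribution to $\|\B-\G\|_F$ is $C\,\eps_{noise}\sqrt{r/p}\,\sigmax$, exactly as you wrote. But this extra $1/\sqrt{p}$ is \emph{not} negligible under the corollary's hypothesis: with $p \asymp \kappa^4\mu^2 r^2\log q / n$ we have $1/\sqrt{p} \asymp \sqrt{n}/(\kappa^2\mu r\sqrt{\log q})$, which can be arbitrarily large. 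To keep $\|\B-\G\|_F \le (c/\kappa)\sigmax$ (needed for Lemma~\ref{lem:Berr_implic}) you would need $\eps_{noise} \lesssim \sqrt{p}/(\sqrt{r}\kappa)$, an $n$-dependent condition strictly stronger than the stated $\eps_{noise} \le c/(\sqrt{r}\kappa^3)$. The paper avoids this by applying matrix Bernstein to $\W_B := \sum_{j,k}\xi_{jk}w_{jk}\u^j\e_k^\intercal$ itself: since $\E[\W_B]=p\,\U^\intercal\W$ already has norm $\le p\|\W\|_F$, and the deviation $\|\W_B-\E[\W_B]\|$ is shown (via matrix Bernstein with $L \le \eps_{noise}\mu\mu_u(r/n)\sigmax$ and $\sigma^2 \le Cp\mu_u^2(r/n)\eps_{noise}^2\sigmaxTwo$) to be $\lesssim \eps_{noise}\,p\,\sigmax$, one gets $\|\W_B\|_F \le Cp\sqrt{r}\,\eps_{noise}\sigmax$, and the factor of $p$ now cancels cleanly against $\|\F^{-1}\|$. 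You should also not skip the noisy initialization: the paper separately bounds $\|\W_\Omega\|_F \le 2\sqrt{r}\,\eps_{noise}\,p\,\sigmax$ and feeds it into Wedin's theorem, and this is precisely where the assumption $\eps_{noise} \le c/(\sqrt{r}\kappa^3)$ first enters to guarantee $\deltazero \le c/\kappa^2$.
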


\begin{proof}
Corollary \ref{cor_noisy} extends our noise-free case proof to the noisy case using the following simple ideas.
First we assume that the noise level is small enough so that accurate initialization is possible; this requires noise level, $\eps_{noise} \sqrt{r}$,  to be of order $c \deltazero \sigmin$ for our required value of $\deltazero = c/\kappa^2$. This helps ensure that the initialization error is bounded by $\deltazero$.

Next, at each iteration, we attempt to bound terms and prove incoherence for the updated $\U,\B$, in order to guarantee error decay similar to the noise-free case. For this, we need the noise level to be such that (i) the error in recovering $\Bstar$ is of the same order as in the noise-free case; and (ii) the same is true for the bounds on the gradient norm. Both of these are ensured if  $\eps_{noise}  \kappa^2 \sqrt{r} \le c \deltatold$ for a $c < 1$, e.g., $c=0.1$. Details are given in Appendix \ref{proof_noisy}.
\end{proof}

\section{Improved Guarantees for AltMin and Smooth-AltMin} \label{altmincorolls}

Using the same lemmas used to prove the AltGDmin guarantee, we are also able to improve the guarantee for AltMin and Smooth-AltMin \cite{hardt2014understanding}, as long as both are initialized as given in Algorithm 1:  the initial estimate of $\U$ is clipped using row-wise clipping. We borrow this clipping idea from \cite{rpca_gd}.

\subsection{Improving the result for AltMin}

\begin{corollary}[Improved AltMin \cite{lowrank_altmin} Guarantee]
Consider the AltMin algorithm initialized using our initialization (lines 2-4) of Algorithm 1.
	Assume that Assumption \ref{incoh} holds and that, at each iteration $t$, entries of $\Xstar$ are observed independently with probability $p$ satisfying
	\begin{equation}
		np  >  C  \kappa^4 \mu^2 r^2 \log q. \label{eq:pAltMin}
	\end{equation}
Then, w.p. at least $1 - 3/n^3$, the iterates $\U^{(t)}$ of AltMin \cite{lowrank_altmin} satisfy $\SE_F(\U^{(t+1)},\Ustar)  \leq 0.25 \SE_F(\U^{(t)},\U^*).$

Consequently, if $p_{tot}  > \frac{C  \kappa^4 \mu^2 r^2 \log q}{n} \log({1}/{\epsilon})$, then, after $T=C\log(1/\eps)$ iterations, $\SE_F(\U^{(T)},\Ustar) \le \eps$.
\label{altmin_thm}
\end{corollary}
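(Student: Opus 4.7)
The plan is to run the same induction as in Theorem \ref{thrm}, keeping the base case and the $\B$-update analysis verbatim, and replacing only the AltGDmin-specific gradient step (Lemmas \ref{lem:Ugrad} and \ref{lem:rowIncoh}) by a fresh analysis of AltMin's least-squares $\U$-update. Two ingredients carry over immediately. First, Lemma \ref{lem:init} applied with $\delta = c/\kappa^2$ gives $\SE_F(\U^{(0)}, \Ustar) \le c/\kappa^2$ and $\|\u^{j,(0)}\| \le 1.5\mu\sqrt{r/n}$ under $np \ge C\kappa^4\mu^2 r^2 \log q$; this is where the improvement from the $r^{4.5}$ factor of \cite{lowrank_altmin} to $r^2$ already kicks in, since \cite{lowrank_altmin} relies on coarser entry-wise clipping. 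Second, because AltMin's $\B$-update is literally the same closed-form LS as in AltGDmin, Lemmas \ref{lem:BerrLmma}--\ref{lem:BIncohLmma} apply unchanged and yield $\|\B - \G\|_F \le \eps\,\deltatold\,\sigmax$ (with $\eps$ to be chosen), $\sigma_{\min}(\B) \ge 0.9\sigmin$, $\|\B\| \le 1.1 \sigmax$, and column-incoherence $\|\b_k\| \le 1.1\mu\sqrt{r/q}\,\sigmax$.

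For the new $\U$-step, the closed form is $\tilde\u^{j,+} = (\F_j^U)^{-1} \M_j \ustar^j$ with $\F_j^U := \sum_k \xi_\jk \b_k\b_k^\intercal$ and $\M_j := \sum_k \xi_\jk \b_k \bstar_k{}^\intercal$, followed by $\tilde\U^+ \qreq \U^+ \R^+$. The pivotal choice will be the oracle target $\check\U := \Ustar (\U^\intercal \Ustar)^{-1}$, which satisfies $\P \check\U = \bm{0}$ and, crucially, matches $\tilde\U^+$ row-wise when $\W := \B - \G = \bm{0}$. Substituting $\b_k = (\U^\intercal\Ustar)\bstar_k + \w_k$ into the normal equations and collecting terms collapses to the identity
\begin{equation*}
\F_j^U \, (\tilde\u^{j,+} - \check\u^j) \;=\; -\sum_k \xi_\jk \, \b_k \, (\w_k^\intercal \check\u^j),
\end{equation*}
whose right-hand side is linear in $\W$. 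The next steps would be: (i) bound $\|(\F_j^U)^{-1}\| \le C/(p\sigmin^2)$ uniformly in $j$ via matrix Bernstein on $\F_j^U - p\B\B^\intercal$ (mirroring the proof of Lemma \ref{lem:BIncohLmma}), and (ii) bound the Frobenius norm of the random matrix $\sum_{j,k} \xi_\jk (\w_k^\intercal \check\u^j)\,\e_j\,\b_k^\intercal$ by matrix Bernstein, its mean being $p\,\check\U \W \B^\intercal$. Combined with $\P\check\U = \bm{0}$ and $\sigma_{\min}(\tilde\U^+) \ge 1 - O(\deltatold)$ from Weyl, this will give $\SE_F(\U^+,\Ustar) \le \|\tilde\U^+ - \check\U\|_F / \sigma_{\min}(\tilde\U^+)$. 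Row-incoherence of each $\U^{(t)}$ then falls out for free: from the LS formula together with the Bernstein bound $\|\M_j\| \le 3p\sigmax^2$, one reads off $\|\tilde\u^{j,+}\| \le C\kappa^2 \mu\sqrt{r/n}$, which survives the QR step via $\|(\R^+)^{-1}\| = O(1)$, preserving the invariant $\|\u^{j,(t)}\| \le \mu_u\sqrt{r/n}$ with $\mu_u = O(\kappa^2\mu)$.

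The hard part will be obtaining the stated $0.25$ contraction rate, which is independent of $\kappa$. The naive spectral bound $\|p\,\check\U\W\B^\intercal\|_F \le p\,\|\check\U\|\,\|\W\|_F\,\|\B\|$ paired with $\|(\F_j^U)^{-1}\| \lesssim 1/(p\sigmin^2)$ only gives $\|\tilde\U^+ - \check\U\|_F \lesssim \kappa^2 \eps\, \deltatold$; shrinking $\eps$ in Lemma \ref{lem:BerrLmma} to $c/\kappa^2$ would restore the $0.25$ rate but would inflate the sample complexity to order $\kappa^8\mu^2 r^2$, which does not match the corollary. Removing this gap requires exploiting the identity above together with the explicit form of $\w_k$ from \eqref{eq:LS_Updt}, so that the ill-conditioning of $\F_j^U$ cancels against a corresponding factor in $\check\U\W\B^\intercal$ rather than multiplying it. Throughout, sample splitting (Algorithm \ref{altgdmin}, line 1) is essential: it makes $\W$ (computed from an earlier Bernoulli subset) independent of the fresh $\xi_\jk$'s used in the current $\U$-update, which is what lets matrix Bernstein apply conditionally on $\W$. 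Summing the per-iteration requirement $np \ge C\kappa^4\mu^2 r^2 \log q$ over $T = C\log(1/\eps)$ iterations then yields the stated $p_{tot}$ bound.
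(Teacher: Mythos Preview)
Your proposal diverges from the paper's proof at the $\U$-update, and the divergence is exactly where you run into the unresolved $\kappa$ gap. The paper does not analyze the LS $\U$-update row-wise via an oracle $\check\U = \Ustar(\U^\intercal\Ustar)^{-1}$. Instead it exploits the \emph{symmetry} of AltMin's two half-steps: after the $\B$-update it orthonormalizes $\B^{(t)\intercal} \qreq \V^{(t)}\R_\B$ and observes that, with $\P_V := \I - \Vstar{}^\intercal\Vstar$, one has $\P_V \G^\intercal = \bm 0$, whence
\[
\SE_F(\V^{(t)},\Vstar) \;\le\; \frac{\|\P_V\B^\intercal\|_F}{\sigma_{\min}(\B)} \;=\; \frac{\|\B-\G\|_F}{\sigma_{\min}(\B)} \;\le\; \tfrac12\,\SE_F(\U^{(t)},\Ustar),
\]
using Lemma~\ref{lem:BerrLmma} with $\eps = c/\kappa$ and Lemma~\ref{lem:Berr_implic}. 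Incoherence of $\V^{(t)}$ is then $\|\v^k\|\le \|\b_k\|/\sigma_{\min}(\B)\le C\kappa\mu\sqrt{r/q}$ directly from Lemma~\ref{lem:BIncohLmma}. The $\U$-update is literally the same LS step with the roles of $(\U,\V,n,q)$ swapped, so the identical argument gives $\SE_F(\U^{(t+1)},\Ustar)\le \tfrac12\,\SE_F(\V^{(t)},\Vstar)$. Composing yields the $1/4$ contraction, and the two choices $\eps=c/\kappa$ together with $\mu_u=C\kappa\mu$ give exactly $np\ge C\kappa^4\mu^2 r^2\log q$.

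Your direct approach incurs an extra $\kappa$ because you bound $\|(\F_j^U)^{-1}\|$ and $\|\sum_k\xi_\jk\b_k(\w_k^\intercal\check\u^j)\|$ separately: the former is $\approx 1/(p\sigmin^2)$, and the mean of the latter is $p\B\W^\intercal\check\u^j$ with norm $\approx p\sigmax\|\W\|_F$, so their product is $\kappa\|\W\|_F/\sigmin\approx\kappa^2\eps\,\deltatold$. This $\kappa^2$ sits in the \emph{mean} term and is not removable by shrinking a Bernstein precision. The cancellation you hope for would amount to recognizing that $(\F_j^U)^{-1}\cdot p\B \approx (\B\B^\intercal)^{-1}\B$ has norm $1/\sigma_{\min}(\B)$, not $\sigmax/\sigmin^2$; but making that rigorous under the randomness of $\F_j^U$ is not simpler than the paper's route, which achieves the same effect cleanly by first passing to the orthonormal $\V^{(t)}$ (so that the analogue of $\F_j^U$ is $\approx p\I$). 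Your incoherence bound $\|\u^{j,+}\|\le C\kappa^2\mu\sqrt{r/n}$ is also one $\kappa$ looser than what the symmetry argument gives. In short, the missing idea is: QR $\B^\intercal$ to $\V$ and invoke Lemmas~\ref{lem:BerrLmma}--\ref{lem:BIncohLmma} \emph{twice}, once per half-step; no fresh analysis of the $\U$-update is needed.
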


We prove this in Appendix \ref{altmin_thm_proof}.  The proof is an easy corollary of Lemmas \ref{lem:init}, \ref{lem:BerrLmma}, \ref{lem:Berr_implic} and \ref{lem:BIncohLmma}.
Comparing \eqref{eq:pAltMin}  to Theorem 2.5 of \cite{lowrank_altmin}, we observe that the sample complexity for AltMinComplete has reduced in $r$ from $r^{4.5} \log r$ to $r^2$.

\subsection{Improved Guarantee for Smooth AltMin \cite{hardt2014understanding}}
We will reduce the overall sample complexity by decreasing $p_{init}$ to $O(r^2)$ below.  Formally,
\begin{corollary} Let $r, \eps > 0$. Let $\Xstar$ be a symmetric $n \times n$ matrix. Then, for $T = O(\log(n/\eps))$,  the output of SAltLS satisfies $\SE_F (\U,\U^*) \leq \eps$ with probability $9/10$, provided that sampling probability $p \geq p_{init} + p_{LS}$, where
\begin{equation}
	p_{\mathrm{init}} \geq C{ \kappa^2 r^2 \mu  \log n}/{n },
	\quad  p_{\mathrm{LS}}  \geq C{ \mu^2\kappa^2 r^2 \log(n/\eps)\log^2 n}/{n}.
	\label{eq:HardtNew}
\end{equation}
\end{corollary}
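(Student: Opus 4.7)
The plan is to substitute the initialization step in Hardt's SAltLS algorithm with the row-norm-clipping initialization (lines 2--4 of Algorithm \ref{altgdmin}) and to then reuse Hardt's per-iteration analysis for the smooth LS updates essentially unchanged. The point is that the improved sample complexity comes entirely from a better initialization bound; the iterative phase of \cite{hardt2014understanding} is left alone, and our contribution is to show that its required preconditions can be met with a cheaper initialization.

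First, I would apply Lemma \ref{lem:init} with $\delta$ set to a sufficiently small numerical constant (the basin-of-attraction constant implicit in Hardt's iterative analysis; one can take $\delta = 0.1/\kappa^2$ as in the AltMin corollary, which is no stronger than what the smooth LS recursion needs). Under $p_{\mathrm{init}} \geq C\kappa^2 r^2 \mu \log n / n$, this yields, with probability at least $1 - n^{-10}$, both $\SE_F(\U^{(0)},\U^*) \leq \delta$ and the $1.5\mu$-row-incoherence bound $\|\u^{j\,(0)}\| \leq 1.5\mu\sqrt{r/n}$. In the symmetric $n \times n$ setting considered here, $q = n$, so the $\log q$ in Lemma \ref{lem:init} becomes $\log n$.

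Next, I would verify that the hypotheses required to initiate Hardt's inductive argument are met by $\U^{(0)}$. Reading Section 2 of \cite{hardt2014understanding}, one sees that the iterative phase only feeds on (i) a subspace-distance bound on $\SE(\U^{(0)},\U^*)$ (which can be extracted from $\SE_F$ since $\SE_2 \le \SE_F$), and (ii) row-incoherence of $\U^{(0)}$. Both are supplied by Lemma \ref{lem:init}, so the initialization component of Hardt's algorithm can be swapped out without affecting the rest of their proof. Finally, I would chain Hardt's per-iteration contraction result over $T = O(\log(n/\eps))$ iterations. Each smooth LS iteration uses fresh samples with probability at least $p_{\mathrm{LS}} \geq C\mu^2\kappa^2 r^2 \log(n/\eps)\log^2 n / n$ and contracts $\SE_F$ by a constant factor with high probability per iteration; a union bound over the $T$ iterations together with the initialization yields the stated $9/10$ success probability, so $p \geq p_{\mathrm{init}} + p_{\mathrm{LS}}$ suffices.

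The main obstacle is confirming that Hardt's inductive step is genuinely agnostic to the particular mechanism used to produce $\U^{(0)}$, so long as the initial subspace-distance and incoherence bounds hold. A careful reading of the smoothed LS analysis in \cite{hardt2014understanding} shows that only these two quantities enter the per-iteration recursion (beyond the independent-sampling assumption), so the substitution is legitimate. A minor secondary check is that the $\SE_F$-based guarantee from Lemma \ref{lem:init} implies the $\SE_2$-based hypothesis Hardt uses; this follows from $\SE_2 \le \SE_F$, and the constant-factor slack is absorbed into $\delta$.
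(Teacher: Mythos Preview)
Your proposal is correct and matches the paper's approach exactly: the paper states that the corollary is a direct consequence of Theorem~6.1 of \cite{hardt2014understanding} combined with Lemma~\ref{lem:init}, which is precisely the substitution you describe. Your additional checks (that Hardt's iterative phase only consumes the subspace-distance and incoherence guarantees, and that $\SE_2 \le \SE_F$) are the right details to verify but are not spelled out in the paper.
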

This is a direct corollary of Theorem 6.1 of \cite{hardt2014understanding} combined with our initialization lemma, Lemma \ref{lem:init}.

\section{Simulation Results} \label{sims}
\label{sec:results}
\begin{figure}
	\begin{center}
		\begin{subfigure}{0.35 \linewidth}		
			\includegraphics[width=0.99\linewidth]{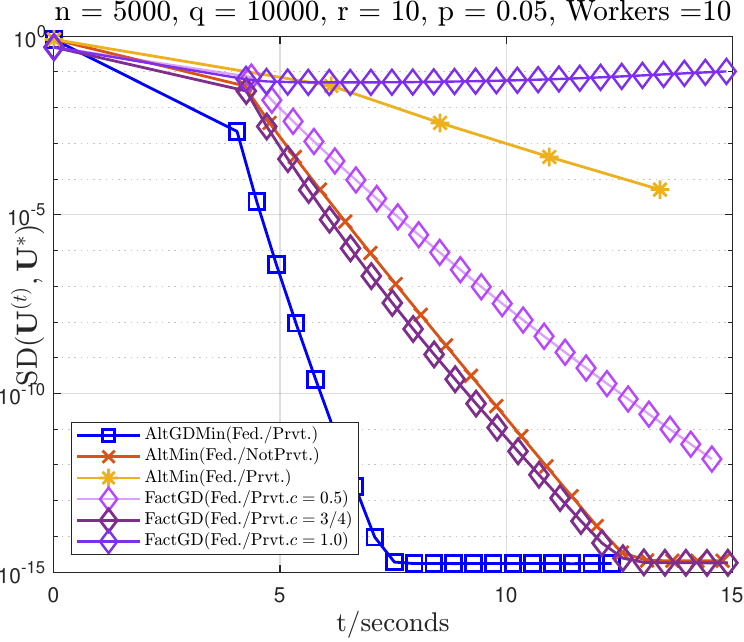}
			\caption{Federated, $r=10$.}
			\label{fig:r10Fed}
		\end{subfigure}
		\begin{subfigure}{0.35 \linewidth}
			\includegraphics[width=0.99\linewidth]{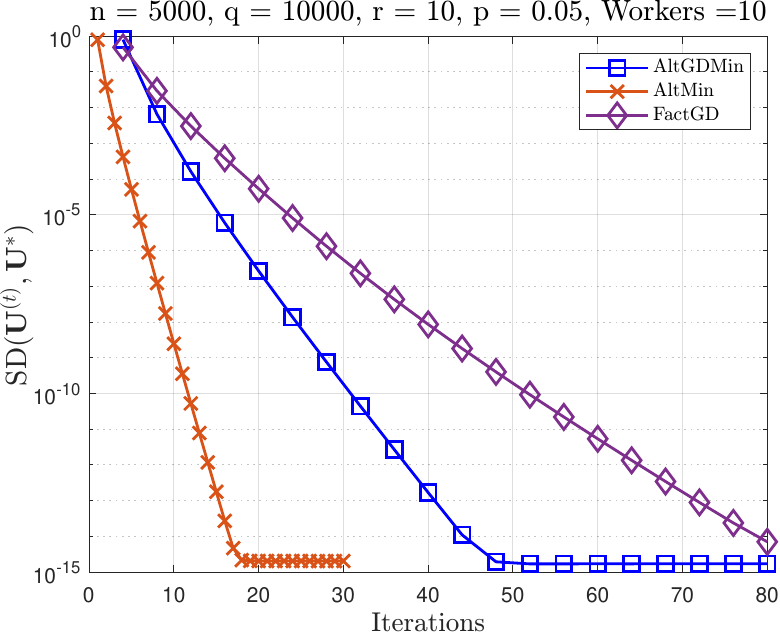}
			\caption{Error against iterations, $r=10$.}
		\end{subfigure}		
\\ \vspace{0.1in}
		\begin{subfigure}{0.35 \linewidth}	
			\includegraphics[width=0.99\linewidth]{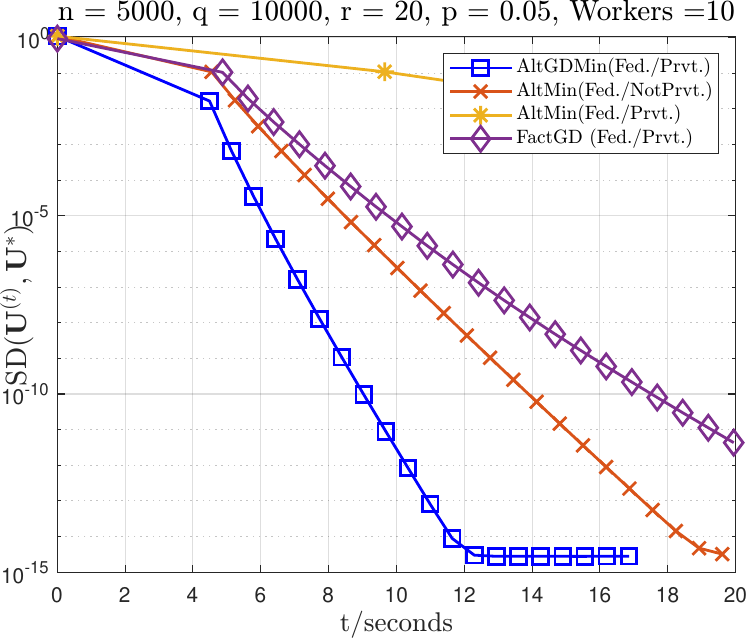}
			\caption{Federated, $r=20$.}
		\end{subfigure}
		\begin{subfigure}{0.35 \linewidth}
			\includegraphics[width=0.99\linewidth]{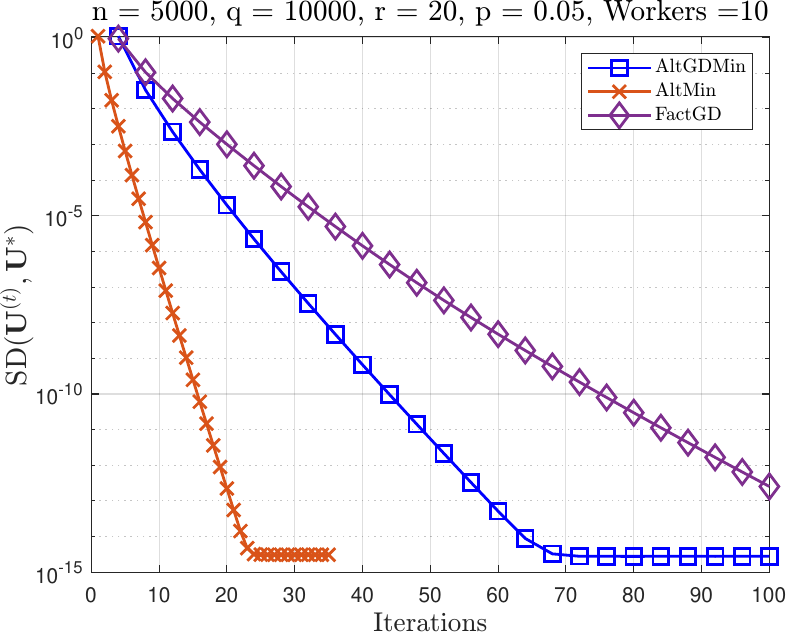}
			\caption{Error against iterations, $r=20$.}
		\end{subfigure}
		
	\caption{\small\sl{Figures (a),(c) compare federated implementations of AltGDmin (proposed), FactGD and AltMin. The results match what our theory (sufficient conditions) predicts. AltGDmin is the fastest due to its lowest communication-efficiency and due to all three having comparable computation cost.
Figures (b), (d) compare iteration complexity numerically. The results match theory once again. 
Two versions of AltMin are compared: AltMin (Fed/Not-Prvt) uses exact LS solution for updating both $\B$ and $\U$. AltMin (Fed/Prvt) uses multiple gradient descent iterations to solve the LS problem for updating $\U$.
In (a), we also compare FactGD with three choices of step size. See experiments' description for details.%
%
}}
\label{figs}
	\vspace{-0.25in}
	\end{center}
\end{figure}


\subsubsection {Experimental Setup} We plot averaged subspace distance at the iteration $t$ against the average time taken until iteration $t$, with the averages being computed over 100 Monte Carlo runs. The averaging is over the observed entries which are generated uniformly at random.
The matrix $\Xstar = \Ustar \Bstar$ was generated once: we let $\Ustar$  be an orthonormalized $n \times r$ random Gaussian matrix and $\Bstar$ an $r \times q$ random Gaussian matrix.  We used the `parfor' loop in MATLAB to distribute the computation across 10 workers, with each worker being an individual core of a multi-core CPU (13-th Gen. Intel Core i7 with 32GB RAM and 16 cores). To compute the left-singular vectors needed for the initialization of federated algorithms, we used the federated power method and performed 15 power iterations. Also, for both AltMin and AltGDMin, we do not sample-split, i.e., we run each iteration of the algorithms on all observed entries.
\subsubsection{Step-size and other parameters} For  FactGD (Centralized), we used the code provided by the authors of that work \cite{rpca_gd}.  The step size was set $\eta_{\text{FactGD}} = pc/\lVert \Y \rVert$, as also done in the authors' own implementation of their algorithm. Setting $c = 0.75$ showed the fastest convergence for our simulations (see Fig. \ref{fig:r10Fed}). For AltMin (Fed./NotPrvt.) and AltMin(Fed./Prvt.), we wrote our own MATLAB implementation; the latter uses GD to solve the LS problem for updating $\U$. We set the number of GD iterations for solving each LS problem to 10 with step size $\eta = p/\| \Y \|^2$, which showed the fastest convergence. Note that the $\U$-update least-squares problem is convex, and the chosen step-size is an upper bound on the Lipschitz constant of the expected gradient of the objective function at $t=0$ \cite{abbasi2023fast}.   AltGDMin is also implemented with step-size $\eta = p/\| \Y \|^2$, which approximates the step size choice suggested by Theorem \ref{thrm}, since $ ||\E[ \Y]||^2 = p^2 \sigmaxTwo$. AltMin and AltGDMin were both initialized by lines 2-4 of Algorithm \ref{altgdmin}. The  incoherence parameter $\mu$ was estimated as $\hat \mu = \argmin_{\mu} \rV \u^{(0)j} \rV \leq \mu \sqrt{r/n} \, \forall j \in [n]$.
\subsubsection{Observations}   Federated algorithms are faster than their centralized counterparts, except AltMin (Fed. Prvt.). The proposed algorithm AltGDMin (Fed.) is faster than all benchmark methods, especially at the higher value of $r=20$. Specifically,
{\em AltGDMin (Fed.) converges to $\Ustar$ in approximately 12 seconds, compared to nearly 20 seconds for the second fastest AltMin (Fed. NotPrvt.). } We interpret/explain these observations and the benchmark methods below.
%


%
\subsubsection {AltGDMin} AltGDMin (Fed.) has faster convergence because of low communication complexity and low time complexity at the center.  The upstream communication complexity, $\min(\sum_{k \in \S_\ell} \nabla_{\U} f(\U,\b_k) ,nr) = nr$, is low because the nodes sum the column-wise gradients 
\subsubsection {FactGD} $\nabla_{\U} f = \nabla_{\X}\B^\intercal \in \mathbb{R}^{n \times r}$ and $\B\B^\intercal \in \mathbb{R}^{r \times r}$  are computed at the nodes and transmitted upstream. The total upstream communication cost is $nr + r^2$, higher than  the $nr$ communication cost of AltGDMin. 
$\B$ is updated locally (GD iteration and normalizing) at the nodes but this requires two data exchanges with the center. This is because the gradient of $\B$, $(\B\B^\intercal - \U^\intercal\U)\B$, with respect to the norm balancing term, $\lV \U^\intercal\U -\B\B^\intercal \rV_F^2$, cannot be computed at the nodes. The nodes compute  and transmit $\sum_{k \in \mathcal{S}_{\ell}} \b_{k}\b_{k}^\intercal$,  which are summed at the center to form $\B\B^\intercal$, and transmitted back to the nodes. The center also computes and transmits $\U^\intercal\U$. Then, at the nodes, the partial gradient $(\B\B^\intercal - \U^\intercal\U)\sum_{k \in \mathcal{S}_{\ell}} \b_{k}\b_{k}^\intercal$ is computed, followed by column-normalizing. The federated and centralized implementations of FactGD do not use `for' loops at all, either at the nodes or at the center. But, FactGD is slower than AltMin and AltGDMin because of its higher communication complexity, $O(nr + r^2)$ and 2 data exchanges, compared to  1 data exchange and $O(nr)$, $O(qr)$ communication complexity for AltGDMin and AltMin (Fed.NotPrvt.), respectively.
\subsubsection {AltMin} AltMin (Fed./NotPrvt.)  is slower than AltGDMin (Fed.) because the $n$ $\U$-update LS problems are solved sequentially at the center with complexity $\lvert \Omega \rvert r^2$, compared to the $nr^2$ complexity of computing the QR decomposition for AltGDMin. Also, AltMin (Fed./NotPrvt.) is not private because the nodes communicate the updated $\b_k^{(t+1)}$ to the center. For AltMin(Fed./Prvt.), the $\U$-update LS problems are solved by multiple gradient descent iterations at the node. 
While private, the GD version of AltMin is slow because of the communication overhead of transmitting/receiving the gradients several times ($\log(1/\eps)$ times for $\eps$-accuracy) in each iteration. AltMin (Cntrl.) is slower than federated methods because both $\U$, $\B$ LS  problems are solved sequentially by the closed form solution at the center, but it is still the fastest centralized algorithm overall, possibly because of $\log(1/\eps)$ iteration complexity, which is lower than that of other methods.

\section{Conclusions}\label{conclude}
In this work we developed and analyzed the Alternating GD and Minimization  (AltGDmin) algorithm for solving the low rank matrix completion (LRMC) problem. The design of AltGDmin is motivated by a federated setting. 
%
Using our results (sample and iteration complexity bounds) we argued that, in a federated setting, AltGDmin is the most communication-efficient solution. It is also one of the two fastest private solutions and has the second smallest sample complexity. In addition, we were able to use our lemmas to prove an improved sample complexity guarantee for AltMin, which is the fastest centralized solution for LRMC.


\appendices \renewcommand\thetheorem{\Alph{section}.\arabic{theorem}}
\newcommand\gradUtilde{\stackrel{\sim}{\smash{\gradU}\rule{0pt}{1.01ex}}}

\section{Proof of the Corollary \ref{cor_noisy} (noisy LRMC)} \label{proof_noisy}
We can prove the following by directly modifying the noise-free case proof.
\begin{claim}
		Let the observed matrix $\Y = \X^*_{\Omega} + \W_{\Omega}$ be corrupted by additive noise $\W$ such that $\lvert \W_{jk} \rvert \leq \eps_{noise} \lvert \X^*_{jk} \rvert$. Assuming also that $p \geq C \kappa^6 r^2 \mu \log q \log(1/\eps)/n$ and  $\eps_{noise} \leq 1/(\sqrt r \kappa^3)$,  then, after $T = \kappa^2\log(1/\eps_{noise})$ iterations, $\SEF(\U^{(T)},\U^*) \leq \kappa^2 \eps_{noise} \sqrt{r}$.

		\label{thm:Nsy}
\end{claim}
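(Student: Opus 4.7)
My plan is to mirror the induction in Sec.~\ref{thrm_proof} closely, redoing Lemmas~\ref{lem:init}--\ref{lem:rowIncoh} with an additive noise contribution and running the same contraction argument until the error hits a noise floor. The clean starting point is the modified LS solution
\[
\b_k - \g_k = (\U^\intercal \S_k \U)^{-1}\bigl[\U^\intercal \S_k (\I - \U\U^\intercal)\Ustar \bstar_k + \U^\intercal \S_k \w_k\bigr],
\]
so the only new ingredient needed in Lemma~\ref{lem:BerrLmma} is a matrix-Bernstein bound on $\sum_{jk}\xi_{jk}\, \u^j w_{jk}\e_k^\intercal$ (the vectorized noise contribution after premultiplying by $\F^{-1}$). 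Using the hypothesis $|w_{jk}|\le \eps_{noise}|x^*_{jk}|$ together with the row incoherence of $\U$ that we maintain in the induction and the column incoherence $\|\bstar_k\|\le \mu\sqrt{r/q}\sigmax$, this term is bounded by $C\eps_{noise}\sqrt{r}\,\sigmax$ whp; the resulting bound on $\|\B-\G\|_F$ becomes $\max(\deltatold\sigmax,\, \eps_{noise}\sqrt{r}\sigmax)$, up to constants. This is where the $\sqrt{r}$ factor in the final error enters: because LRMC samples are only column-wise local, we can only control $\|\B-\G\|_F$ and not $\|\b_k-\g_k\|$, and the noise bound is entrywise.

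The same structural modification carries through the remaining lemmas. In Lemma~\ref{lem:Ugrad} the gradient picks up the extra rank-one sum $\sum_{jk}\xi_{jk}\e_j w_{jk}\b_k^\intercal$; a second matrix-Bernstein bound, combined with the just-proved incoherence of $\b_k$ from Lemma~\ref{lem:BIncohLmma}, yields a deviation of order $\eps_{noise}\sqrt{r}\,p\sigmax^{*2}$. Lemma~\ref{lem:rowIncoh} is handled similarly on the row-wise gradient $\gradU^j$, where the noise term $\sum_k \xi_{jk}w_{jk}\b_k^\intercal$ has the same structural form $\max(\|\u^j\|,\|\ustar^j\|)\cdot(\cdot)$ as in the noise-free case because $|w_{jk}|\le \eps_{noise}|x^*_{jk}|\le 2\eps_{noise}\max(\|\u^j\|,\|\ustar^j\|)\mu\sqrt{r/q}\sigmax$. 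Consequently the row-incoherence recursion and the contraction inequality both survive verbatim \emph{as long as} $\kappa^2\sqrt{r}\,\eps_{noise}\le c\,\deltatold$, which is the noise floor at which the induction must stop.

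Initialization (Lemma~\ref{lem:init}) requires reproving the spectral bound on $\SE_F(\U^{(00)},\Ustar)$ from \cite{Chen_2021} with $\Y$ replaced by $\Xstar_\Omega + \W_\Omega$. The added perturbation $\|\W_\Omega\|$ is bounded by matrix Bernstein using $|w_{jk}|\le \eps_{noise}|x^*_{jk}|\le \eps_{noise}\mu^2\sigmax r/\sqrt{nq}$; the assumption $\eps_{noise}\le c/(\sqrt{r}\kappa^3)$ is exactly the threshold that makes this noise perturbation smaller than the noise-free spectral bound, so the clipped QR output still satisfies $\deltazero\le c/\kappa^2$ and the $1.5\mu$ row incoherence. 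Stitching everything together: run the noise-free contraction $\deltat \le (1-c/\kappa^2)\deltatold$ for $T = C\kappa^2\log(1/\eps_{noise})$ iterations until $\deltatold$ reaches $C\kappa^2\sqrt{r}\eps_{noise}$, then the extra noise terms in the recursion match the contraction term and the error cannot decrease further; finally translate this to $\|\X^{(T)}-\Xstar\|_F$ via Lemma~\ref{lem:Berr_implic}.

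The main obstacle I anticipate is bookkeeping: showing that the noise floor is exactly $\kappa^2\sqrt{r}\eps_{noise}$ and not something worse. Two places demand care. First, the $\sqrt{r}$ factor is tight only if we work consistently in $\SE_F$ (switching to $\SE_2$ at any step would cost an extra $\sqrt{r}$), which is why Lemma~\ref{lem:BerrLmma} is proved through $\|\D\|$ rather than $\|\D\|_F$. Second, the $\kappa^2$ factor in the noise floor comes from requiring $\eps_{noise}\sqrt{r}\cdot \sigmax^2 \le c\,\deltatold\sigminTwo$ (so that the noise-induced gradient deviation is absorbed into the contraction denominator $\eta p \sigminTwo$); any sloppier dependence would degrade the threshold. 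Both of these just require carrying the bookkeeping from Sec.~\ref{thrm_proof} literally, adding the noise terms at each lemma, and then taking the maximum of the two error contributions at the very end.
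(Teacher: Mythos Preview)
Your proposal is correct and follows essentially the same approach as the paper: you add a matrix-Bernstein-bounded noise term $\W_\B=\sum_k \U_k^\intercal \w_k \e_k^\intercal$ to the LS error bound and a noise term $\W_\U=\W_\Omega \B^\intercal$ to the gradient bound, handle row incoherence by observing the extra row-gradient contribution is $O(\eps_{noise}\|\ustar^j\|\sigmaxTwo)$, redo initialization via Wedin plus a matrix-Bernstein bound on $\|\W_\Omega\|$, and run the contraction until the floor $\kappa^2\sqrt{r}\eps_{noise}$. Your identification of where the $\sqrt r$ and $\kappa^2$ factors in the floor come from matches the paper's bookkeeping exactly.
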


\subsection{Proof}
Restricting $\eps_{noise} \leq 1/(\sqrt{r} \kappa^3)$ (as done in the statement of Theorem \ref{thm:Nsy}) ensures that the initialization bound in Lemma \ref{lem:init} differs only by a constant factor (see Lemma \ref{lem:nsyInit}).

With  noise $\W_\Omega$, the Least-Squares update \eqref{eq:LS_Updt} changes to
\begin{equation*}
	\b_k - \g_k
 = \F_k^{-1} (\d_k +\U_k^\intercal\w_k), \forall k \in [q],	\label{eq:nsyLS_Updt}
\end{equation*}
where $\w_k$ is the $k$-th column of $\W_\Omega$. Consequently, the bound in \eqref{eq:B-GExprsn} changes to $\|\B - \G\|_F \le \|\F^{-1}\|  (\|\D\|_F  + \lV \W_{\B}\rV_F)$, where $\W_B = \sum_k \U_k^\top \w_k \e_k^\top$. In Lemma \ref{lem:nsyTrms}, we bound $\lV \W_{\B} \rV_F  \leq 2 \sqrt r \eps_{noise} p \sigmax$, ensuring that $\|\B - \G\|_F \le \max(\deltatold,  2\sqrt{r} \eps_{noise}) \sigmax$. Assuming $2 \sqrt{r} \eps_{noise} \leq \deltatold$,  Lemmas \ref{lem:BerrLmma} and \ref{lem:Berr_implic} continue to hold. Because $\lV \U_k^\intercal \w_k \rV \leq 2 \eps_{noise} p \lV \b_k^* \rV_2$ (Lemma \ref{lem:nsyTrms}), $ \lV (\U_k^\intercal\U_k)^{-1}\U_k^\intercal\w_k \rV_2 \leq  2\eps_{noise} \lV \b^*_k\rV_2$, which, together with $\lV \b_k \rV_2 \leq 1.1 \sigmax \mu \sqrt{r/q}$ (Lemma \ref{lem:BIncohLmma}), bounds the noisy least-squares update $\lV \tilde \b_k \rV_2 \leq  (1.1 + 2\eps_{noise})\mu \sigmax \sqrt{r/q}$, thereby proving $\mu$-incoherence in the noisy setting.

With noise, the gradient is $\gradUtilde  = \gradU -  \W_{\U}$, where $\W_\U = \W_{\Omega}\B^\intercal$. We bound $\lV \W_{\U} \rV_F \leq 2\eps_{noise}\sqrt{r}p\sigmaxTwo$ in Lemma \ref{lem:nsyTrms}, and assume $\eps_{noise} \leq \delta^{(t-1)}/\sqrt{r}$ so that the bound in Lemma \ref{lem:Ugrad} continues to hold. To prove row-incoherence in the noisy setting, the proof of Lemma \ref{lem:rowIncoh} needs to be only slightly modified. \eqref{eq:rowGradjDvtn} still applies and \eqref{eq:ujBnd} has the additional row-vector term $\eta p \w^j \B^{*\intercal}$, which can be bounded $\lV \w^j \B^{*\intercal} \rV_2 \leq \lV \w^j \rV_2  \lV \B^* \rV \leq \eps_{noise} \lV \x^{*j} \rV_2  \sigmax \leq \eps_{noise} \lV\u^{*j} \rV  \sigmaxTwo$. This term contributes $\eps_{noise}\lV \u^{*j} \rV $in the final bound, that is, $\lV \tilde \u^{+j} \rV_2 \leq (1 - 0.15/\kappa^2)\lV  \u^j \rV_2 + 2 \lV \u^{*j} \rV_2  + \eps_{noise}\lV \u^{*j} \rV$, which ensures $\kappa^2 \mu$ incoherence for the noisy gradient update  $\tilde \u^{+j}$, since $\eps_{noise} \leq 1$.

The proof details are the same as those for proving Theorem \ref{thrm} given earlier in Sec. \ref{thrm_proof}.


\subsection{Noisy case lemmas} 
Our assumption $\lvert \W_{jk} \rvert \leq \eps_{noise}  \lvert \X^*_{jk} \rvert$ also implies that
\begin{equation}
\max\big(\frac{\lV \w_k \rV}{ \lV \x^*_k\rV},\frac{\lV \w^j \rV}{\lV \x^{*j} \rV}\big) \le \eps_{noise}, \text{ and } |\W_\jk| \le \eps_{noise} \mu^2 \sqrt{r/q} \sqrt{r/n} \sigmax
\label{assuw1}
\end{equation}
This fact is used in proving both lemmas below.

\begin{lemma}
	Let the observed matrix $\Y = \X^*_{\Omega} + \W_{\Omega}$ be corrupted by additive noise $\W$ such that $\lvert \W_{jk} \rvert \leq \eps_{noise} \lvert \X^*_{jk} \rvert$. Assuming also that $p \geq C \kappa^6 r^2 \mu \log q/n$ and  $\eps_{noise} \leq 1/\sqrt r \kappa^3$,  then, $\SEF(\U^{(00)},\U^*) \leq c/\kappa^2$, where $\U^{(00)} \in \mathbb{R}^{n \times r}$ are the left-singular vectors of $\Y$.
	\label{lem:nsyInit}
\end{lemma}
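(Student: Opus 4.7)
The plan is to extend the noise-free initialization analysis (Lemma \ref{lem:init}) via a Wedin-type $\sin\Theta$ perturbation argument. Let $\U^{(00)}_0$ denote the top-$r$ left singular vectors of the clean sparsified matrix $\Y_0 := \Xstar_\Omega$, while $\U^{(00)}$ denotes those of the noisy $\Y = \Y_0 + \W_\Omega$. Since $\SE_F$ is a subspace pseudo-metric, the triangle inequality gives
\[
\SE_F(\U^{(00)}, \Ustar) \;\le\; \SE_F(\U^{(00)}_0, \Ustar) \;+\; \SE_F(\U^{(00)}, \U^{(00)}_0),
\]
so it suffices to show each summand is at most $c/(2\kappa^2)$. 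The first summand is exactly what Lemma \ref{lem:init} (via Theorem 3.22 of \cite{Chen_2021}) controls: the assumption $p \ge C\kappa^6 r^2\mu\log q/n$ yields $\SE_F(\U^{(00)}_0,\Ustar)\le C\sqrt{\kappa^2\mu r^2\log q/(np)} \le c/(2\kappa^2)$ with high probability.

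For the second summand I would invoke Wedin's $\sin\Theta$ theorem in the form
\[
\SE_F(\U^{(00)}, \U^{(00)}_0)\;\le\;\frac{C\sqrt{r}\,\|\W_\Omega\|}{\sigma_r(\Y_0)-\sigma_{r+1}(\Y_0)}.
\]
The denominator is bounded below via Weyl's inequality: since $p\Xstar$ is exactly rank $r$, $\sigma_{r+1}(\Y_0)\le \|\Y_0-p\Xstar\|$ and $\sigma_r(\Y_0)\ge p\sigmin-\|\Y_0-p\Xstar\|$; matrix Bernstein (already implicit in the proof of Lemma \ref{lem:init}) gives $\|\Y_0-p\Xstar\|\le(c/\kappa^2)\,p\sigmin$, so the gap is at least $p\sigmin/2$. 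To bound $\|\W_\Omega\|$ I would split as $\|\W_\Omega\|\le\|\W_\Omega-p\W\|+p\|\W\|$. The random deviation $\|\W_\Omega-p\W\|$ is controlled by matrix Bernstein applied to $\sum_{j,k}(\xi_\jk-p)W_\jk\,\e_j\e_k^\intercal$, using the entrywise bound $|W_\jk|\le\eps_{noise}\mu^2 r\sigmax/\sqrt{nq}$ and the row/column squared-sums $\sum_k W_\jk^2\le \eps_{noise}^2\mu^2(r/n)\sigmaxTwo$ (and symmetrically $\sum_j W_\jk^2\le \eps_{noise}^2\mu^2(r/q)\sigmaxTwo$). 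The deterministic factor $\|\W\|$ is handled by the Riesz--Thorin interpolation $\|\W\|^2\le\|\W\|_{1\to 1}\|\W\|_{\infty\to\infty}$; Cauchy--Schwarz combined with the $\mu$-incoherence of $\Ustar,\Vstar$ gives $\|\W\|_{1\to 1}\le\eps_{noise}\mu\sqrt{rn/q}\,\sigmax$ and $\|\W\|_{\infty\to\infty}\le\eps_{noise}\mu\sqrt{rq/n}\,\sigmax$, so $\|\W\|\le\eps_{noise}\mu\sqrt{r}\,\sigmax$.

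Plugging these estimates into the Wedin bound and substituting $p\ge C\kappa^6 r^2\mu\log q/n$ reduces everything to verifying that $\eps_{noise}\le 1/(\sqrt{r}\kappa^3)$ suffices to make $\SE_F(\U^{(00)},\U^{(00)}_0)\le c/(2\kappa^2)$. The main obstacle will be the sharp control of the deterministic $p\|\W\|$ contribution: since $\W$ is adversarial and only entrywise-controlled, the naive estimate $\|\W\|\le\|\W\|_F\le \eps_{noise}\sqrt{r}\,\sigmax$ would force an order-$1/(r\mu\kappa^3)$ threshold on $\eps_{noise}$, so the Riesz--Thorin route (which exploits the fact that every row and column of $\W$ inherits an $\ell_1$ bound from the incoherence of $\Ustar,\Vstar$) is what buys the extra factors needed for the stated hypothesis, with $\mu$ absorbed into the numerical constant $c$. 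A secondary technical point is to union-bound the two Bernstein events for $\Y_0-p\Xstar$ and $\W_\Omega-p\W$ at the stated failure probability without inflating the sample-complexity constant beyond $C\kappa^6 r^2\mu\log q/n$.
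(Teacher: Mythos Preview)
Your two-step route (triangle inequality on $\SE_F$, then Wedin on $\Y$ versus the noise-free $\Y_0=\Xstar_\Omega$) is more elaborate than the paper's, which applies a single Wedin bound comparing $\Y$ directly to the deterministic rank-$r$ target $p\Xstar$, splits the perturbation as $\|\Xstar_\Omega-p\Xstar\|+\|\W_\Omega\|$, and handles both pieces by matrix Bernstein; this avoids ever estimating the random spectral gap $\sigma_r(\Y_0)-\sigma_{r+1}(\Y_0)$.

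There is a genuine gap in your noise-term analysis. The Riesz--Thorin step does not buy what you claim: your row/column $\ell_1$ bounds are correct, but their geometric mean is $\eps_{noise}\mu\sqrt{r}\,\sigmax$, which is \emph{worse} (by the factor $\mu\ge 1$) than the ``naive'' bound $\|\W\|\le\|\W\|_F\le\eps_{noise}\|\Xstar\|_F\le\eps_{noise}\sqrt{r}\,\sigmax$ that you dismissed. Feeding either estimate into your chosen Wedin form $\SE_F\le C\sqrt{r}\,\|\W_\Omega\|/\text{gap}$ produces a deterministic contribution of order $r\,\eps_{noise}\kappa$, which forces $\eps_{noise}\le c/(r\kappa^3)$ --- a factor $\sqrt{r}$ stronger than the stated hypothesis. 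The loss comes from pairing an operator-norm perturbation with the Frobenius-norm subspace distance via the crude $\SE_F\le\sqrt{r}\,\SE_2$ step. The paper avoids this by keeping the noise term in Frobenius form throughout (effectively controlling $\|\W_\Omega\Vstar{}^\intercal\|_F$ rather than $\sqrt{r}\,\|\W_\Omega\|$), so that the deterministic piece is $p\|\W\|_F\le p\eps_{noise}\sqrt{r}\,\sigmax$ with no additional $\sqrt{r}$ prefactor, and the claimed threshold $\eps_{noise}\le 1/(\sqrt{r}\kappa^3)$ suffices.
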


\begin{proof}
By Wedin's sin theta theorem (Frob norm version),
	\begin{equation}
		\SE_F(\U^{(00)}, \Ustar) \le C \frac{\|\Y - p\Xstar\|_F}{\sigmin}
		\le  C \frac{\|\Xstar_\Omega- p\Xstar\|_F + \|\W_\Omega \|_F}{p \sigmin}
		\le C  \sqrt{\frac{\kappa^2 \mu r^2 \log n}{p}}   + \frac{\|\W_\Omega \|_F}{ p \sigmin},
		\label{eq:SD_Noisy}
	\end{equation}
where we have used Lemma 3.21 of \cite{spectral_init_review}, with $p \geq  C \kappa^6 r^2 \mu \log q/n$,  to bound $(\lV \X_\Omega^* - p \X^*\rV_F / p\sigmin) \leq c/\kappa^2$. By using matrix-Bernstein inequality, we can show that $\|\W_\Omega\|_F \le 2 \eps_{noise} \sqrt r p \sigmax$ whp.
By the assumed upper bound on $\eps_{noise}$, we can then argue that  $\SEF(\U^{(00)},\U^*) \leq c/\kappa^2$. The rest of the proof is exactly the same as in the noiseless case (Lemma \ref{lem:init}).
\end{proof}

\begin{lemma}
	\label{lem:nsyTrms}
	Assume that the observed matrix $\Y = \X^*_{\Omega} + \W_{\Omega}$ is corrupted by additive noise $\W$ such that $\lvert \W_{jk} \rvert \leq \eps_{noise} \lvert \X^*_{jk} \rvert$. Assuming also $\lV \u^j \rV \leq \mu_{\u} \sqrt{r/n}$, where $\mu_u = \kappa^2 \mu$, $\lV \b_k \rV \leq \sigmax \mu\sqrt{r/q}$ and $\sigma_{\max}(\B) \leq C\sigmax$. For $\eps_{noise} \leq 1/(\sqrt{r}\kappa^3)$,
	\ben
	\item $\lV \W_{B} - \mathbb{E}[\W_{\B}] \rV_F \leq \eps_{noise} \sqrt r p\sigmax$, w.p. greater than $1 - \exp(\log 2q - np / (\mu_{\u}^2 r^2))$ 
	\item $\lV \W_{\B} \rV_F \leq 2\sqrt{r} \eps_{noise} p \sigmax$, w.p. same as 1).
	\item $\lV \U_k^\intercal \w_k \rV_2 \leq 2 \eps_{noise} p \lV \b^*_k \rV_2$, w.p. same as 1) 
	\item $\lV \W_{\U} - \mathbb{E}[\W_\U] \rV_F \leq \eps_{noise} \sqrt r p \sigmaxTwo$, w.p. greater than $1 - \exp(\log 2q - np / (\mu^2 r^2))$ 
	\item $\lV \W_\U \rV_F \leq 2 \eps_{noise} \sqrt r p \sigmaxTwo$, w.p. same as 4).
	\een
\end{lemma}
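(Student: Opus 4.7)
My plan is to prove all five bounds using a single unifying technique: write each random matrix as a sum of independent (over the Bernoulli variables $\xi_\jk$) rank-one terms, apply matrix Bernstein to bound the operator-norm deviation from the mean, bound the mean directly, and combine via the triangle inequality (with $\|\cdot\|_F \le \sqrt r \|\cdot\|$ used to convert operator-norm output to the advertised Frobenius bounds, since each deviation matrix has rank at most $r$). Throughout I use the noise assumption $|W_\jk| \le \eps_{noise}|X^*_\jk|$ together with the singular-vector incoherence of $\Xstar$ to derive the workhorse inequalities $|W_\jk| \le \eps_{noise}\mu^2(r/\sqrt{nq})\sigmax$, $\sum_j W_\jk^2 \le \eps_{noise}^2\mu^2(r/q)\sigmaxTwo$, $\sum_k W_\jk^2 \le \eps_{noise}^2\mu^2(r/n)\sigmaxTwo$, and $\|\W\|_F \le \eps_{noise}\sqrt r\sigmax$.

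For part 1, I write $\W_\B - \E[\W_\B] = \sum_{j,k}(\xi_\jk - p)W_\jk\u^j\e_k^\intercal$ (since $\E[\W_\B] = p\U^\intercal\W$). Using $\|\u^j\| \le \mu_u\sqrt{r/n}$ and the entry/column-sum bounds on $\W$, together with the identity $\sum_j \u^j(\u^j)^\intercal = \I_r$, I bound $L \lesssim \eps_{noise}\mu^2\mu_u r^{3/2}\sigmax/(n\sqrt q)$ and $\sigma^2 \lesssim p\eps_{noise}^2\mu^2(r/n)\sigmaxTwo$. Choosing operator-norm deviation target $t = \eps_{noise} p\sigmax$ then makes both $t^2/\sigma^2$ and $t/L$ of order at least $pn/(\mu_u^2 r)$, and the $\sqrt r$ Frobenius conversion delivers the claim. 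Part 2 follows by triangle inequality using $\|\E[\W_\B]\|_F \le p\|\W\|_F \le p\eps_{noise}\sqrt r \sigmax$.

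For part 3, for fixed $k$ I apply the vector form of matrix Bernstein to $\sum_j(\xi_\jk - p)W_\jk\u^j$, which equals $\U_k^\intercal\w_k - p\U^\intercal\w_k$ (where $\w_k$ denotes the $k$-th full noise column). The variance is at most $p\mu_u^2(r/n)\eps_{noise}^2\|\bstar_k\|^2$ via $\sum_j \u^j(\u^j)^\intercal = \I_r$ combined with $\sum_j W_\jk^2 \le \eps_{noise}^2\|\bstar_k\|^2$; setting $t = \eps_{noise} p\|\bstar_k\|$ and combining with the direct mean bound $\|p\U^\intercal\w_k\| \le p\eps_{noise}\|\bstar_k\|$ gives the claim, uniformly in $k$ via a union bound over the $q$ columns.

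Parts 4 and 5 mirror parts 1 and 2 for $\W_\U - \E[\W_\U] = \sum_{j,k}(\xi_\jk - p)W_\jk\e_j\b_k^\intercal$ with $\E[\W_\U] = p\W\B^\intercal$. The variance computation uses $\sum_k \|\w_k\|^2\b_k\b_k^\intercal \preceq (\max_k\|\w_k\|^2)\B\B^\intercal$, together with the assumed $\|\B\| \le C\sigmax$ and column incoherence $\max_k\|\w_k\|^2 \le \eps_{noise}^2\mu^2(r/q)\sigmaxTwo$, giving $\sigma^2 \lesssim p\eps_{noise}^2\mu^2(r/q)\sigmaxFour$; the choice $t = \eps_{noise} p\sigmaxTwo$ followed by Frobenius conversion proves part 4, and part 5 closes via triangle inequality with $\|\E[\W_\U]\|_F \le p\|\W\|_F\|\B\| \le C\eps_{noise}\sqrt r p\sigmaxTwo$. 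I anticipate no conceptual obstacle; the only care needed is in each variance computation, choosing a max-over-index bound rather than a Frobenius-sum bound whenever the structure allows, so that the dominant $\sigma^2$ term admits an exponent of the order $pn/(\mu_u^2 r^2)$ claimed by the lemma.
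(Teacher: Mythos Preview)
Your proposal is correct and takes essentially the same approach as the paper: write each noise term as a sum of independent rank-one summands indexed by $(j,k)$, apply matrix Bernstein for the operator-norm deviation, bound the expectation directly via $\|\W\|_F \le \eps_{noise}\sqrt r\,\sigmax$, and convert to Frobenius norm via the $\sqrt r$ factor. Your variance and $L$ bounds, the choice of deviation target $t$, and the column-wise union bound for part 3 all match the paper's argument (the paper's published proof is a one-line reference to ``matrix Bernstein plus the incoherence-derived bounds on $\W$'', and your outline fills in exactly those details).
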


\begin{proof}
All terms are bounded using \eqref{assuw1} and the matrix Bernstein inequality.
\end{proof}

\section{Proof of Corollary \ref{altmin_thm} (Improved guarantee for AltMin for LRMC)}\label{altmin_thm_proof}

	In section 4.2 of \cite{lowrank_altmin}, it is shown that the least-squares updates for $\U \in \mathbb{R}^{n \times r}$ and $\B \in \mathbb{R}^{r \times q}$ are equivalent to the following QR-based updates:
	\begin{align}
		 \B^{(t)} &= \argmin_{ \B} \lV \Y - \U^{(t)}  \B \rV_F^2, \,\,\,   \B^{(t)\intercal} \qreq \V^{(t)}\R_{\B} \label{eq:BLS_Updt}\\
		\widetilde \U^{(t+1)} &=  \argmin_{\widetilde \U} \lV \Y^\intercal - \B^{(t)} \widetilde \U \rV_F^2, \,\,\, \widetilde \U^{(t+1)^\intercal} \qreq \U^{(t+1)}\R_\U. \label{eq:ULS_Updt}
	\end{align}
	\subsubsection{$\mathbf{B}$-update \eqref{eq:BLS_Updt}\,/\,$\SE_F(\mathbf V^{(t)},\mathbf V^*)$ Bound}
	Note that the $\B$-update in \eqref{eq:BLS_Updt} is exactly the same as the $\B$-update for AltGDMin. Therefore, we use Lemma \ref{lem:BerrLmma} to bound $\SE_F(\V^{(t)},\V^*)$. Let $\P \coloneq \mathbf{I} - \V^*\V^{*\intercal}$. Then,
	\begin{align} \SE_F(\V^{(t)},\V^*)  &= \lV  \P \V^{(t)} \rV_F = \lV  \P \B^{\intercal} \R_{B}^{-1} \rV_F \leq \lV \P \B^\intercal  \rV_F \lV \R_{B}^{-1}\rV \leq {\lV (\B-\G)^\intercal \rV_F}/{\sigma_{\min}(\B)}, \label{eq:SEF_AltMin}
	\end{align}
	where $\P\B^\intercal = \P(\G^\intercal + (\B-\G)^\intercal) = \P(\B - \G)^\intercal$, because $\G^\intercal = \X^{*\intercal}\U^\intercal$  and the column-span of $\X^{*\intercal}$ is orthogonal to the span of $\P$.
By Lemma \ref{lem:BerrLmma}, $\lV  \B - \G \rV_F \leq \sigmin \deltatold/4$. By Lemma \ref{lem:Berr_implic},  assuming $\deltatold \leq 1/(4\kappa)$, $\sigma_{\min}(\B) \geq \sigmin/2$.  Substituting these bounds in \eqref{eq:SEF_AltMin} and recalling that $\deltatold \coloneq \SE_F(\U^{(t)},\U^*)$, w.p. greater than $1 - \exp(\log q - pn/(r^2 \mu^2\kappa^4))$,
	\begin{equation}
		\SE_F(\V^{(t)},\V^*) \leq  \frac{1}{2} \SE_F(\U^{(t)},\U^*). \label{eq:altMinVBnd}
	\end{equation}
	\subsubsection{Incoherence of $\mathbf{V}^{(t)}$} Recall that $\widetilde \B^\intercal \qreq \V\R$. Then, the $k$-th row, $\lV \v^k \rV_2  \leq {\lV \b_k\rV_2}/{\sigmin(\B)} \leq C \kappa \mu \sqrt{r/q}$, where the last inequality follows from Lemmas \ref{lem:Berr_implic} and  \ref{lem:BIncohLmma}. Thus, $\V$ is $\kappa \mu$-incoherent.

	\subsubsection{$\mathbf{U}$-Update \eqref{eq:ULS_Updt}\,/\,$\SE_F(\mathbf{U^{(t+1)}},\mathbf{U}^*)$ Bound and Incoherence of $\mathbf{U}^{(t+1)}$} 
The two AltMin steps are symmetric, so arguments analogous to the above help show that
		\begin{equation}
		\SE_F(\U^{(t+1)},\U^*) \leq  \frac{1}{2} \SE_F(\V^{(t)},\U^*). \label{eq:altMinUBnd}
	\end{equation}

	
	Combining \eqref{eq:altMinVBnd} and $\eqref{eq:altMinUBnd}$, we have $\SE_F(\U^{(t+1)},\U^*) \leq \SE_F(\U^{(t)},\U^*)/4 $. For initialization we use the few lines of our Algorithm 1 and Lemma \ref{lem:init}.  
Consequently, AltMinComplete  \cite{lowrank_altmin} initialized as given in the first few lines of our Algorithm 1  needs $T =\log(1/\eps)$ iterations for $\SE_F(\U^{(t)},\U^*) \leq \eps $.

\bibliographystyle{IEEEtran}	
\bibliography{../../bib/tipnewpfmt_kfcsfullpap,ref}

\begin{thebibliography}{10}
\providecommand{\url}[1]{#1}
\csname url@samestyle\endcsname
\providecommand{\newblock}{\relax}
\providecommand{\bibinfo}[2]{#2}
\providecommand{\BIBentrySTDinterwordspacing}{\spaceskip=0pt\relax}
\providecommand{\BIBentryALTinterwordstretchfactor}{4}
\providecommand{\BIBentryALTinterwordspacing}{\spaceskip=\fontdimen2\font plus
\BIBentryALTinterwordstretchfactor\fontdimen3\font minus
  \fontdimen4\font\relax}
\providecommand{\BIBforeignlanguage}[2]{{%
\expandafter\ifx\csname l@#1\endcsname\relax
\typeout{** WARNING: IEEEtran.bst: No hyphenation pattern has been}%
\typeout{** loaded for the language `#1'. Using the pattern for}%
\typeout{** the default language instead.}%
\else
\language=\csname l@#1\endcsname
\fi
#2}}
\providecommand{\BIBdecl}{\relax}
\BIBdecl

\bibitem{allrtnPpr}
A.~A. Abbasi, S.~Moothedath, and N.~Vaswani, ``Fast federated low rank matrix
  completion,'' in \emph{2023 59th Annual Allerton Conference on Communication,
  Control, and Computing (Allerton)}.\hskip 1em plus 0.5em minus 0.4em\relax
  IEEE, 2023, pp. 1--6.

\bibitem{matcomp_candes}
E.~J. Candes and B.~Recht, ``Exact matrix completion via convex optimization,''
  \emph{Found. of Comput. Math}, no.~9, pp. 717--772, 2008.

\bibitem{lowrank_altmin}
P.~Netrapalli, P.~Jain, and S.~Sanghavi, ``Low-rank matrix completion using
  alternating minimization,'' in \emph{Annual ACM Symp. on Th. of Comp.
  (STOC)}, 2013.

\bibitem{matcomp_first}
M.~Fazel, ``Matrix rank minimization with applications,'' \emph{PhD thesis,
  Stanford Univ}, 2002.

\bibitem{optspace}
R.~Keshavan, A.~Montanari, and S.~Oh, ``Matrix completion from a few entries,''
  \emph{IEEE Trans. Info. Th.}, vol.~56, no.~6, pp. 2980--2998, 2010.

\bibitem{mc_luo}
R.~Sun and Z.-Q. Luo, ``Guaranteed matrix completion via non-convex
  factorization,'' \emph{IEEE Trans. Info. Th.}, vol.~62, no.~11, pp.
  6535--6579, 2016.

\bibitem{rmc_gd}
Y.~Cherapanamjeri, K.~Gupta, and P.~Jain, ``Nearly-optimal robust matrix
  completion,'' \emph{ICML}, 2016.

\bibitem{pr_mc_reuse_meas}
C.~Ma, K.~Wang, Y.~Chi, and Y.~Chen, ``Implicit regularization in nonconvex
  statistical estimation: Gradient descent converges linearly for phase
  retrieval, matrix completion and blind deconvolution,'' in \emph{Intl. Conf.
  Machine Learning (ICML)}, 2018.

\bibitem{lowrank_altmin_no_kappa}
M.~Hardt and M.~Wootters, ``Fast matrix completion without the condition
  number,'' in \emph{Conf. on Learning Theory}, 2014.

\bibitem{lafferty_lrmc}
Q.~Zheng and J.~Lafferty, ``Convergence analysis for rectangular matrix
  completion using burer-monteiro factorization and gradient descent,''
  \emph{arXiv preprint arXiv:1605.07051}, 2016.

\bibitem{rpca_gd}
X.~Yi, D.~Park, Y.~Chen, and C.~Caramanis, ``Fast algorithms for robust pca via
  gradient descent,'' in \emph{Neur. Info. Proc. Sys. (NeurIPS)}, 2016.

\bibitem{hardt2014understanding}
M.~Hardt, ``Understanding alternating minimization for matrix completion,'' in
  \emph{2014 IEEE 55th Annual Symposium on Foundations of Computer
  Science}.\hskip 1em plus 0.5em minus 0.4em\relax IEEE, 2014, pp. 651--660.

\bibitem{fastmc}
P.~Jain and P.~Netrapalli, ``Fast exact matrix completion with finite
  samples,'' in \emph{Conf. on Learning Theory}, 2015, pp. 1007--1034.

\bibitem{dist_lrmc}
L.~W. Mackey, A.~Talwalkar, and M.~I. Jordan, ``Distributed matrix completion
  and robust factorization,'' \emph{J. Mach. Learn. Res.}, vol.~16, no.~1, pp.
  913--960, 2015.

\bibitem{dist_lrmc_2}
C.~Teflioudi, F.~Makari, and R.~Gemulla, ``Distributed matrix completion,'' in
  \emph{2012 ieee 12th international conference on data mining}.\hskip 1em plus
  0.5em minus 0.4em\relax IEEE, 2012, pp. 655--664.

\bibitem{qling_etal}
Q.~Ling, Y.~Xu, W.~Yin, and Z.~Wen, ``Decentralized low-rank matrix
  completion,'' in \emph{2012 IEEE International Conference on Acoustics,
  Speech and Signal Processing (ICASSP)}, 2012, pp. 2925--2928.

\bibitem{qling_etal2}
A.-Y. Lin and Q.~Ling, ``Decentralized and privacy-preserving low-rank matrix
  completion,'' \emph{Journal of the Operations Research Society of China},
  vol.~3, no.~2, pp. 189--205, 2015.

\bibitem{mardani2013decentralized}
M.~Mardani, G.~Mateos, and G.~Giannakis, ``Decentralized sparsity-regularized
  rank minimization: Algorithms and applications,'' \emph{IEEE Trans. Sig.
  Proc.}, 2013.

\bibitem{fed_lrmc_1}
Z.~Li, B.~Ding, C.~Zhang, N.~Li, and J.~Zhou, ``Federated matrix factorization
  with privacy guarantee,'' \emph{Proceedings of the VLDB Endowment}, vol.~15,
  no.~4, 2021.

\bibitem{fed_lrmc_2}
V.~W. Anelli, Y.~Deldjoo, T.~Di~Noia, A.~Ferrara, and F.~Narducci,
  ``User-controlled federated matrix factorization for recommender systems,''
  \emph{Journal of Intelligent Information Systems}, vol.~58, no.~2, pp.
  287--309, 2022.

\bibitem{fed_lrmc_3_byz}
X.~He, Q.~Ling, and T.~Chen, ``Byzantine-robust stochastic gradient descent for
  distributed low-rank matrix completion,'' in \emph{2019 IEEE Data Science
  Workshop (DSW)}.\hskip 1em plus 0.5em minus 0.4em\relax IEEE, 2019, pp.
  322--326.

\bibitem{lrpr_gdmin}
S.~Nayer and N.~Vaswani, ``Fast and sample-efficient federated low rank matrix
  recovery from column-wise linear and quadratic projections,'' \emph{IEEE
  Trans. Info. Th.}, Feb. 2023, arXiv:2102.10217 (Feb. 2021).

\bibitem{lrpr_gdmin_2}
N.~Vaswani, ``Efficient federated low rank matrix recovery via alternating gd
  and minimization: A simple proof,'' \emph{IEEE Trans. Info. Th.}, 2024,
  arXiv: 2306.17782.

\bibitem{lrpr_it}
S.~Nayer, P.~Narayanamurthy, and N.~Vaswani, ``Provable low rank phase
  retrieval,'' \emph{IEEE Trans. Info. Th.}, March 2020.

\bibitem{lrpr_best}
S.~Nayer and N.~Vaswani, ``Sample-efficient low rank phase retrieval,''
  \emph{IEEE Trans. Info. Th.}, Dec. 2021.

\bibitem{versh_book}
R.~Vershynin, \emph{High-dimensional probability: An introduction with
  applications in data science}.\hskip 1em plus 0.5em minus 0.4em\relax
  Cambridge University Press, 2018, vol.~47.

\bibitem{Chen_2021}
\BIBentryALTinterwordspacing
Y.~Chen, Y.~Chi, J.~Fan, and C.~Ma, ``Spectral methods for data science: A
  statistical perspective,'' \emph{Foundations and Trends® in Machine
  Learning}, vol.~14, no.~5, p. 566–806, 2021. [Online]. Available:
  \url{http://dx.doi.org/10.1561/2200000079}
\BIBentrySTDinterwordspacing

\bibitem{nashed1968decomposition}
M.~Nashed, ``A decomposition relative to convex sets,'' \emph{Proceedings of
  the American Mathematical Society}, vol.~19, no.~4, pp. 782--786, 1968.

\bibitem{zarantonello1971projections}
E.~H. Zarantonello, ``Projections on convex sets in hilbert space and spectral
  theory: Part i. projections on convex sets: Part ii. spectral theory,'' in
  \emph{Contributions to nonlinear functional analysis}.\hskip 1em plus 0.5em
  minus 0.4em\relax Elsevier, 1971, pp. 237--424.

\bibitem{twf}
Y.~Chen and E.~Candes, ``Solving random quadratic systems of equations is
  nearly as easy as solving linear systems,'' in \emph{Neur. Info. Proc. Sys.
  (NeurIPS)}, 2015, pp. 739--747.

\bibitem{abbasi2023fast}
A.~A. Abbasi, S.~Moothedath, and N.~Vaswani, ``Fast federated low rank matrix
  completion,'' in \emph{2023 59th Annual Allerton Conference on Communication,
  Control, and Computing (Allerton)}.\hskip 1em plus 0.5em minus 0.4em\relax
  IEEE, 2023, pp. 1--6.

\bibitem{spectral_init_review}
Y.~Chen, Y.~Chi, J.~Fan, and C.~Ma, ``Spectral methods for data science: A
  statistical perspective,'' \emph{arXiv preprint arXiv:2012.08496}, 2020.

\bibitem{trefethen}
L.~N. Trefethen and D.~Bau, \emph{Numerical linear algebra}.\hskip 1em plus
  0.5em minus 0.4em\relax Siam, 2022, vol. 181.

\end{thebibliography}

\end{document}